\def\eqref#1{equation~\ref{#1}}
\def\1{\bm{1}}
\def\va{{\bm{a}}}
\def\vb{{\bm{b}}}
\def\vv{{\bm{v}}}
\def\vx{{\bm{x}}}
\def\mA{{\bm{A}}}
\def\mB{{\bm{B}}}
\def\mC{{\bm{C}}}
\def\mI{{\bm{I}}}
\def\mL{{\bm{L}}}
\def\mM{{\bm{M}}}
\def\mV{{\bm{V}}}
\DeclareMathAlphabet{\mathsfit}{\encodingdefault}{\sfdefault}{m}{sl}
\SetMathAlphabet{\mathsfit}{bold}{\encodingdefault}{\sfdefault}{bx}{n}
\newcommand{\E}{\mathbb{E}}
\newcommand{\R}{\mathbb{R}}
\DeclareMathOperator*{\argmax}{arg\,max}
\newcommand{\RR}{\mathbb{R}}
\newcommand{\norm}[1]{\ensuremath{\left\| #1 \right\|}}
\newcommand{\abs}[1]{\left |#1\right|}
\def\argmax{\mathop{\rm argmax}}
\def\0{{\bm 0}}
\def\B{{\bm B}}
\def\C{{\bm C}}
\def\E{\mathbb{E}}
\def\I{{\bm I}}
\def\R{\mathbb{R}}
\def\V{{\bm V}}
\def\X{{\bm X}}
\def\Z{{\bm Z}}
\def\Lambda{\boldsymbol{\lambda}}
\def\Pcal{\mathcal{P}}
\def\Tcal{\mathcal{T}}
\newcommand{\opt}{\mathrm{OPT}}
\newcommand{\val}{\mathrm{val}}
\newcommand{\nz}{\text{nz}}
\newcommand{\Tmat}{{\cal T}_{\mathrm{mat}}}
\def\rbr#1{\left(#1\right)}
\newif\ifnotes\notestrue
\definecolor{mygrey}{gray}{0.50}
\newcommand{\notename}[2]{{\textcolor{purple}{\footnotesize{\bf (#1:} {#2}{\bf ) }}}}
\newcommand{\notename}[2]{{}}
\definecolor{theblue}{RGB}{0,0,180}
\newcolumntype{H}{>{\setbox0=\hbox\bgroup}c<{\egroup}@{}}
\newcolumntype{R}[1]{>{\RaggedLeft\arraybackslash}} 
\newcolumntype{L}[1]{>{\RaggedRight\arraybackslash}}
\newtheorem{theorem}{Theorem}
\newtheorem{definition}[theorem]{Definition}
\definecolor{midgreen}{rgb}{0.1,0.5,0.1}
\definecolor{darkgray}{gray}{0.25}
\definecolor{lightblue}{rgb}{0.25,0.25,1}
\newcommand\TT{\rule{0pt}{2.3ex}}
\title{Online MAP Inference and Learning for \\Nonsymmetric Determinantal Point Processes}
\author{Aravind Reddy\\
Northwestern University\\
\And
Ryan A. Rossi \\
Adobe Research\\
\And
Zhao Song\\
Adobe Research\\
\And
Anup Rao\\
Adobe Research\\
\And
Tung Mai\\
Adobe Research\\
\And
Nedim Lipka\\
Adobe Research\\
\And
Gang Wu\\
Adobe Research\\
\And
Eunyee Koh \\
Adobe Research\\
\And
Nesreen Ahmed\\
Intel Labs \\
}
\begin{document}

\maketitle

\begin{abstract}
In this paper, we introduce the online and streaming MAP inference and learning problems for Non-symmetric Determinantal Point Processes (NDPPs) where data points arrive in an arbitrary order and the algorithms are constrained to use a single-pass over the data as well as sub-linear memory. The online setting has an additional requirement of maintaining a valid solution at any point in time. For solving these new problems, we propose algorithms with theoretical guarantees, evaluate them on several real-world datasets, and show that they give comparable performance to state-of-the-art offline algorithms that store the entire data in memory and take multiple passes over it. 
\end{abstract}
\section{Introduction}\label{sec:introduction}
Determinantal Point Processes (DPPs) were first introduced in the context of quantum mechanics \citep{macchi1975coincidence} and have subsequently been extensively studied with applications in several areas of pure and applied mathematics like graph theory, combinatorics, random matrix theory \citep{hough2006determinantal,borodindeterminantal}, and randomized numerical linear algebra \citep{derezinski2021determinantal}. Discrete DPPs have gained widespread adoption in machine learning following the seminal work of \cite{kulesza2012determinantal} and there has been a recent explosion of interest in DPPs in the machine learning community. For instance, some of the very recent uses of DPPs include automation of deep neural network design \citep{nguyen2021optimal}, deep generative models \citep{chen2021padgan}, document and video summarization \citep{perez2021multi}, image processing \citep{launay2021determinantal}, and learning in games \citep{perez2021modelling}.

A DPP is a probability distribution over subsets of items and is characterized by some kernel matrix such that the probability of sampling any particular subset is proportional to the determinant of the submatrix corresponding to that subset in the kernel.
Until very recently, most prior work on DPPs focused on the setting where the kernel matrix is symmetric
Due to this constraint, DPPs can only model negative correlations between items. Recent work has shown that allowing the kernel matrix to be nonsymmetric can greatly increase the expressive power of DPPs and allows them to model \textit{compatible} sets of items~\citep{gartrell2019learning, brunel2018learning}. To differentiate this line of work from prior literature on symmetric DPPs, the term Nonsymmetric DPPs (NDPPs) has often been used. Modeling positive correlations can be useful in many practical scenarios. For instance, an E-commerce company trying to build a product recommendation system would want the system to increase the probability of suggesting a router if a customer adds a modem to a shopping cart.

State-of-the-art algorithms for learning and inference on NDPPs \citep{gartrell2021scalable} require storing the full data in memory and take multiple passes over the complete dataset. 
Therefore, these algorithms take too much memory to be useful for large scale data, where the size of the entire dataset can be much larger than the random-access memory available. 
These algorithms are also not practical in settings where data is generated on the fly, for example, in E-commerce applications where new items are added to the store over time, and more importantly, added to the carts of users instantaneously.

This work makes the following contributions:

\noindent\textbf{Streaming and Online Inference:} We formulate streaming and online versions of maximum a posteriori (MAP) inference on fixed-size NDPPs and provide algorithms for solving these problems. 
In the streaming setting, data points arrive in an arbitrary order and the algorithms are constrained to use a single-pass over the data as well as sub-linear memory (i.e. memory that is substantially smaller than the size of the data stream). The online setting we consider has an additional restriction that we need to maintain a valid solution at every time step. For both these settings, we provide algorithms which have comparable or even better solution quality than the offline greedy algorithm while taking only a single pass over the data and using a fraction of the memory used by the offline algorithm.

\noindent\textbf{Online Learning:}
We introduce the online learning problem for NDPPs and provide an algorithm which solves this problem using a single-pass over the data and memory that is constant in $m$, the number of baskets in the training data (or equivalently the length of the stream). In comparison, the offline learning algorithm takes a large number of passes over the entire data and uses memory linear in $m$. Strikingly, our online learning algorithm shows comparable performance (log-likelihood) to the state of the art offline learning algorithm, while converging significantly faster in all cases (Figure~\ref{fig:online-learning-vs-offline-loglik}). This is notable, since our algorithm uses only a single pass over the data, while using a tiny fraction of the memory.

\begin{table}[h]
\caption{Summary of our online learning and MAP inference algorithms for NDPPs. 
We omit $O$ for simplicity. 
All algorithms use only a single-pass over the data.
}
\vspace{1mm}
\centering
\setlength{\tabcolsep}{4pt}
\scalebox{0.86}{
\small
\begin{tabular}{@{}c l@{}rH@{}lll@{}HH@{}}
\toprule
\multicolumn{2}{l}{
{\bf NDPP Problem}
}%
&
& 
{\bf Passes} & 
{\bf Update Time} & {\bf Total Time} & {\bf Space} & 
\\ 
\midrule
\TT
\multirow{5}{*}{
\rotatebox{90}{
\textbf{Inference}
}
}
& \textbf{Streaming}
& \textsc{Stream-Partition} 
(Alg.~\ref{alg:streaming-partition-greedy}) & $1$ & ${\cal T}_{\det}(k,d)$ & ${\cal T}_{\det}(k,d) \!\cdot\! n$ & $k^2 \!+\! d^2$\\ 
\cmidrule{2-7}
& 
\multirow{4}{*}{\textbf{Online}}
& \textsc{Online-lss} (Alg.~\ref{alg:online-lss}) & $1$ & ${\cal T}_{\det}(k,d) \!\cdot\! k \log^2(\Delta)$ & ${\cal T}_{\det}(k,d) \!\cdot\! ( nk \!+\! k \log^2(\Delta) )$ & $k^2 \!+\! d^2 \!+\! d\log_{\alpha}(\Delta)$ \\  
\cmidrule{3-7}
& 
& \textsc{Online 2-Neigh} (Alg.~\ref{alg:online-2-neighbourhood-local-search}) & $1$ & ${\cal T}_{\det}(k,d) \!\cdot\!  k^2 \log^3(\Delta)$ & ${\cal T}_{\det}(k,d) \!\cdot\! \rbr{nk^2 \!+\! k^2 \log^3(\Delta)}$  & $k^2 \!+\! d^2 \!+\! d\log_{\alpha}(\Delta)$ \\ 
\cmidrule{3-7}
& 
& \textsc{Online-Greedy} (Alg.~\ref{alg:online-greedy-inference-NDPP-lowrank}) & $1$ & ${\cal T}_{\det}(k,d) \!\cdot\! k$ & ${\cal T}_{\det}(k,d) \!\cdot\! nk$ & $k^2 \!+\! d^2$\\ 
\midrule
\multicolumn{2}{l}{\textbf{Online Learning}}
& \textsc{Online-ndpp} (Alg.~\ref{alg:online-learning-NDPP-lowrank}) & 1 & $d^3 + {d^\prime}^3$ & $m{d^\prime}^3 + md^3$& ${d^{\prime}}^2 + d^2$ &\\ 
\bottomrule
\end{tabular}
}
\label{tab:runtimes}
\end{table}

\section{Related Work} \label{sec:related-work}
Even in the case of (symmetric) DPPs, the study of online and streaming settings is in a nascent stage. In particular,~\cite{bhaskara2020online} were the first to propose online algorithms for MAP inference of DPPs and \cite{liu2021diversity} were the first to give streaming algorithms for the maximum induced cardinality objective proposed by \cite{gillenwater2018maximizing}. However, no work has focused on either online or streaming MAP inference or online learning for Nonsymmetric DPPs.

A special subset of NDPPs called signed DPPs were the first class of NDPPs to be studied \citep{brunel2017rates}. \citet{gartrell2019learning} studied a more general class of NDPPs and provided learning and MAP Inference algorithms, and also showed that NDPPs have additional expressiveness over symmetric DPPs and can better model certain problems.
This was improved by \citet{gartrell2021scalable} in which they provided a new decomposition which enabled linear time learning and MAP Inference for NDPPs. More recently, \citet{anari2021sampling} proposed the first algorithm with a $k^{O(k)}$ approximation factor for MAP Inference on NDPPs where $k$ is the number of items to be selected.
These works are not amenable to the streaming nor online settings that are studied in our paper. In particular, they store all data in memory and use multiple passes over the data, among other issues.
In this work, we formally introduce the streaming and online MAP inference and online learning problems for NDPPs and develop online algorithms for solving these problems. 
To the best of our knowledge, our work is the first to study NDPPs in the streaming and online settings, and develop algorithms for solving MAP inference and learning of NDPPs in these settings.

\section{Preliminaries}\label{sec:preliminaries}

\paragraph{Notation.} Throughout the paper, we use uppercase bold letters ($\mA$) to denote matrices and lowercase bold letters ($\va$) to denote vectors. Letters in normal font ($a$) will be used for scalars. For any positive integer $n$, we use $[n]$ to denote the set $\{1,2,\dots,n\}$. A matrix $\mM$ is said to be skew-symmetric if $\mM = - \mM^\top$ where $\top$ is used to represent matrix transposition.

A DPP is a probability distribution on all subsets of $[n]$ characterized by a matrix $\mL \in \R^{n \times n}$. The probability of sampling any subset $S \subseteq [n]$ i.e. $\Pr[S] \propto \det(\mL_S)$ where $\mL_S$ is the submatrix of $\mL$ obtained by keeping only the rows and columns corresponding to indices in $S$. The normalization constant for this distribution can be computed efficiently since we know that $\sum_{S \subseteq [n]} \det(\mL_S) = \det(\mL + \mI_{n})$ \citep[Theorem 2.1]{kulesza2012determinantal}. Therefore, $\Pr[S] = \frac{\det(\mL_S)}{\det(\mL + \mI_n)}$. For the DPP corresponding to $\mL$ to be a valid probability distribution, we need $\det(\mL_S) \geq 0$ for all $S \subseteq [n]$ since $\Pr[S] \geq 0$ for all $S \subseteq [n]$. Matrices which satisfy this property are known as $P_{0}$-matrices \citep{fiedler1966some}. For any symmetric matrix $\mL$, $\det(\mL_S) \geq 0$ for all $S \subseteq [n]$ if and only if $\mL$ is positive semi-definite (PSD) i.e. $\vx^T \mL \vx \geq 0$ for all $\vx \in \R^n$. Therefore, all symmetric matrices which correspond to valid DPPs are PSD. But there are $P_0$-matrices which are not necessarily symmetric (or even positive semi-definite). For example, $ \mL=
\begin{bmatrix}
1 & 1 \\
-1 & 1
\end{bmatrix}
$ is a nonsymmetric $P_0$ matrix.

Any matrix $\mL$ can be uniquely written as the sum of a symmetric and skew-symmetric matrix: $\mL = (\mL + \mL^{\top})/2 + (\mL - \mL^{\top})/2$. For the DPP characterized by $\mL$, the symmetric part of the decomposition can be thought of as encoding negative correlations between items and the skew-symmetric part as encoding positive correlations. \citet{gartrell2019learning} proposed a decomposition which covers the set of all nonsymmetric PSD matrices (a subset of $P_0$ matrices)  which allowed them to provide a cubic time algorithm (in the ground set size) for NDPP learning. This decomposition is $\mL = \mV^{\top}\mV + (\mB\mC^{\top} - \mC\mB^{\top})$. \citet{gartrell2021scalable} provided more efficient (linear time) algorithms for learning and MAP inference using a new decomposition $\mL = \mV^{\top}\mV + \mB^{\top} \mC \mB$. Although both these decompositions only cover a subset of $P_0$ matrices, it turns out that they are quite useful for modeling real world instances and provide improved results when compared to (symmetric) DPPs.

For the decomposition $\mL = \mV^{\top} \mV + \mB^{\top} \mC \mB$, we have $\mV, \mB \in \R^{d \times n}, \C \in \R^{d \times d}$ and $\mC$ is skew-symmetric. Here we can think of the items having having a latent low-dimensional representation $(\vv_i, \vb_i)$ where $\vv_i, \vb_i \in \R^d$. Intuitively, a low-dimensional representation (when compared to $n$) is sufficient for representing items because any particular item only interacts with a small number of other items in real-world datasets, as evidenced by the fact that the maximum basket size encountered in real-world data is much smaller than $n$.\section{Streaming MAP Inference}\label{sec:streaming-inference}

In this section, we formulate the streaming MAP inference problem
for NDPPs and design an algorithm for this problem with guarantees on the solution quality, space, and time. 

\subsection{Streaming MAP Inference Problem} \label{sec:streaming-inference-prob}

We study the MAP Inference problem in low-rank NDPPs in the streaming setting where we see columns of a $2d \times n$ matrix in order (column-arrival model). Given some fixed skew-symmetric matrix $C \in \R^{d \times d}$, consider a stream of $2d$-dimensional vectors (which can be viewed as pairs of $d$-dimensional vectors) arriving in order:
\begin{align*}
(\vv_1, \vb_2), (\vv_2,\vb_2), \ldots, (\vv_n,\vb_n) \text{ where } \vv_t,\vb_t \in \R^d, \forall\ t \in [n]
\end{align*}

The main goal in the streaming setting is to output the maximum likelihood subset $S \subseteq [n]$ of cardinality $k$ at the end of the stream assuming that $S$ is drawn from the NDPP characterized by $\mL = \mV^{\top}\mV + \mB^{\top} \mC \mB$ i.e.
\begin{equation}\label{eq:objective_function}
S = \argmax_{S \subseteq [n], \abs{S} = k} \; \det(\mL_S) = \argmax_{S \subseteq [n], \abs{S} = k} \; \det( \mV_S^\top \mV_S  + \mB_S^\top \mC \mB_S )
\end{equation}
For any $S \subseteq [n], \mV_S \in \R^{d \times \abs{S}}$ is the matrix whose each column corresponds to $\{\vv_i, i \in S\}$. Similarly, $\mB_S \in \R^{d \times \abs{S}}$ is the matrix whose columns correspond to $\{\vb_i, i \in S\}$. In the case of symmetric DPPs, this maximization problem in the non-streaming setting corresponds to MAP Inference in cardinality constrained DPPs, also known as $k$-DPPs \citep{kulesza11kdpps}.

Usually, designing a streaming algorithm can be viewed as a dynamic data-structure design problem. We want to maintain a data-structure with efficient time and small space over the entire stream. Therefore, the secondary goals are minimize the following
\begin{itemize}
\item Space: We consider word-RAM model, and use number of words\footnote{In word-RAM, we usually assume each word is $O(\log n)$ bits.} to measure it. 
\item Update time: time to update our data-structure whenever we see a new arriving data point.
\item Total time: total time taken to process the stream.
\end{itemize}

\begin{definition}
Given three matrices $\mV \in \R^{d \times k}, \mB \in \R^{d \times k}$ and $\mC \in \R^{d \times d}$, let ${\cal T}_{\det}(k,d)$ denote the running time of computing
\begin{align*}
\det(\mV^\top \mV + \mB^\top \mC \mB).
\end{align*}
Note that ${\cal T}_{\det}(k,d) = 2\Tmat(d,k,d) + \Tmat(d,d,k) + \Tmat(k,k,k)$ where $\Tmat(a,b,c)$ is the time required to multiply two matrices of dimensions $a \times b$ and $b \times c$. We have the last $\Tmat(k,k,k)$ term because computing the determinant of a $k \times k$ matrix can be done (essentially) in the same time as computing the product of two matrices of dimension $k \times k$ \citep[Theorem 6.6]{Aho1974TheDA}.
\end{definition}

We will now describe a streaming algorithm for MAP inference in NDPPs, which we call the "Streaming Partition Greedy" algorithm.

\subsection{Streaming Partition Greedy}
\label{sec:streaming-partition-greedy}

\begin{algorithm}[t!]
\caption{Streaming Partition Greedy MAP Inference for low-rank NDPPs}
\label{alg:streaming-partition-greedy}
\begin{algorithmic}[1]
\State {\bf Input:} Length of the stream $n$ and a stream of data points $\{(\vv_1, \vb_1), (\vv_2, \vb_2),\ldots, (\vv_n,\vb_n)\}$
\State {\bf Output:} A solution set $S$ of cardinality $k$ at the end of the stream.
\State $S_0 \leftarrow \emptyset,\ s_0 \leftarrow \emptyset$
\While{new data $(\vv_t,\vb_t)$ arrives in stream at time $t$}
\State $i \leftarrow \lceil \frac{tk}{n} \rceil$
\If{$f(S_{i-1} \cup \{t\}) > f(S_{i-1} \cup \{s_i\})$}
\State $s_i \leftarrow t$
\EndIf
\If{$t$ is a multiple of $\frac{n}{k}$}
\State $S_i \leftarrow S_{i-1} \cup {j_{\max}}$
\State $s_i \leftarrow \emptyset$
\EndIf
\EndWhile
\State {\bf return} $S_k$
\end{algorithmic}
\end{algorithm}

\textbf{Outline of Algorithm \ref{alg:streaming-partition-greedy}}:
Our algorithm 
picks the first element of the solution greedily from the first seen $\frac{n}{k}$ elements, the second element from the next sequence of $\frac{n}{k}$ elements and so on. As described in Algorithm~\ref{alg:streaming-partition-greedy}, let us use $S_0, S_1,\dots,S_k$ to denote the solution sets maintained by the algorithm, where $S_i$ represents the solution set of size $i$. In particular, we have that $S_{i} = S_{i-1} \cup \{s_i\}$ where $s_i = \arg \max_{j \in \Pcal_i} f(S \cup \{j\})$ and $\Pcal_i$ denotes the $i$'th partition of the data i.e. $\Pcal_i \coloneqq \{\frac{(i-1)\cdot n}{k} + 1, \frac{(i-1)\cdot n}{k} +2, \dots, \frac{i \cdot n}{k}\}$.
\begin{align}
\underbrace{(\vv_1,\vb_1),(\vv_2,\vb_2),...,(\vv_{n/k},\vb_{n/k})}_{\mathcal{P}_1},\ldots,\underbrace{(\vv_{n-(n/k) + 1},\vb_{n-(n/k)+1}),...,(\vv_n,\vb_n)}_{\mathcal{P}_k}
\end{align}

\begin{restatable}{theorem}{partitiongreedyapprox}
For a random-order arrival stream, if $S$ is the solution output by Algorithm~\ref{alg:streaming-partition-greedy} at the end of the stream and $\sigma_{\min} > 1$ where $\sigma_{\min}$ and $\sigma_{\max}$ denote the smallest and largest singular values of $\mL_S$ among all $S \subseteq [n]$ and $\abs{S} \leq 2k$, then  \[ \E[\log \det(\mL_S)] \geq \left(1 - \frac{1}{\sigma_{\min} ^{(1 - \frac{1}{e}) \cdot (2 \log \sigma_{\max} - \log \sigma_{\min})}}\right) \log (\opt) \]
where $\mL_S = {\mV_S^{\top}}{\mV_S} + \mB_S^{\top}\mC\mB_S$ and $\opt = \underset{R \subseteq [n],\ \abs{R}=k}{\max} \det(\mV_R^{\top}\mV_R + \mB_R^\top \mC\mB_R)$.
\end{restatable}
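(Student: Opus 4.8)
The plan is to reduce the streaming partition-greedy analysis to a submodular-style greedy argument applied to the \emph{logarithm} of the determinant. The key observation is that $g(S) := \log\det(\mL_S)$ behaves, up to singular-value distortion, like a monotone submodular function: although $\det(\mL_S)$ is multiplicative rather than additive, taking logs converts the multiplicative structure into an additive-like one, and the marginal gain $g(S\cup\{j\}) - g(S)$ can be bounded above and below using the bounds $\sigma_{\min}\le \sigma(\mL_{S'}) \le \sigma_{\max}$ on the singular values of all small principal submatrices. Concretely, when we add an element $j$ to a set $S$ of size $i-1$, the Schur-complement formula gives $\det(\mL_{S\cup\{j\}})/\det(\mL_S)$ equal to a ``conditional'' quantity that lies between $\sigma_{\min}$-ish and $\sigma_{\max}$-ish quantities, so $\log\det$ has marginal gains sandwiched between $\log\sigma_{\min}$ and $2\log\sigma_{\max}$. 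This two-sided control on the curvature is what replaces exact submodularity.

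First I would set up the partition structure: the stream is split into $k$ blocks $\mathcal{P}_1,\dots,\mathcal{P}_k$ of size $n/k$, and block $i$ contributes element $s_i = \argmax_{j\in\mathcal{P}_i} f(S_{i-1}\cup\{j\})$. For a \emph{random-order} stream, a uniformly random item of the optimum set $\opt = \{o_1,\dots,o_k\}$ lands in block $i$ with probability that lets me argue: conditioned on the randomness, with the appropriate probability the best element in block $i$ is at least as good (in marginal $\log\det$ gain) as adding a random optimal element $o_\ell$ to the current partial solution $S_{i-1}$. Averaging the $\log\det$ marginal gain of adding a random element of $\opt\setminus S_{i-1}$, and using that $\sum_{\ell} [g(S_{i-1}\cup\{o_\ell\}) - g(S_{i-1})]$ telescopes/lower-bounds $g(\opt\cup S_{i-1}) - g(S_{i-1}) \ge \log\opt - g(S_{i-1})$ up to the singular-value distortion factor, I get the standard greedy recursion
\[
\E[g(S_i)] \;\ge\; \E[g(S_{i-1})] \;+\; \frac{1}{k}\cdot\frac{1}{\rho}\,\bigl(\log\opt - \E[g(S_{i-1})]\bigr),
\]
where $\rho$ is the multiplicative distortion between upper and lower bounds on marginal gains, which I expect to work out to something like $(2\log\sigma_{\max} - \log\sigma_{\min})/\log\sigma_{\min}$. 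Unrolling this recursion over $i = 1,\dots,k$ yields $\E[g(S_k)] \ge (1 - (1-\tfrac{1}{k\rho})^k)\log\opt \ge (1 - e^{-1/\rho})\log\opt$, and rewriting $e^{-1/\rho}$ as $\sigma_{\min}^{-(1-1/e)(2\log\sigma_{\max}-\log\sigma_{\min})/\log\sigma_{\min}\cdot\text{(stuff)}}$ — matching the exponent $\tfrac{1}{\sigma_{\min}^{(1-1/e)(2\log\sigma_{\max}-\log\sigma_{\min})}}$ in the statement after the random-order loss contributes the extra $(1-1/e)$ factor — gives the claimed bound.

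The main obstacle, and the step I would spend the most care on, is making the ``random-order $\Rightarrow$ block contains a good optimal element'' argument rigorous while simultaneously tracking the singular-value distortion. Two subtleties interact: (i) because marginal gains of $\log\det$ are only \emph{approximately} submodular (the Schur-complement ratios are not monotone in $S$ in general, only bounded), I must be careful that the lower bound $g(\opt\cup S_{i-1}) - g(S_{i-1}) \ge \log\opt - g(S_{i-1})$ holds even when $\opt$ and $S_{i-1}$ overlap and when $|\opt\cup S_{i-1}|$ can be as large as $2k$ — this is exactly why the hypothesis quantifies over $|S|\le 2k$; and (ii) the random-order guarantee is only in expectation, so I need $\log\det$ to be nonnegative (ensured by $\sigma_{\min}>1$, hence $\det(\mL_{S'})>1$ for all relevant $S'$) to avoid sign issues when taking expectations and when the factor $(1-1/e)$ is pulled out. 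I would handle (i) by a careful Schur-complement/interlacing computation bounding each conditional gain $\log\det(\mL_{T\cup\{j\}}) - \log\det(\mL_T)$ within $[\log\sigma_{\min},\,2\log\sigma_{\max}]$ for all $T$ with $|T| < 2k$, and (ii) by invoking $\sigma_{\min}>1$ up front so that $g$ is a nonnegative function on all sets of size $\le 2k$. Everything else — the telescoping, the recursion unrolling, the $(1-1/e)$ bookkeeping — is routine once these two points are nailed down.
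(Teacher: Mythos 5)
Your proposal follows essentially the same route as the paper: partition the random-order stream into $k$ blocks, show each block contains an element of the optimum not yet selected with probability at least $\frac{\lambda_i}{k}(1-\frac{1}{e})$, invoke the approximate submodularity of $\log\det$ via the submodularity ratio $\gamma = \bigl(2\frac{\log\sigma_{\max}}{\log\sigma_{\min}}-1\bigr)^{-1}$, and unroll the resulting greedy recursion to get $1 - e^{-\gamma(1-1/e)}$. The only difference is that you propose to re-derive the marginal-gain bounds via Schur complements, whereas the paper simply cites the submodularity-ratio bound from \citet{gartrell2021scalable} (Eq.~45); the core argument is identical.
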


We defer the proofs of the above and the following theorem to Appendix~\ref{sec:appendix-algorithms}.

\begin{restatable}{theorem}{partitiongreedytime}
\label{thm:partition-greedy-time}
For any length-$n$ stream $(\vv_1,\vb_1), \ldots, (\vv_n,\vb_n)$ where $(\vv_t,\vb_t) \in \R^{d} \times \R^d\ \forall\ t \in [n]$, the worst-case update time of Algorithm~\ref{alg:streaming-partition-greedy} is $ O({\cal T}_{\det}(k,d))$ where ${\cal T}_{\det}(k,d)$ is the time taken to compute $f(S) = \det(\mV_S^{\top} \mV + \mB_S^{\top} C \mB)$ for $\abs{S} = k$. The total time taken is $O(n \cdot {\cal T}_{\det}(k,d))$ and the space used at any time step is $O(k^2 + d^2)$.
\end{restatable}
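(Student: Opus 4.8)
The plan is to prove Theorem~\ref{thm:partition-greedy-time} by a direct accounting of the memory footprint and of the arithmetic performed by Algorithm~\ref{alg:streaming-partition-greedy} when a single data point arrives, and then summing over the $n$ arrivals. There is no deep idea needed: everything reduces to (i) observing that the algorithm never stores anything of size depending on $n$, only $d$-dimensional feature vectors and $O(k)$ of them, and (ii) charging each evaluation of $f$ at the rate ${\cal T}_{\det}(k,d)$.

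For the space bound I would enumerate exactly what the data structure holds between consecutive arrivals: the current partial solution $S_{i-1}$, stored not as indices into $[n]$ but as the pair of matrices $\mV_{S_{i-1}},\mB_{S_{i-1}} \in \R^{d\times(i-1)}$ consisting of the columns of the already-selected points — since $i-1\le k$ this is $O(dk)$ words; the fixed skew-symmetric matrix $\mC\in\R^{d\times d}$, which is $O(d^2)$ words; the current best candidate $s_i$ of the active partition together with its vectors $(\vv_{s_i},\vb_{s_i})$ and the scalar $f(S_{i-1}\cup\{s_i\})$, which is $O(d)$ words; the just-arrived point $(\vv_t,\vb_t)$, again $O(d)$ words; and a constant number of scalar counters ($n$, $k$, $t$, $i$, the partition threshold $n/k$), i.e. $O(1)$ words. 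Summing gives $O(dk + d^2)$, and since $dk \le (k^2+d^2)/2$ by AM--GM this is $O(k^2+d^2)$, as claimed.

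For the per-arrival update time I would walk through the loop body for one arrival $(\vv_t,\vb_t)$: computing the partition index $i=\lceil tk/n\rceil$ is $O(1)$; evaluating $f(S_{i-1}\cup\{t\})$ (and, if not cached, $f(S_{i-1}\cup\{s_i\})$) amounts to computing $\det(\mV_{S'}^{\top}\mV_{S'}+\mB_{S'}^{\top}\mC\mB_{S'})$ for a set $S'$ with $\abs{S'}=i\le k$, which by the definition of ${\cal T}_{\det}$ given above costs ${\cal T}_{\det}(i,d)$, and since each matrix-multiplication term $\Tmat(\cdot,\cdot,\cdot)$ appearing in ${\cal T}_{\det}$ is monotone in its dimensions we have ${\cal T}_{\det}(i,d)\le{\cal T}_{\det}(k,d)$; the comparison and the possible reassignment $s_i\leftarrow t$ copy one pair of $d$-vectors, $O(d)$; and on the $O(k)$ steps where $t$ is a multiple of $n/k$, appending the winning column to $\mV_{S_{i-1}},\mB_{S_{i-1}}$ and resetting $s_i$ also cost $O(d)$. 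Because $\Tmat(d,k,d)$ and $\Tmat(k,k,k)$ already force ${\cal T}_{\det}(k,d)=\Omega(d^2+dk+k^2)$, the $O(d)$ and $O(1)$ terms are absorbed, so the worst-case update time is $O({\cal T}_{\det}(k,d))$. Summing over all $n$ arrivals (the $O(k)$ finalization steps are absorbed) yields total time $O(n\cdot{\cal T}_{\det}(k,d))$.

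I do not expect a genuine obstacle here; the points that need care are only: (a) being explicit that the solution is kept as $d$-dimensional vectors, so that no factor of $n$ enters the space bound; (b) the elementary inequality $dk=O(k^2+d^2)$ that converts the natural $O(dk)$ cost of storing $\mV_S,\mB_S$ into the stated form; and (c) invoking monotonicity of ${\cal T}_{\det}(\cdot,d)$ so that evaluating $f$ on intermediate sets of size $i<k$ is charged at the $k$ rate. A minor bookkeeping remark is that when $k\nmid n$ one uses the partitions $\Pcal_i$ exactly as defined with the ceiling, which affects none of the asymptotics.
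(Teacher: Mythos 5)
Your proposal is correct and follows essentially the same route as the paper's proof: charge each evaluation of $f$ at ${\cal T}_{\det}(k,d)$, observe that all other comparison/update steps and the stored state (solution columns, $\mC$, the running candidate $s_i$) are dominated by the $O(k^2+d^2)$ cost of a single determinant computation, and sum over the $n$ arrivals. Your version is merely more explicit about the bookkeeping (the $dk=O(k^2+d^2)$ step and the monotonicity of ${\cal T}_{\det}$ in the set size), which the paper leaves implicit.
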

\section{Online MAP Inference for NDPPs}\label{sec:online-inference}

We now consider the online MAP inference problem for NDPPs, which is natural in settings where data is generated on the fly. In addition to the constraints of the streaming setting (Section ~\ref{sec:streaming-inference-prob}), our online setting requires us to maintain a valid solution at every time step. 
In this section, we provide two algorithms for solving this problem.

\subsection{Online Local Search with a Stash}
\label{sec:online-lss}

\begin{algorithm}[t!]
\caption{\textsc{Online-LSS}: 
Online MAP Inference for low-rank NDPPs with Stash. 
}
\label{alg:online-lss}
\begin{algorithmic}[1]
\State {\bf Input:} A stream of data points $\{(\vv_1, \vb_1), (\vv_2, \vb_2),\ldots, (\vv_n,\vb_n)\}$, and a constant $\alpha \geq 1$
\State {\bf Output:} A solution set $S$ of cardinality $k$ at the end of the stream.
\State $S,T \leftarrow \emptyset$
\While{new data point $(\vv_t,\vb_t)$ arrives in stream at time $t$}
\If{$\abs{S} < k$ and $f\rbr{S \cup \{ t\}} \neq 0$}
\State $S \leftarrow S \cup \{ t \}$
\Else
\State $i \leftarrow \arg\max_{j \in S} f(S \cup \{t\} \setminus \{j \} )$
\If{$ f\rbr{S \cup \{ t \} \setminus \{ i\}}> \alpha \cdot f(S)$}
\State $S \leftarrow S \cup \{ t\} \setminus \{ i\}$
\State $T \leftarrow T \cup \{ i \}$
\While{$\exists\  a \in S,\ b \in T: f\rbr{S \cup \{ b \} \setminus \{ a \}} > \alpha \cdot f(S)$}
\State $S \leftarrow S \cup \{ b \} \setminus \{ a \}$
\State $T \leftarrow T \cup \{ a \} \setminus \{ b \}$
\EndWhile
\EndIf
\EndIf
\EndWhile
\State {\bf return} $S$ 
\end{algorithmic}
\end{algorithm}

\textbf{Outline of Algorithm \ref{alg:online-lss}}: At each time step $t \in [n]$ (after $t \geq k$), our algorithm maintains a candidate solution subset of indices $S$ of cardinality $k$ from the data seen so far i.e. $S \subseteq [t] \text{ s.t. } \abs{S} = k$ in a streaming fashion. Additionally, it also maintains two matrices $\mV_S, \mB_S \in \R^{d \times \abs{S}}$ where the columns of $\mV_S$ are $\{\vv_i, i \in S\}$ and the columns of $\mB_S$ are $\{\vb_i, i \in S\}$. Whenever the algorithm sees a new data point $(\vv_t, \vb_t)$, it replaces an existing index from $S$ with the newly arrived index if doing so increases $f(S)$ at-least by a factor of $\alpha \geq 1$ where $\alpha$
is a parameter to be chosen (we can think of $\alpha$ being $2$ for understanding the algorithm). 
Instead of just deleting the index replaced from $S$, it is stored in an auxiliary set $T$ called the ``stash" (and also maintains corresponding matrices $\mV_T, \mB_T$), which the algorithm then uses to performs a local search over to find a locally optimal solution. 

We now define a data-dependent parameter $\Delta$ which we will need to describe the time and space used by Algorithm~\ref{alg:online-lss}.
\begin{definition}
Let the first non-zero value of $f(S)$ with $\abs{S} = k$ that can be achieved in the stream without any swaps be $\val_{\nz}$ i.e. till $S$ reaches a size $k$, any index seen is added to $S$ if $f(S)$ remains non-zero even after adding it. Let us define $\Delta := \frac{\opt_k}{\val_{\nz}}$ where $\opt_k= \max_{S \subseteq [n], \abs{S} = k} \det(\mV_S^{\top} \mV_S + \mB_S^{\top} C \mB_S)$. 
\end{definition}

\begin{restatable}{theorem}{lsstime}
\label{thm:LSS-time}
For any length-$n$ stream $(\vv_1,\vb_1), \ldots, (\vv_n,\vb_n)$ where $(\vv_t,\vb_t) \in \R^{d} \times \R^d\ \forall\ t \in [n]$, the 
worst case update time of Algorithm~\ref{alg:online-lss} is $ O({\cal T}_{\det}(k,d) \cdot k \log^2(\Delta))$ where ${\cal T}_{\det}(k,d)$ is the time taken to compute $f(S) = \det(\mV_S^{\top} \mV + \mB_S^{\top} C \mB)$ for $\abs{S} = k$. 
Furthermore, the amortized update time is $O( {\cal T}_{\det}(k,d) \cdot ( k + \frac{k \log^2(\Delta)}{n} ) )$ and the space used at any time step is at most $O(k^2 + d^2 + d\log_{\alpha}(\Delta))$.
\end{restatable}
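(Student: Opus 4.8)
The plan is to route both the time and the space bounds through a single potential argument on $f(S)$. The key structural fact is this: from the first moment $|S|$ reaches $k$ — at which point $f(S)=\val_{\nz}>0$ by the definition of $\val_{\nz}$ — every subsequent modification of $S$ made by the algorithm (the incoming swap in the outer branch, and each swap inside the inner \textbf{while} loop) replaces $S$ by a set whose $f$-value \emph{exceeds} $\alpha f(S)$, while $f(S)$ never exceeds $\opt_k$. Hence $f(S)$ is non-decreasing after this moment and is multiplied by a factor $>\alpha\ge 1$ at each swap, so the total number of swaps performed over the \emph{entire} stream is at most $\log_{\alpha}(\opt_k/\val_{\nz})=\log_{\alpha}\Delta$. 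I would prove this as the first lemma, noting along the way that $f(S)\ge\val_{\nz}>0$ throughout, so that every comparison of the form $f(\cdot)>\alpha f(S)$ is well defined. (If the stream ends before $|S|$ ever reaches $k$, no swap occurs and all bounds are trivial, so assume $\val_{\nz}$, and hence $\Delta$, is well defined.)

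For the space bound I would use that the inner \textbf{while} loop exchanges one element of $S$ for one element of $T$, leaving $|T|$ unchanged, whereas each incoming swap in the outer branch raises $|T|$ by exactly one. Thus $|T|$ is at most the number of incoming swaps, which is at most $\log_{\alpha}\Delta$ by the lemma. Storing $T$ together with the columns $\{(\vv_i,\vb_i)\}_{i\in T}$ costs $O(d\log_{\alpha}\Delta)$ words; storing $S$, $\mV_S$, $\mB_S$, and the $O(k\times k)$ and $O(d\times d)$ scratch matrices needed for one evaluation of $f$ (namely $\mV_S^\top\mV_S$, $\mC$, and their products) costs $O(k^2+d^2)$ words. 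Summing gives the claimed $O(k^2+d^2+d\log_{\alpha}\Delta)$.

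For the per-step time I would split an update into three pieces. (i) When $|S|<k$: one evaluation of $f$, i.e. $O({\cal T}_{\det}(k,d))$. (ii) When $|S|=k$: computing $i=\arg\max_{j\in S}f(S\cup\{t\}\setminus\{j\})$ and comparing against $\alpha f(S)$ takes $k$ evaluations of $f$, i.e. $O(k\cdot{\cal T}_{\det}(k,d))$. (iii) If the incoming swap is taken, the subsequent inner \textbf{while} loop runs for at most $\log_{\alpha}\Delta$ iterations in this step (a fortiori, by the global lemma), and each iteration scans at most $|S|\cdot|T|\le k\log_{\alpha}\Delta$ candidate pairs at cost $O({\cal T}_{\det}(k,d))$ each, so this piece costs $O(k\log_{\alpha}^2\Delta\cdot{\cal T}_{\det}(k,d))$ in the worst case. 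Adding (i)--(iii) and treating $\alpha$ as a constant bounded away from $1$ (so $\log_{\alpha}\Delta=\Theta(\log\Delta)$) gives worst-case update time $O({\cal T}_{\det}(k,d)\cdot k\log^2\Delta)$. For the amortized bound, piece (ii) contributes $O(k\cdot{\cal T}_{\det}(k,d))$ to \emph{every} step — the additive $k$ term — while the total work spent in piece (iii) across the \emph{whole} stream is at most (total loop iterations over the stream) $\times$ (cost per iteration) $\le\log_{\alpha}\Delta\cdot k\log_{\alpha}\Delta\cdot{\cal T}_{\det}(k,d)$; dividing by $n$ yields the $O(\tfrac{k\log^2\Delta}{n}\cdot{\cal T}_{\det}(k,d))$ term, and the two together give the stated amortized update time. (The total-time entry of Table~\ref{tab:runtimes} then follows by adding the $n$ copies of piece (ii) to the stream-wide bound on piece (iii).)

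The main obstacle is item (iii): one must resist bounding the inner loop per step in isolation and instead use the global potential inequality $f(S)\le\opt_k$ to control all swaps at once — the same inequality simultaneously caps $|T|$ (the scan width) and the number of loop iterations, and it is this double use that produces the $\log^2$ factor rather than something worse. A secondary point needing care is verifying that no swap can drive $f(S)$ to $0$; this holds because each swap forces $f(S)>\alpha f(S_{\mathrm{old}})\ge\alpha\val_{\nz}>0$, which is exactly what keeps the multiplicative potential argument — and hence termination of the inner loop — valid.
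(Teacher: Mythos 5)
Your proposal is correct and follows essentially the same route as the paper's proof: both bound the number of $f(S)$-increases (and hence $|T|$ and the inner-loop iterations) by $\log_{\alpha}(\Delta)$ via the potential argument $\val_{\nz}\le f(S)\le \opt_k$, and both count $O(k\cdot{\cal T}_{\det}(k,d))$ for the $\arg\max$ plus $O(|S|\cdot|T|\cdot{\cal T}_{\det}(k,d))$ per increase, with the same space accounting. Your write-up is if anything slightly more complete, since it spells out the amortized bound and the positivity of $f(S)$, which the paper leaves implicit.
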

We defer the proof for the above theorem and all following theorems in this section to Appendix~\ref{sec:appendix-online-inference}.

\subsection{Online 2-neighborhood Local Search Algorithm with a Stash} \label{sec:online-2-neighbor-stash}

Before we describe our algorithm, we will define a $\textit{neighborhood}$ of any solution, which will be useful for describing the local search part of our algorithm.

\begin{definition}[${\cal N}_r(S,T)$] For any natural number $r \geq 1$ and any sets $S,T$ we define the $r$-neighborhood of $S$ with respect to $T$ 
\[{\cal N}_r(S,T) \coloneqq \{S' \subseteq S \cup T \mid \abs{S'} = \abs{S} \text{ and } \abs{S' \setminus S} \leq r \} \]
\end{definition}

\textbf{Outline of Algorithm \ref{alg:online-2-neighbourhood-local-search}}: Similar to Algorithm~\ref{alg:online-lss}, our new algorithm also maintains two subsets of indices $S$ and $T$, and corresponding data matrices $\mV_S,\mB_S,\mV_T,\mB_T$. Whenever the algorithm sees a new data-point $(\vv_t,\vb_t)$, it checks if the solution quality $f(S)$ can be improved by a factor of $\alpha$ by replacing any element in $S$ with the newly seen data-point. Additionally, it also checks if the solution quality can be made better by including both the points $(\vv_t,\vb_t)$ and the data-point $(\vv_{t-1},\vb_{t-1})$. Further, the algorithm tries to improve the solution quality by performing a local search on ${\cal N}_2(S,T)$ i.e. the neighborhood of the candidate solution $S$ using the stash $T$ by replacing at most two elements of $S$. There might be interactions captured by \textit{pairs} of items which are much stronger than single items in NDPPs (see example $5$ from \cite{anari2021sampling}).

A full description of Algorithm~\ref{alg:online-2-neighbourhood-local-search} can be found in Appendix~\ref{sec:appendix-online-inference}.

\begin{restatable}{theorem}{twoneigh}
\label{thm:O2S-time}
For any length-$n$ stream $(\vv_1,\vb_1), \ldots, (\vv_n,\vb_n)$ where $(\vv_t,\vb_t) \in \R^{d} \times \R^d\ \forall\ t \in [n]$, the worst case update time of Algorithm~\ref{alg:online-2-neighbourhood-local-search} is $ O({\cal T}_{\det}(k,d) \cdot k^2 \log^3(\Delta))$ where ${\cal T}_{\det}(k,d)$ is the time taken to compute $f(S) = \det(\mV_S^{\top} \mV + \mB_S^{\top} C \mB)$ for $\abs{S} = k$. The amortized update time is $O\rbr{{\cal T}_{\det}(k,d) \cdot \rbr{k^2 + \frac{k^2 \log^3(\Delta)}{n}}}$ and the space used at any time step is at most $O(k^2 + d^2 + d\log_{\alpha}(\Delta))$.
\end{restatable}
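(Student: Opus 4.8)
The plan is to analyze the update time by separating the per-arrival work into two phases: the \emph{triggering} work done when a new point $(\vv_t,\vb_t)$ arrives (checking single swaps, checking the pair swap with $(\vv_{t-1},\vb_{t-1})$), and the \emph{local-search} work done in the inner \texttt{while} loop over ${\cal N}_2(S,T)$. For the triggering work, checking whether any single element of $S$ can be swapped out for $t$ requires $k$ evaluations of $f$, and checking the pair-insertion of $\{t-1,t\}$ against all pairs removed from $S$ requires $O(k^2)$ evaluations; each evaluation costs ${\cal T}_{\det}(k,d)$, so the triggering work is $O({\cal T}_{\det}(k,d)\cdot k^2)$. The dominant term comes from the local-search loop, and the main task is to bound the total number of successful swaps over the whole stream, then multiply by the per-iteration cost of scanning ${\cal N}_2(S,T)$, which is $O(k^2 \cdot |T|)$ evaluations of $f$.

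The key step is the potential/telescoping argument bounding the number of successful swaps, exactly as in the analysis of Algorithm~\ref{alg:online-lss} (Theorem~\ref{thm:LSS-time}) but adapted to $2$-swaps. Each successful swap multiplies $f(S)$ by at least $\alpha$, so $\log_\alpha f(S)$ increases by at least $1$ per swap; since $f(S)$ starts (once $|S|=k$) at $\val_{\nz}$ and never exceeds $\opt_k$, the total number of successful swaps across the entire stream is at most $\log_\alpha(\opt_k/\val_{\nz}) = \log_\alpha(\Delta)$. This also controls $|T|$: since each element lands in $T$ only after being swapped out of $S$, and the stash is pruned, $|T| = O(\log_\alpha \Delta)$, which gives the claimed space bound $O(k^2 + d^2 + d\log_\alpha \Delta)$ (the $d\log_\alpha\Delta$ accounting for storing the $2d$-dimensional vectors of stash elements, plus $k^2+d^2$ for the maintained Gram-type matrices and $\mC$). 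To get the worst-case \emph{single-step} update bound of $O({\cal T}_{\det}(k,d)\cdot k^2 \log^3(\Delta))$, I would observe that within one arrival the inner loop can fire at most $O(\log_\alpha\Delta)$ times (same potential argument localized to one step, with $\alpha\ge 1$ contributing a $\log\Delta$ factor per geometric level and the nesting of the search giving the extra factors), each firing costing $O({\cal T}_{\det}(k,d)\cdot k^2 |T|) = O({\cal T}_{\det}(k,d)\cdot k^2\log\Delta)$; combining the factors yields $k^2\log^3\Delta$. The amortized bound then follows by charging the $O({\cal T}_{\det}(k,d)\cdot k^2\log\Delta)$ cost of each of the $\le\log\Delta$ total swaps across the $n$ arrivals, plus the unavoidable $O({\cal T}_{\det}(k,d)\cdot k^2)$ triggering cost per arrival, giving $O({\cal T}_{\det}(k,d)\cdot(k^2 + \tfrac{k^2\log^3\Delta}{n}))$.

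The main obstacle I anticipate is pinning down the exponents in $\log^3(\Delta)$ precisely for the worst-case update time: one must argue carefully why, in a single arrival, the inner \texttt{while} loop cannot cycle more than a bounded number of times and why each level of the (implicitly nested) local search contributes only a $\log\Delta$ factor rather than a polynomial-in-$\Delta$ blowup. The subtlety is that a $2$-swap that increases $f(S)$ by a factor $\alpha$ can enable further $2$-swaps, so one needs the monotone potential $\log_\alpha f(S)$ to be the right certificate that the cascade terminates, together with the observation that $|T|$ itself only grows by $1$ per successful swap. I would also need to verify that scanning ${\cal N}_2(S,T)$ honestly costs $O(k^2|T|^2)$ or $O(k^2|T|)$ $f$-evaluations depending on whether both removed indices come from $S$ or one comes from $T$ — being loose here is what forces the extra factor of $\log\Delta$ over Algorithm~\ref{alg:online-lss}'s $\log^2\Delta$. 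Everything else (the determinant-update cost ${\cal T}_{\det}(k,d)$ per evaluation, the space accounting) is routine given the \texttt{Definition} of ${\cal T}_{\det}$ and $\Delta$ already established.
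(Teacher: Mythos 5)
Your proposal follows essentially the same argument as the paper's proof: $O(k^2)$ determinant evaluations for the triggering step, the potential $\log_\alpha f(S)$ bounding both the number of successful swaps and $|T|$ by $O(\log_{\alpha}(\Delta))$, and the same space accounting as for \textsc{Online-LSS}. The one point you flagged --- whether a scan of ${\cal N}_2(S,T)$ costs $k^2|T|$ or $k^2|T|^2$ evaluations of $f$ --- is resolved by the algorithm's inner loop, which searches over pairs $a,b\in S$ and $c,d\in T$, i.e.\ $|S|^2|T|^2 \le k^2\log_{\alpha}^2(\Delta)$ evaluations per successful swap; multiplied by the at most $\log_{\alpha}(\Delta)$ swaps within one arrival, this yields the $k^2\log^3(\Delta)$ factor exactly as in the paper (your tentative $k^2|T|$ accounting would only give $k^2\log^2(\Delta)$).
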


\section{Online Learning} \label{sec:online-learning}
In this section, we will first formally introduce the online learning problem for NDPPs and then develop an online algorithm for this new setting with theoretical guarantees.
To the best of our knowledge, this is the first work to study the online learning setting for NDPPs.

\subsection{Online Setting} \label{sec:online-learning-problem}

The online learning problem for NDPPs is formally defined as follows:

\begin{definition}[Online NDPP Learning]\label{def:online-learning-setting}
Given a continuous stream of observed sets $\{S_1, \ldots, S_t, \ldots, \}$ of items from $[n]$, the online learning problem is to maintain an NDPP kernel $\mL_t = (\mV^\top \mV + \mB^\top \mC \mB)_t$ for every time step $t$ that 
maximizes the regularized log-likelihood: 
{
\small
\begin{align}
\phi_t(\V, \B, \C) = \frac{1}{t} \sum_{i=1}^t \log\det\!\left(\V_{S_i}^\top \V_{S_i} + \B_{S_i}^\top \C \B_{S_i} \right) - 
\log\det\!\left(\V^\top \V \!+ \B^\top \C \B \!+ \I \right) - R(\V\!,\!\B)
\label{eq:nonsymm-full-log-likelihood}
\end{align}
\normalsize
}
where $\V_{S_i}, \B_{S_i} \in \mathbb{R}^{d \times |S_i|}$ denote sub-matrices of $\V$ and $\B$ which are formed by the columns that correspond to the items in $S_i$, and the regularizer 
$R(\V\!,\!\B) = \alpha \sum_{i=1}^{n} \frac{1}{\mu_i} \|\vv_i\|_2^2 + \beta \sum_{i=1}^{n} \frac{1}{\mu_i} \|\vb_i\|_2^2$ where $\mu_i$ is the number of occurrences of the item $i$ in all the training data (subsets), $\alpha, \beta > 0$ are tunable hyperparameters.

An online learning algorithm for NDPPs needs to:
\begin{compactitem}
\item Use space that is independent of the length of the stream seen so far.
\item Use only a single sequential pass over the stream of subsets $\{S_i \subseteq [n]\ |\ i = 1 \text{ to } t\}$.
\item Update the NDPP model upon arrival of the subset $S_t$ at time $t$, then discards it.

\item Have a fast update time that is sub-linear in the number of unique items $n$.

\end{compactitem}
\end{definition}

Our online setting is the natural setting to be studied for learning NDPPs in the real-world, as new subsets are generated continuously over time, 
for instance in an E-commerce application. In contrast, the state-of-the-art learning algorithm~\cite{gartrell2021scalable} 
has the following limitations that make it fundamentally offline:
First, it stores all training data in memory and so uses $O(t d^{\prime})$ space where $d^{\prime} = \max_{i} |S_i|$ is the size of the largest subset in the stream $\{S_1,\ldots,S_t\}$.
Second, it takes multiple passes over the data. 
Third, subsets are not processed as they arrive sequentially, instead 
that algorithm requires all data to be stored in memory, and updates a large 
group of points simultaneously over multiple rounds.
Thus, incurring a large processing time that is not amenable to the online learning setting.
Finally, it also incurs  an update time of $O(pnd^2 + pt{d^\prime}^3)$, which is too much for the online setting.

\begin{algorithm}[t!]
\caption{\;\textsc{Online-ndpp-learning}: Online learning for low-rank NDPPs}
\label{alg:online-learning-NDPP-lowrank}
\begin{algorithmic}[1]
\State {\bf Input:} 
Stream of sets $\{S_1,S_2,\ldots,S_t,\ldots\}$ arriving in an online fashion (in some arbitrary order), 
embedding size $d$ for NDPP model matrices
\State {\bf Output:} low-rank NDPP kernel $\mL^{(t)}$ at any time $t$ (and $\mV^{(t)}$, $\mB^{(t)}$, $\mC^{(t)}$ if warranted)
\State Initialize matrices $\mV$, $\mB$, $\mC$ 
\While{new set $S_t=\{a_1,a_2,...\}$ arrives in stream} 
\State Update $\mV_{S_t},\mB_{S_t},\mC$ using Eqs.~\ref{eq:V-update}, \ref{eq:B-update}, \ref{eq:C-update} respectively.
\EndWhile
\State {\bf return} low-rank NDPP kernel $\mL^{(t)}$ at time $t$ 
\end{algorithmic}
\end{algorithm}

\subsection{Online Algorithm} \label{sec:online-learning-NDPPs-algorithm}

We present our online learning approach for NDPPs in Algorithm~\ref{alg:online-learning-NDPP-lowrank}.
To update $(\V,\B,\C)$ at every time step $t$, we use the gradient of the following objective function $\psi_t$, which is an approximation of the regularized log-likelihood $\phi_t(\V, \B, \C)$. 
Let us define $Z (\V,\B,\C) \coloneqq \log\det\!\left(\V^\top \V\!+ \B^\top \C \B \!+ \I \right)$. 
Our approximate objective
\begin{align}
\psi_t(\V,\B,\C) &= \log\det\!\left(\V_{S_t}^\top \V_{S_t} + \B_{S_t}^\top \C \B_{S_t} \right) - Z (\V_{S_t},\B_{S_t},\C) - R(\V_{S_t},\B_{S_t})\label{eq:approx-log-likelihood}
\end{align}

For every time step $t$, we only update $\V_{S_t}, \B_{S_t}, \C$ using the gradient of $\psi_t$ i.e. $\nabla_{\V_{S_t},\B_{S_t},\C}\ \psi_t$.
\\Let us first look at the gradient of the first term
\begin{align*}
\nabla_{\V_{S_t}} \log\det\!\left(\V_{S_t}^\top \V_{S_t} + \B_{S_t}^\top \C \B_{S_t} \right) & = 2 \V_{S_t}\left(\V_{S_t}^\top \V_{S_t} + \B_{S_t}^\top \C \B_{S_t} \right)^{-1} 
\\
\nabla_{\B_{S_t}} \log\det\!\left(\V_{S_t}^\top \V_{S_t} + \B_{S_t}^\top \C \B_{S_t} \right) & = 2\C \B_{S_t} \left(\V_{S_t}^\top \V_{S_t} + \B_{S_t}^\top \C \B_{S_t}\right)^{-1} 
\\
\nabla_{\C} \log\det\!\left(\V_{S_t}^\top \V_{S_t} + \B_{S_t}^\top \C \B_{S_t} \right) & = \B_{S_t}\left(\V_{S_t}^\top \V_{S_t} + \B_{S_t}^\top \C \B_{S_t} \right)^{-1}\B_{S_t}^\top 
\end{align*}

Now, we consider the gradient of $Z (\V_{S_t},\B_{S_t},\C)$, which consists of three parts: $\nabla_{\V_{S_t}} Z, \nabla_{\B_{S_t}} Z, \nabla_{\C} Z$. Let $\X = \I - \V_{S_t}^\top(\I_d + \V_{S_t}\V_{S_t}^\top)^{-1} \V_{S_t}$. Then,
\begin{align}
\nabla_{\V_{S_t}} Z & = 2 \V_{S_t}^\top (\I_d + \V_{S_t} \V_{S_t}^\top)^{-1} \nonumber \\&\qquad 
- \X \B_{S_t}^\top ((\C^{-1} + \B_{S_t} \X \B_{S_t}^\top)^{-1} + (\C^{-\top} + \B_{S_t} \X \B_{S_t}^\top)^{-1}) \B_{S_t}\X \V_{S_t}^\top\label{eq:grad-norm-v}\\
\nabla_{\B_{S_t}} Z & = \X \B_{S_t}^\top \left((\C^{-1} + \B_{S_t} \X \B_{S_t}^\top)^{-1} + (\C^{-\top} + \B_{S_t} \X \B_{S_t}^\top)^{-1}\right) \label{eq:grad-norm-b} \\
\nabla_{\C} Z & = \C^{-\top} - \C^{-\top} (\C^{-1} + \B_{S_t} \X \B_{S_t}^\top)^{-\top} \C^{-\top} \label{eq:grad-norm-c}
\end{align}

Since we only have one subset at a time step, $R(\V_{S_t},\B_{S_t}) = \alpha \sum_{i \in S_t} \norm{\vv_i}^2 + \beta \sum_{i \in S_t} \norm{\vb_i}^2$. Therefore, $\nabla_{\V_{S_t}} R = 2\alpha\V_{S_t}, \nabla_{\B_{S_t}} R = 2\beta\B_{S_t}$.

We can then substitute $\nabla{\Z}$ from Equations \ref{eq:grad-norm-v},\ref{eq:grad-norm-b},\ref{eq:grad-norm-c} in the following equations to get the gradient required for updates $\nabla \psi_t$:
\begin{align}
\nabla_{\V_{S_t}} \psi_t 
& = 2 \V_{S_t}\left(\V_{S_t}^\top \V_{S_t} + \B_{S_t}^\top \C \B_{S_t} \right)^{-1} - \nabla_{\V_{S_t}}Z - 2\alpha\V_{S_t} \label{eq:V-update}\\
\nabla_{\B_{S_t}} \psi_t 
& = 2\C \B_{S_t} \left(\V_{S_t}^\top \V_{S_t} + \B_{S_t}^\top \C \B_{S_t}\right)^{-1} - \nabla_{\B_{S_t}}Z - 2\beta\B_{S_t}\label{eq:B-update} \\
\nabla_{\C} \psi_t 
& =  \B_{S_t}\left(\V_{S_t}^\top \V_{S_t} + \B_{S_t}^\top \C \B_{S_t} \right)^{-1}\B_{S_t}^\top - \nabla_{\C}Z\label{eq:C-update}
\end{align}

We defer the proof of the following theorem to Appendix~\ref{appendix-learning}.
\begin{restatable}{theorem}{learningbounds}
\label{thm:online-learning-space}
Algorithm~\ref{alg:online-learning-NDPP-lowrank} uses $O({d^{\prime}}^2 + d^2)$ space \footnote{Note that the $O(nd+d^2)$ required to store the matrices $\mV$, $\mB$, and $\mC$ are omitted for simplicity (since these terms are common to all algorithms).}. Also, the update time for a single subset $S_t$ arriving at time $t$ in the stream is $O(d^3 + {d^\prime}^3)$ where $d^{\prime}$ is the size of the largest set in the stream.
\end{restatable}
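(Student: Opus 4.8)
The plan is to observe that Algorithm~\ref{alg:online-learning-NDPP-lowrank} performs exactly one gradient step per arriving set, with no inner loop, so the space used and the time to process $S_t$ are precisely the resources needed to (i) form the column sub-matrices $\V_{S_t},\B_{S_t}\in\R^{d\times|S_t|}$, (ii) evaluate the gradient expressions $\nabla_{\V_{S_t}}\psi_t,\nabla_{\B_{S_t}}\psi_t,\nabla_{\C}\psi_t$ of Eqs.~\ref{eq:V-update}--\ref{eq:C-update} together with the auxiliary quantities $\nabla_{\V_{S_t}}Z,\nabla_{\B_{S_t}}Z,\nabla_{\C}Z$ of Eqs.~\ref{eq:grad-norm-v}--\ref{eq:grad-norm-c}, and (iii) add the scaled gradients back onto the relevant $|S_t|$ columns of $\V,\B$ and onto $\C$. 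Thus the whole proof reduces to walking down those formulas and tallying, for each elementary matrix operation, its dimensions and its flop count, and then collapsing the result with the elementary inequalities $d^a (d^{\prime})^b\le\max(d,d^{\prime})^3\le d^3+(d^{\prime})^3$ (valid for $a+b=3$) and $d\,d^{\prime}\le d^2+(d^{\prime})^2$.

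For the space bound I would enumerate every intermediate object. The ``wide'' matrices $\V_{S_t},\B_{S_t}$ and the output gradients $\nabla_{\V_{S_t}}\psi_t,\nabla_{\B_{S_t}}\psi_t$ are $d\times|S_t|$ (or its transpose) and hence hold $O(d\,d^{\prime})$ entries; the matrices $\V_{S_t}^\top\V_{S_t}+\B_{S_t}^\top\C\B_{S_t}$, its inverse, and $\X=\I-\V_{S_t}^\top(\I_d+\V_{S_t}\V_{S_t}^\top)^{-1}\V_{S_t}$ are $|S_t|\times|S_t|$, hence $O((d^{\prime})^2)$ entries; and $\C,\C^{-1},\C^{-\top}$, the $d\times d$ matrix $\I_d+\V_{S_t}\V_{S_t}^\top$ and its inverse, $\B_{S_t}\X\B_{S_t}^\top$, the two inverses $(\C^{-1}+\B_{S_t}\X\B_{S_t}^\top)^{-1}$ and $(\C^{-\top}+\B_{S_t}\X\B_{S_t}^\top)^{-1}$, and $\nabla_{\C}\psi_t$ are $d\times d$, hence $O(d^2)$ entries. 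Each of these is $O(d^2+(d^{\prime})^2)$; since the working memory is overwritten and $S_t$ is discarded after each step, the total working space is $O(d^2+(d^{\prime})^2)$ on top of the $O(nd+d^2)$ for $\V,\B,\C$ excluded by the footnote. (Should the skew-symmetric $\C$ be singular, one uses a regularized inverse or pseudoinverse of $\C$, which affects neither the space nor the asymptotic time.)

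For the time bound I would cost the same list. Column selection to build $\V_{S_t},\B_{S_t}$ and writing the updates back cost $O(d\,d^{\prime})$. Every matrix product that occurs -- $\V_{S_t}^\top\V_{S_t}$, $\B_{S_t}^\top\C$, $(\B_{S_t}^\top\C)\B_{S_t}$, $\V_{S_t}\V_{S_t}^\top$, the products forming $\X$, $\B_{S_t}\X$, $(\B_{S_t}\X)\B_{S_t}^\top$, and the remaining $O(1)$-length product chains in Eqs.~\ref{eq:grad-norm-v}--\ref{eq:grad-norm-c} -- is, under a fixed parenthesization that never multiplies three large dimensions at once, a call $\Tmat(a,b,c)$ with $a,b,c\in\{d,d^{\prime}\}$, i.e.\ $O(d^2 d^{\prime}+(d^{\prime})^2 d)$ naively. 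The inversions are one $|S_t|\times|S_t|$ inverse, costing $O((d^{\prime})^3)$, and a constant number of $d\times d$ inverses ($\I_d+\V_{S_t}\V_{S_t}^\top$, $\C$, and the two $(\C^{-1}+\B_{S_t}\X\B_{S_t}^\top)$-type matrices), costing $O(d^3)$. Summing gives $O\bigl(d^3+(d^{\prime})^3+d^2 d^{\prime}+(d^{\prime})^2 d\bigr)$, and since $d^2 d^{\prime}\le d^3+(d^{\prime})^3$ and $(d^{\prime})^2 d\le d^3+(d^{\prime})^3$ this is $O(d^3+(d^{\prime})^3)$ per arriving set, as claimed; with fast matrix multiplication each cubic term becomes $O(d^{\omega}+(d^{\prime})^{\omega})$ via the same exponent inequalities.

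The whole argument is bookkeeping, so the only real obstacle is completeness plus a careful choice of evaluation order: one must check that every factor appearing in the lengthy expressions of Eqs.~\ref{eq:grad-norm-v}--\ref{eq:grad-norm-c} and Eqs.~\ref{eq:V-update}--\ref{eq:C-update} has dimension in $\{d,d^{\prime}\}$ along each axis, and that the products are parenthesized so that no $\Tmat$ call has all three arguments equal to $\max(d,d^{\prime})$ when a cheaper order exists -- this is exactly what pins the bound at $O(d^3+(d^{\prime})^3)$. The only slightly non-routine point is the possible singularity of $\C$, disposed of by the regularization/pseudoinverse remark above.
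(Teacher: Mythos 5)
Your proposal is correct and follows essentially the same route as the paper's own (much terser) proof: both arguments are a dimension-counting exercise observing that every intermediate matrix in Eqs.~\ref{eq:grad-norm-v}--\ref{eq:grad-norm-c} and \ref{eq:V-update}--\ref{eq:C-update} has shape $d\times d$, $d^\prime\times d^\prime$, or $d\times d^\prime$, so each multiplication or inversion fits in $O(d^2+{d^\prime}^2)$ space and $O(d^3+{d^\prime}^3)$ time. Your version is in fact more careful than the paper's, which does not spell out the parenthesization of the product chains or the treatment of a possibly singular $\C$.
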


For every new subset $S_t$ arriving in the stream, 
Algorithm~\ref{alg:online-learning-NDPP-lowrank} updates only the columns of $\mV$ and $\mB$ corresponding to the elements in $S_t$, hence, $\V_{S_t}, \B_{S_t}$.
It uses total space $O(nd + {d^{\prime}}^2)$, which is independent of the stream length (and is therefore sub-linear in it as well); takes a single pass over the stream; updates the NDPP model only using the subset $S_t$ and then discards $S_t$; has an $O(d^3 + {d^\prime}^3)$ update time.
This is in contrast to the offline learning algorithm that uses $O(m d^{\prime})$ space where $m$ is the length of the stream and $d^{\prime}$ is the size of the largest set ($m \gg d,d^\prime,n$).

\section{Experiments} \label{sec:experiments}
Our experiments are designed to evaluate the effectiveness of the proposed online inference and learning algorithms for NDPPs. Experiments are performed using a standard desktop computer (Quad-Core Intel Core i7, 16 GB RAM) using many real-world datasets. Details of the datasets can be found in Appendix~\ref{appendix:datasets}.

\subsection{Online Inference}
As a point of comparison, we also use the state-of-the-art offline greedy algorithm from~\cite{gartrell2021scalable}. This algorithm stores all data in memory and makes $k$ passes over the dataset and in each round, picks the data point which gives the maximum marginal gain in solution value. 
Online-Greedy replaces a point in the current solution set with the observed point if doing so increases the objective. See Algorithm~\ref{alg:online-greedy-inference-NDPP-lowrank} in Appendix~\ref{sec:appendix-online-inference} for more details.

We first learn all the matrices $\mB,\mC, \text{ and } \mV$ by applying the offline learning algorithm.
The learning algorithm takes as input a parameter $d$, which is the embedding size for $\mV$, $\mB$, $\mC$. 
We use $d=10$ for all datasets other than Instacart, Customer Dashboards, Company Retail where $d=50$ is used and Million Song, where $d=100$ is used.
We perform MAP inference by running our algorithms on the learned $\mB, \mC, \text{ and } \mV$. 
For all the following results, we set $\epsilon=0.1$ and $k=8$.

\begin{figure}[b!]
\vspace{-4mm}
\centering

\includegraphics[width=.33\textwidth]{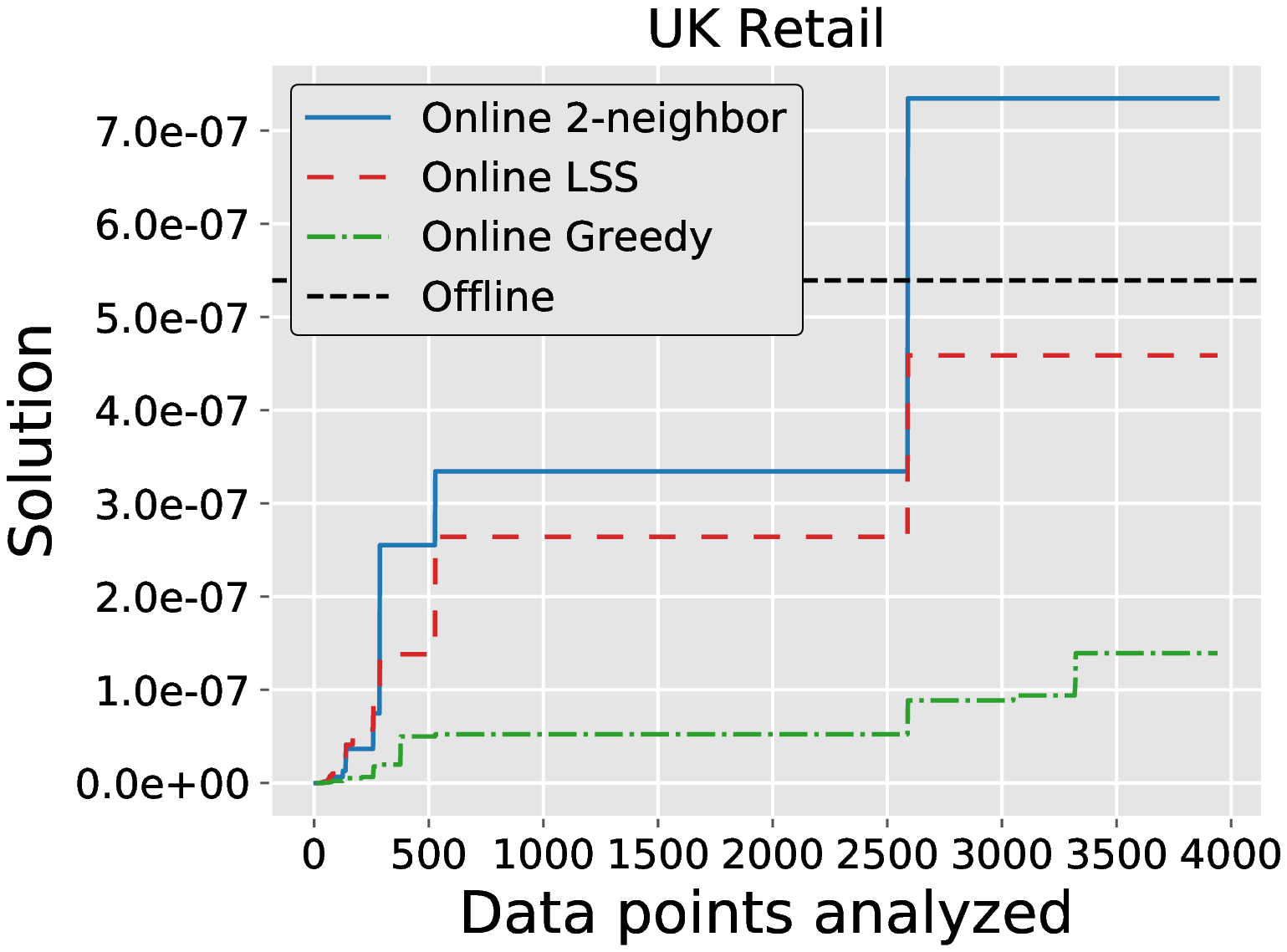}\hfill
\includegraphics[width=.33\textwidth]{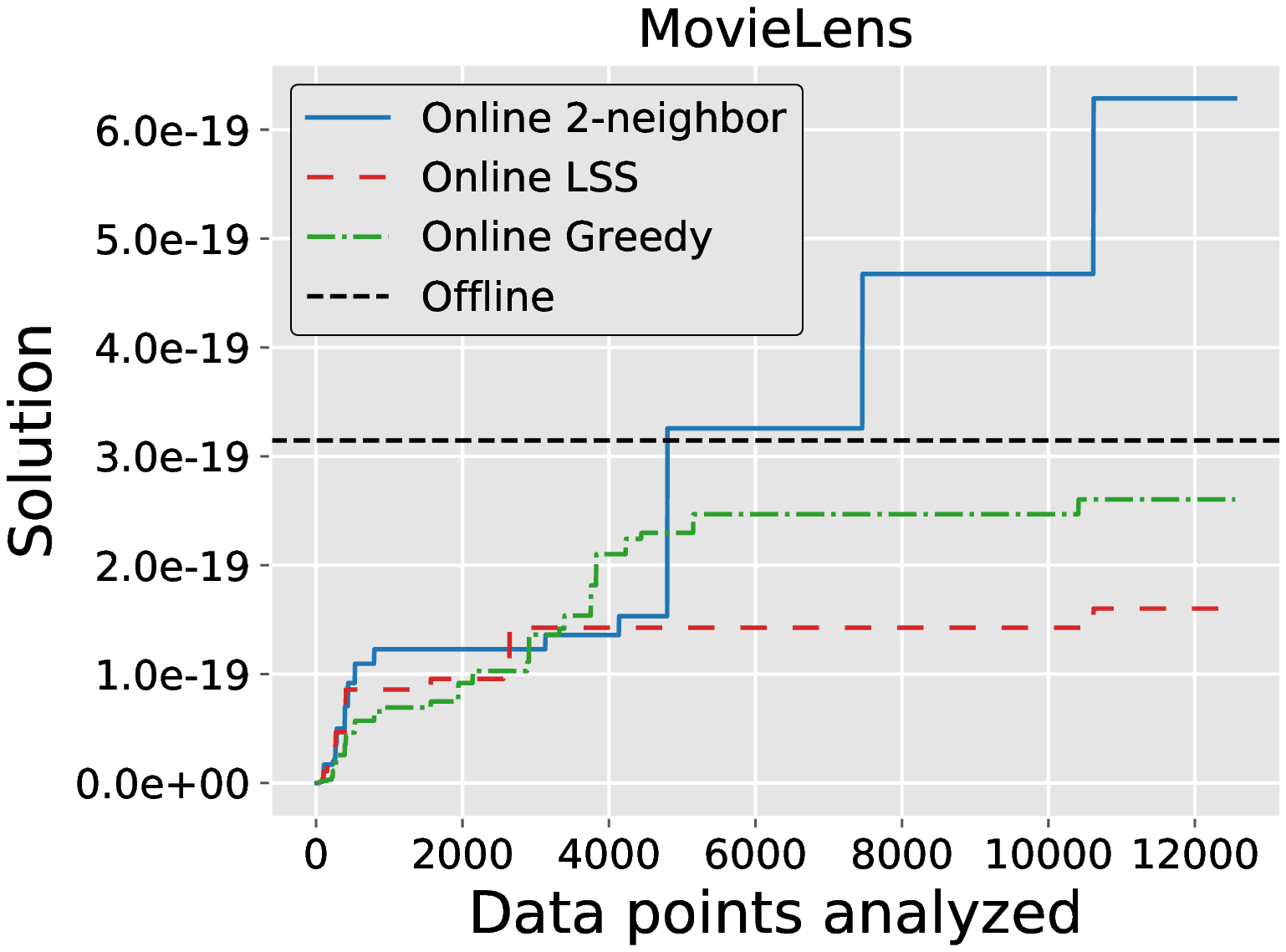}\hfill
\includegraphics[width=.33\textwidth]{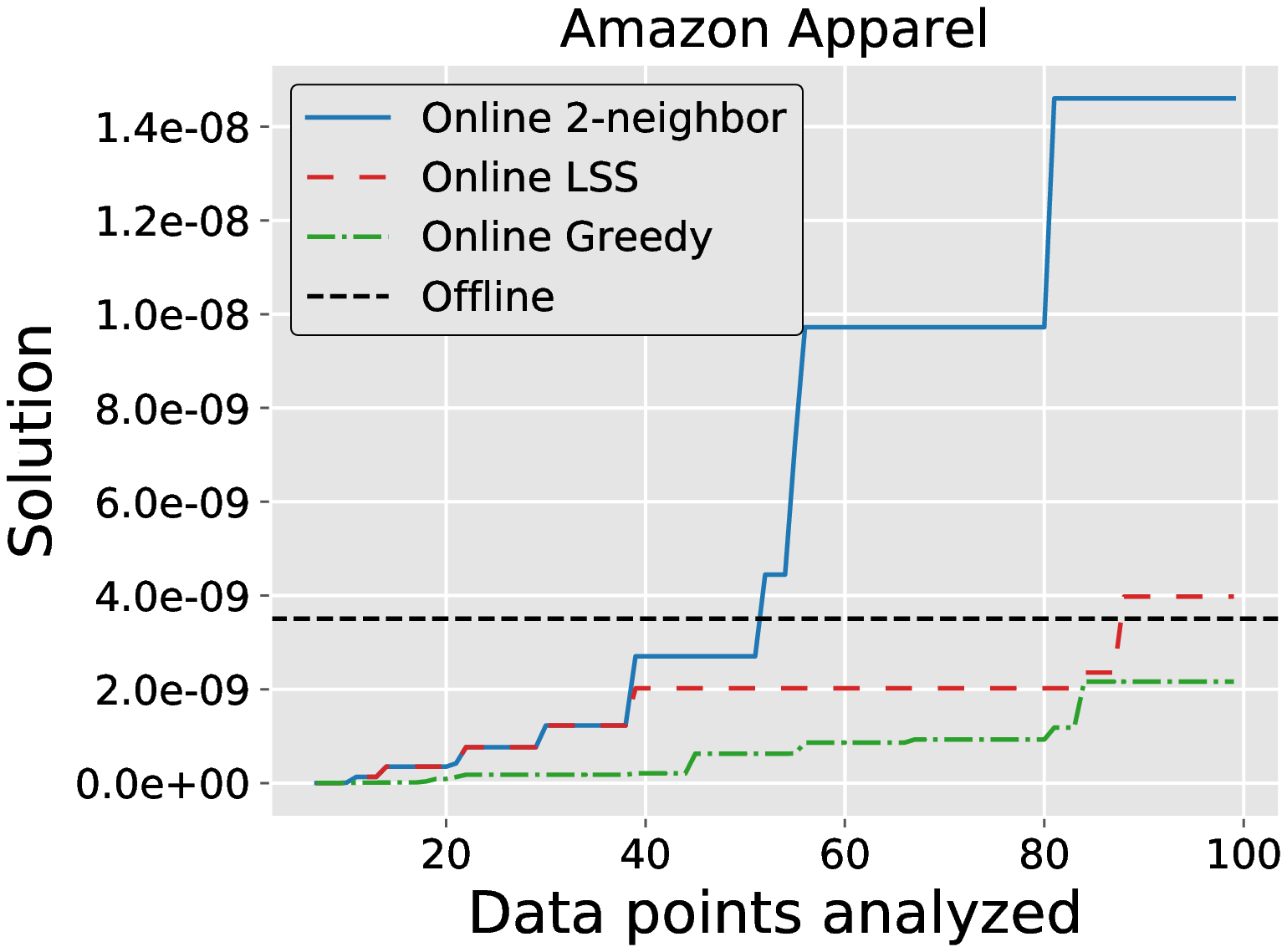}\hfill
\includegraphics[width=.33\textwidth]{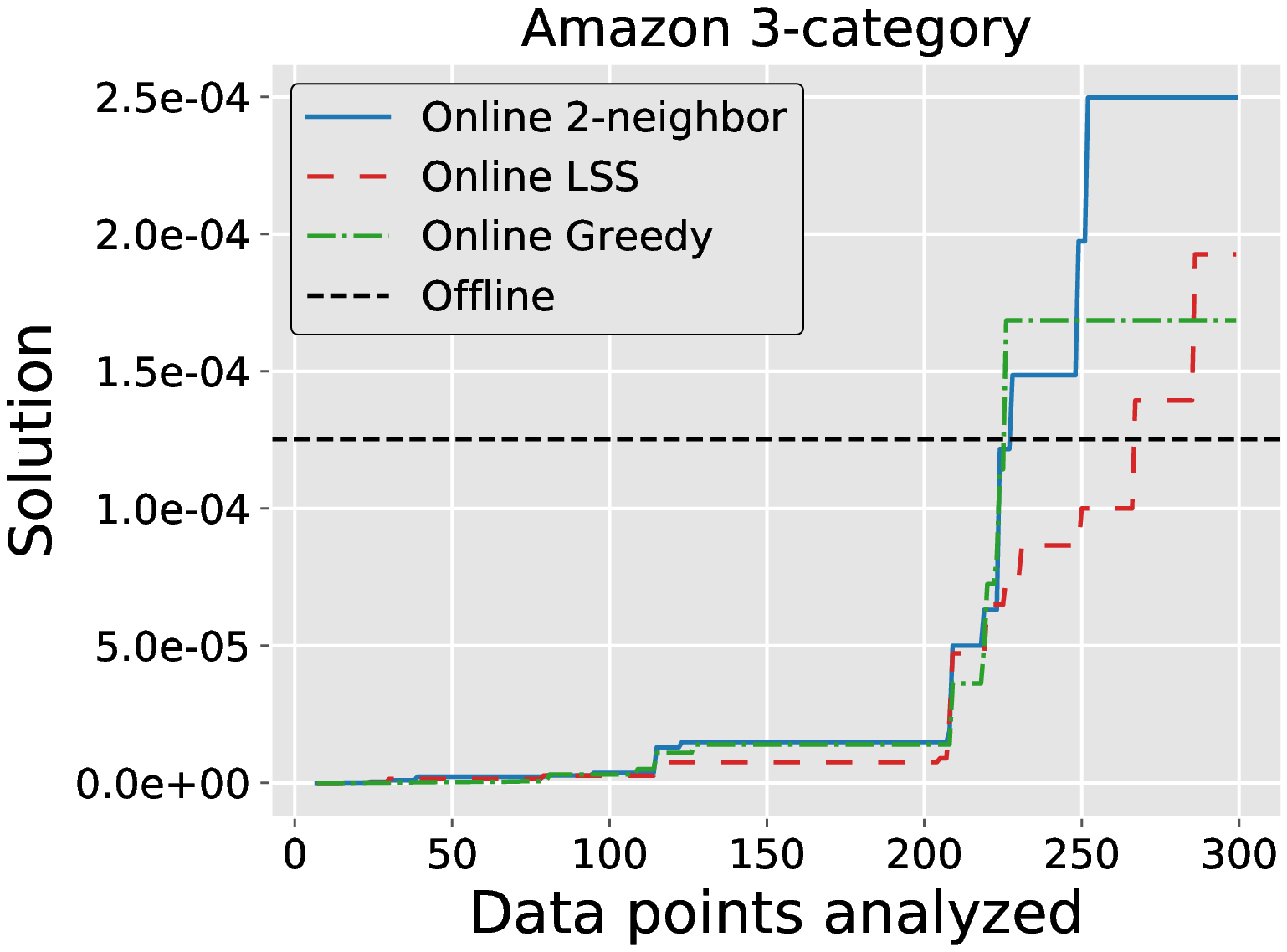}\hfill
\includegraphics[width=.33\textwidth]{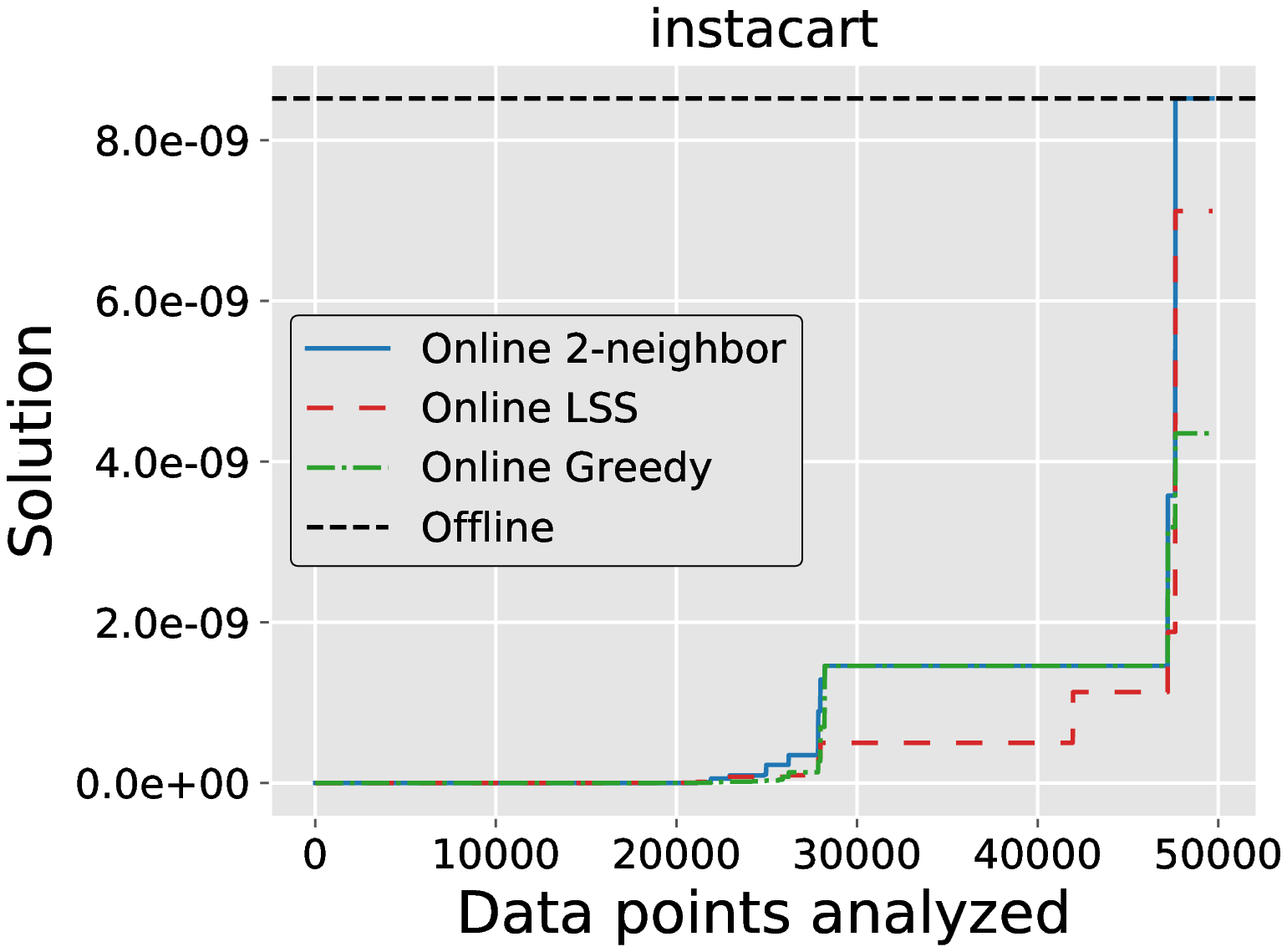}\hfill
\includegraphics[width=.33\textwidth]{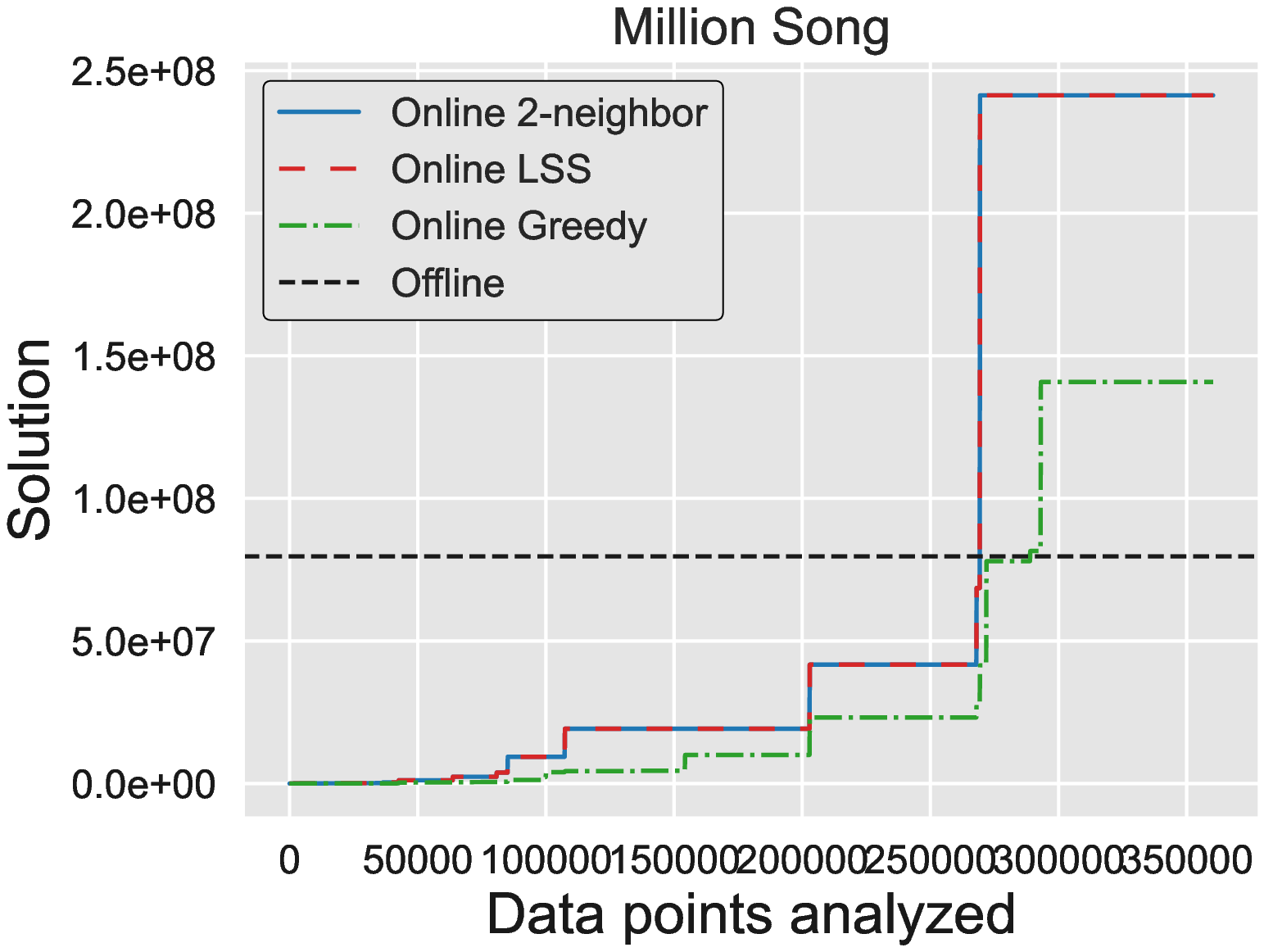}\hfill
\includegraphics[width=.33\textwidth]{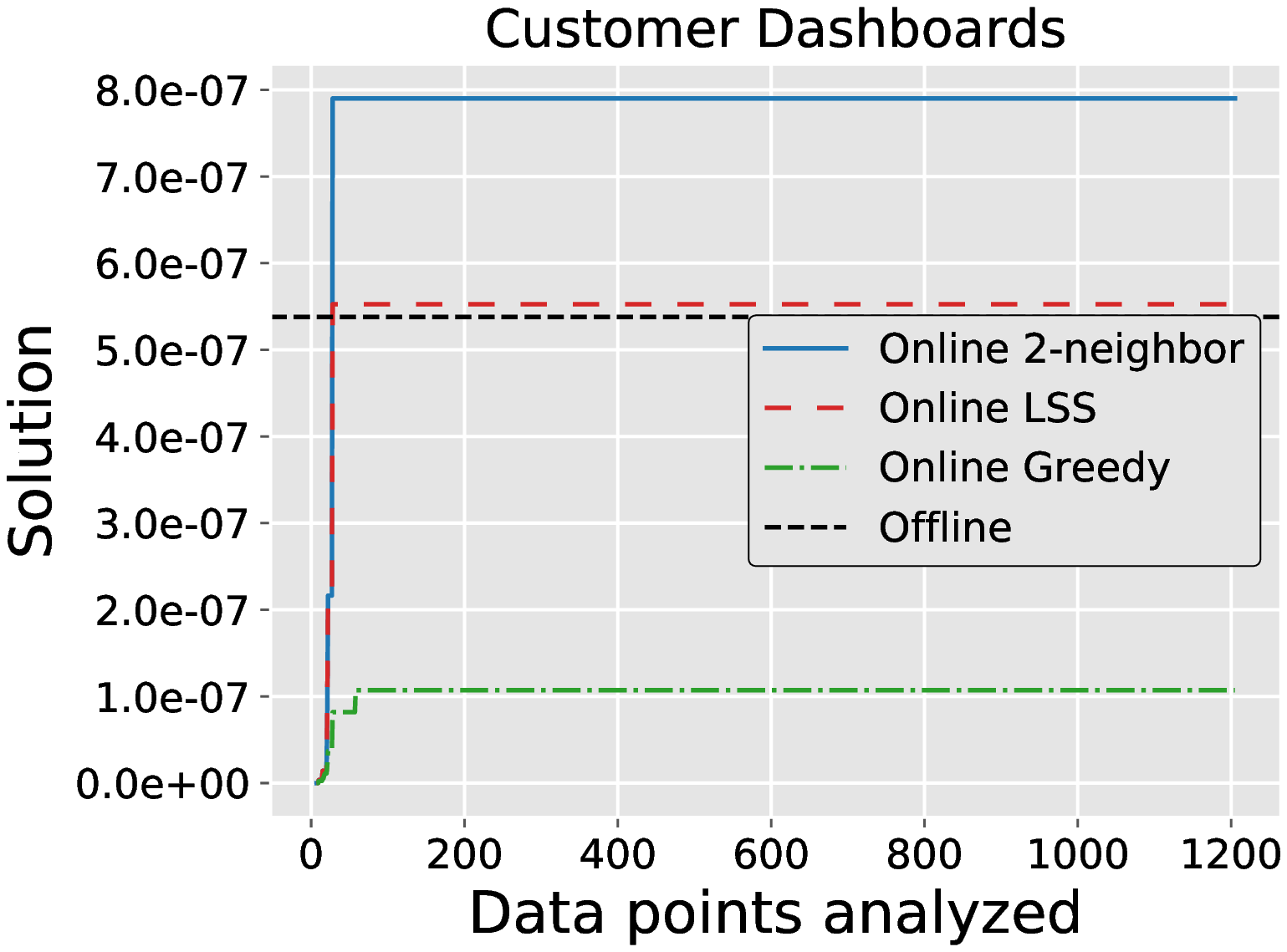}\hfill
\includegraphics[width=.33\textwidth]{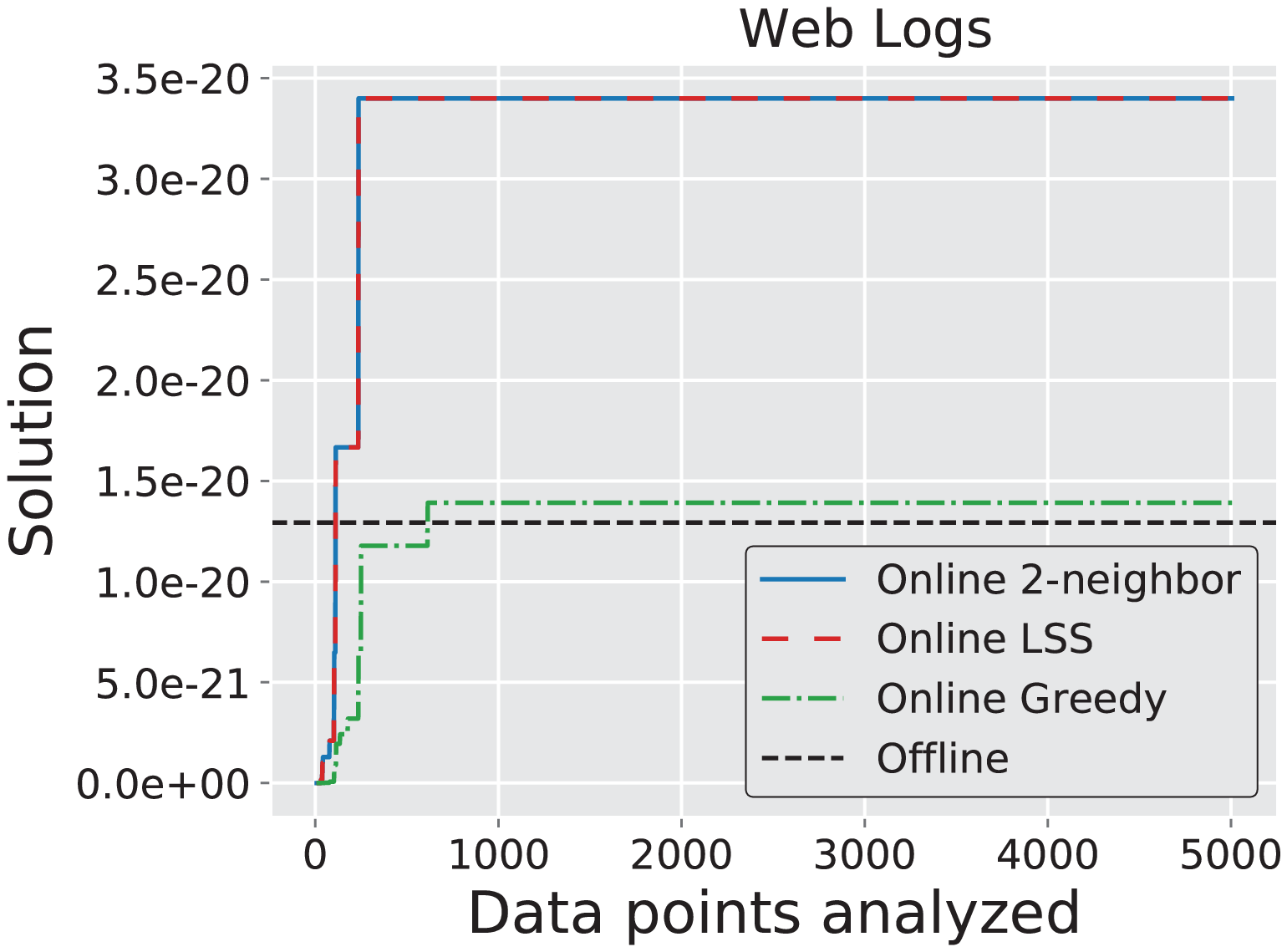}\hfill
\includegraphics[width=.33\textwidth]{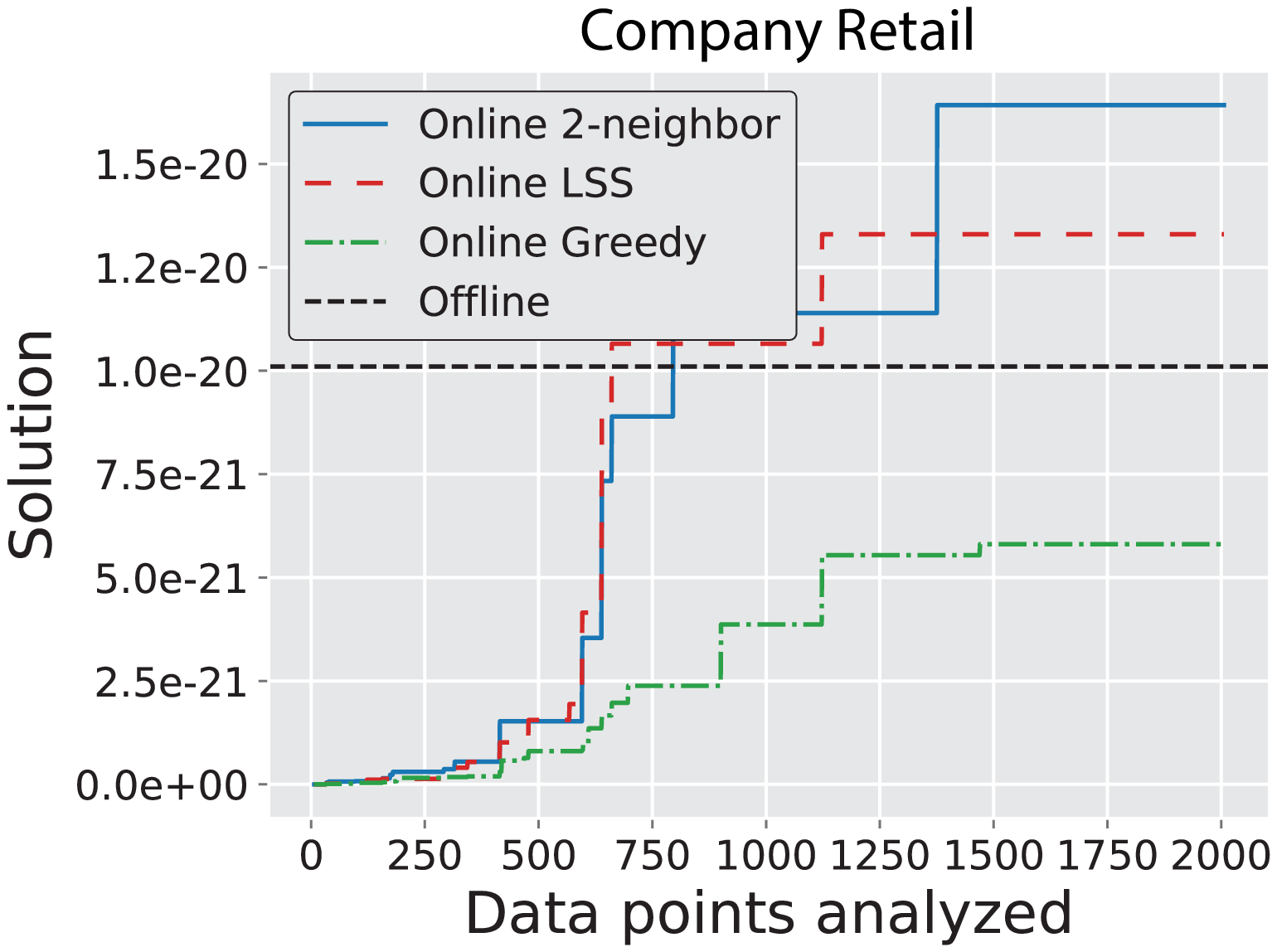}\hfill
\vspace{-3mm}
\caption{Solution quality i.e. objective function value as a function of the number of data points analyzed for all our online algorithms and also the offline greedy algorithm. All our online algorithms give comparable (or even better) performance to offline greedy using only a single pass and a small fraction of the memory.}
\label{fig:online-inference}
\vspace{-4mm}
\end{figure}

MAP Inference results for a variety of different datasets are provided in Figure~\ref{fig:online-inference}. Surprisingly, the solution quality of our online algorithms compare favorably with the offline greedy algorithm while using only a single-pass over the data, and a tiny fraction of memory. In most cases, Online-2-neighbor (Algorithm~\ref{alg:online-2-neighbourhood-local-search}) performs better than Online-LSS (Algorithm~\ref{alg:online-lss}) which in turn performs better than the online greedy algorithm (Algorithm~\ref{alg:online-greedy-inference-NDPP-lowrank}).
Strikingly, our online-2-neighbor algorithm performs even better than offline greedy in many cases.
Furthermore, in some cases the other online algorithms also perform better than the offline greedy algorithm that takes multiple passes over the data and stores it all in memory. 

We also perform several experiments comparing the number of determinant computations (as a system-independent proxy for time) and the number of swaps (as a measure of solution consistency) of all our online algorithms. Results for determinant computations (Figure~\ref{fig:det-comps}) and swaps (Figure~\ref{fig:swaps}) can be found in Appendix~\ref{appendix:exp}. We summarize the main findings here. The number of determinant computations of Online-LSS is comparable to that of Online Greedy but the number of swaps performed is significantly smaller. Online-2-neighbor is the most time-consuming but superior performing algorithm in terms of solution quality. On most datasets, the number of swaps by Online-2-neighbor is very similar to Online-LSS. Overall, these results demonstrate the effectiveness of our online inference algorithms for NDPPs.

We also investigate the performance of our algorithms under the random stream paradigm, where we consider a random permutation of some of the datasets used earlier. Results for the solution quality (Figure~\ref{fig:random-streams}), number of determinant computations and swaps (Figure~\ref{fig:random-streams-det-swaps}) can be found in Appendix~\ref{appendix:random-streams}. In this setting, we see that Online-LSS and Online-2-neighbor have nearly identical performance and are always better than Online-Greedy in terms of solution quality and number of swaps.

We study the effect of varying $\epsilon$ in Online-LSS (Algorithm~\ref{alg:online-lss}) for various values of set sizes $k$ in Appendices ~\ref{appendix:ablation-epsilon-lss} and ~\ref{appendix:ablation-k-epsilon}. We notice that, in general, the solution quality, number of determinant computations, and the number of swaps increase as $\epsilon$ decreases (Figure~\ref{fig:online-inference-NDPP-ablation-study}). We also see that as $k$ increases, the solution value decreases across all values of $\epsilon$ (Figure~\ref{fig:random-streams-varying-set-size-k-and-epsilon}). This is in accordance with our intuition that the probability of larger sets should be smaller. In general, the number of determinant computations and swaps increases for increasing $k$ across different values of $\epsilon$.

\subsection{Online Learning Results}
We now investigate our online learning algorithm for NDPPs.
While the offline learning algorithm for NDPPs uses multiple passes over the data and must store all data in memory, our online NDPP learning algorithm uses a single pass over the data, and each basket (set of items) is discarded after processed.
Hence, the online learning algorithm stores only the NDPP model and the matrices learned in an online fashion, $\mV$, $\mB$, $\mC$.
In Figure~\ref{fig:online-learning-vs-offline-loglik}, we compare our online learning algorithm to the offline algorithm that stores the entire dataset in memory and uses multiple passes over it.
For comparison, we show the negative log-likelihood over time. 
In all cases, the offline learning algorithm uses significantly more time, usually taking between 6x to 15x more time to converge.
Furthermore, the negative log-likelihood from the online learning algorithm decreases significantly faster in shorter time compared to the offline algorithm that takes multiple passes over the data.
Strikingly, our online learning algorithm shows comparable performance, while converging significantly faster in all cases (Fig.~\ref{fig:online-learning-vs-offline-loglik}).
This is notable, since our algorithm uses only a single pass over the data, while using a small fraction of the space consumed by the offline algorithm.

\begin{figure}[h!]
\vspace{-6mm}
\centering
\subfigure{
\includegraphics[width=.35\linewidth]{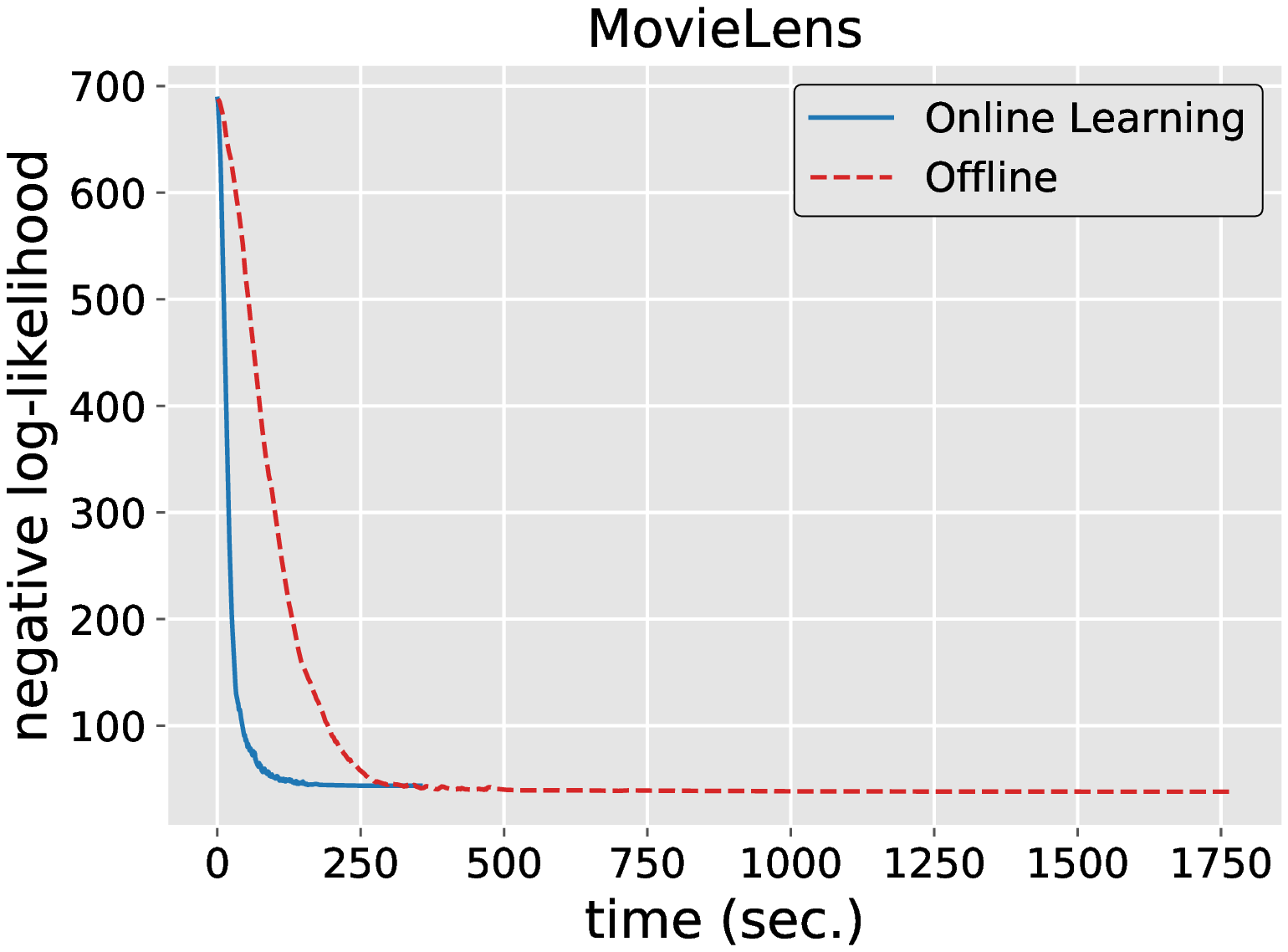}
}
\subfigure{
\includegraphics[width=.35\linewidth]{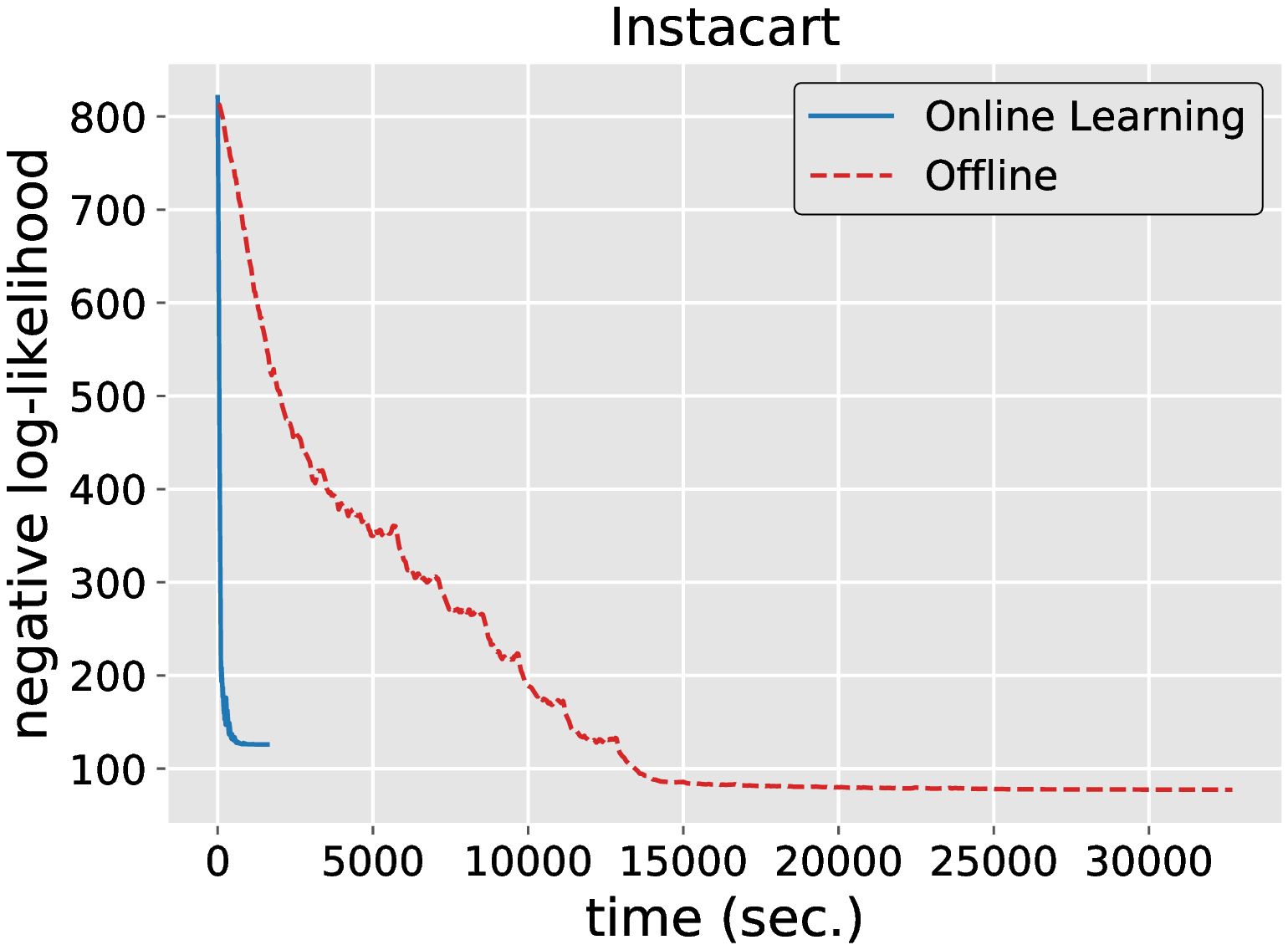}
}

\vspace{-4mm}
\caption{
Results comparing our online NDPP learning algorithm to the offline learning algorithm.
Strikingly, our online learning algorithm shows comparable performance, while converging significantly faster in all cases.
Furthermore, it also uses only a single pass, while using a small fraction of the space.
See text for discussion.
}
\label{fig:online-learning-vs-offline-loglik}
\vspace{-4mm}
\end{figure}

\section{Conclusion} \label{sec:conclusion}
In this paper, we formulate and study the streaming and online MAP inference and learning problems for Nonsymmetric Determinantal Point Processes. To the best of our knowledge, this is the first work to study these problems in these practical settings. We design new algorithms for these problems, prove theoretical guarantees for them in terms of space required, time taken, and solution quality for our algorithms, and empirically show that they perform comparably or (sometimes) even better than state-of-the-art offline algorithms. We believe our work opens up completely new avenues for practical application of NDPPs and can lead to deployment of NDPPs in many more real-world settings.
\bibliography{ref}
\bibliographystyle{iclr2022_conference}

\appendix
\section{Streaming MAP Inference Details}\label{sec:appendix-algorithms}
\partitiongreedyapprox*

We will first give a high-level proof sketch for our this theorem.
\begin{proof}[Proof sketch]
For a random-order arrival stream, the distribution of any set of consecutive $n/k$ elements is the same as the distribution of $n/k$ elements picked uniformly at random (without replacement) from $[n]$. If the objective function $f$ is submodular, then this algorithm has an approximation guarantee of $(1- 2/e)$ by \cite{mirzasoleiman2015lazier}. But neither $\det(\mL_S)$ nor $\log \det(\mL_S)$ are submodular. Instead, \cite{gartrell2021scalable}[Equation 45] showed that $\log \det(\mL_S)$ is ``close'' to submodular when $\sigma_{\min} > 1$ where this closeness is measured using a parameter known as ``submodularity ratio'' \citep{bian2017icml}. Using this parameter, we can prove a guarantee for our algorithm.
\end{proof}

We will now provide a more complete proof.
\begin{proof}
As described in Algorithm~\ref{alg:streaming-partition-greedy}, we will use $S_0, S_1,\dots,S_k$ to denote the solution sets maintained by the algorithm, where $S_i$ represents the solution set of size $i$. In particular, we have that $S_{i} = S_{i-1} \cup \{s_i\}$ where $s_i = \arg \max_{j \in B_i} f(S \cup \{j\})$ and $B_i$ denotes the $i$'th partition i.e. $B_i \coloneqq \{\frac{(i-1)\cdot n}{k} + 1, \frac{(i-1)\cdot n}{k} +2, \dots, \frac{i \cdot n}{k}\}$.

For $i \in [k]$, let us use $X_i \coloneqq [B_i \cap ( S_* \setminus S_{i-1}) \neq \emptyset]$ 
to denote the event that there is at least one element of the optimal solution which has not already been picked by the algorithm in the batch $B_i$ and $\lambda_i \coloneqq | S_* \setminus S_{i-1} |$. Then, 
\begin{align*}
\Pr[X_i] 
= & ~ 1 - \Pr[X_i^{\mathsf{c}}] \\
= & ~ 1 - (1 - \frac{\lambda_i}{n})(1 - \frac{\lambda_i}{n-1})\dots(1 - \frac{\lambda_i}{n-{\frac{n}{k}}+1})\\
\geq & ~ 1 - \left(1-\frac{\lambda_i}{n}\right)^{\frac{n}{k}} \\
\geq & ~ 1 - e^{-\frac{\lambda_i}{k}} \\
\geq & ~ \frac{\lambda_i}{k} \cdot \left( 1 - \frac{1}{e} \right)
\end{align*}
Here we use the facts: $e^x \geq 1+x$ for all $x \in \R$, $1-e^{-\frac{\lambda}{k}}$ is concave as a function of $\lambda$, and $\lambda \in [0,k]$.

For any element $s \in [n]$ and set $S \subseteq [n]$, let us use $f(s\ |\ S) \coloneqq f(S \cup \{s\}) - f(S)$ to denote the marginal gain in $f$ obtained by adding the element $s$ to the set $S$. For any round $i \in [k]$, we then have that $f(S_i) - f(S_{i-1}) = f(s_i\ |\ S_{i-1})$.

Note that 
\begin{align*}
\E[f(s_i\ |\ S_{i-1})\ |\ X_i] \geq \frac{\sum_{\omega \in \opt \setminus S_{i-1}} f(\omega\ |\ S_{i-1})}{\abs{\opt \setminus S_{i-1}}}.
\end{align*}

This happens due to the fact that conditioned on $X_i$, every element in $S_* \setminus S_{i-1}$ is equally likely to be present in $B_i$ and the algorithm picks $s_i$ such that $f(s_i\ |\ S_{i-1}) \geq f(s\ |\ S_{i-1})$ for all $s \in B_i$.
\begin{align*}
\E[f(s_i\ |\ S_{i-1})\ |\ S_{i-1}] &= \E[f(s_i\ |\ S_{i-1})\ |\ S_{i-1},X_i] \Pr[X_i] \\
& + \E[f(s_i\ |\ S_{i-1})\ |\ S_{i-1}, X_i^{\mathsf{c}}] \Pr[X_i^{\mathsf{c}}]\\
&\geq \E[f(s_i\ |\ S_{i-1})\ |\ S_{i-1},X_i] \Pr[X_i]\\
&\geq  \frac{\lambda_i}{k}\left( 1 - \frac{1}{e} \right) \cdot \frac{\sum_{\omega \in S_* \setminus S_{i-1}} f(\omega\ |\ S_{i-1})}{\abs{S_* \setminus S_{i-1}}}\\
&= \frac{\lambda_i}{\abs{S_* \setminus S_{i-1}}}\left( 1 - \frac{1}{e} \right) \cdot \frac{1}{k} \cdot \sum_{\omega \in S_* \setminus S_{i-1}} f(\omega\ |\ S_{i-1})\\
&= \left( 1 - \frac{1}{e} \right) \cdot \frac{1}{k} \cdot \sum_{\omega \in S_* \setminus S_{i-1}} f(\omega\ |\ S_{i-1})\\
&\geq \left( 1 - \frac{1}{e} \right) \cdot \frac{1}{k} \cdot \gamma \cdot ( f(S_{i-1} \cup S_*) - f(S_{i-1}) ) \\
&\geq \left( 1 - \frac{1}{e} \right) \cdot \frac{1}{k} \cdot \gamma \cdot ( \opt - f(S_{i-1}) )
\end{align*}

For the last $2$ inequalities, we use the fact that $f(S) = \log \det(\mL_S)$ is monotone non-decreasing and has a submodularity ratio of $\gamma = \left( 2 \frac{\log \sigma_{\max}}{\log \sigma_{\min}} - 1 \right)^{-1}$ when $\sigma_{\min} > 1$ \citep{gartrell2021scalable}[Eq. 45].

Taking expectation over all random draws of $S_{i-1}$, we get
\begin{align*}
\E[f(s_i\ |\ S_{i-1})] \geq  \left( 1 - \frac{1}{e} \right) \cdot \frac{\gamma }{k} (\opt - \E[f(S_{i-1})])
\end{align*}
Combining the above equation with $f(s_i| S_{i-1}) = f(S_i) - f(S_{i-1})$, we have

\begin{align*}
\E[f(S_i)] - \E[f(S_{i-1})] \geq \left( 1 - \frac{1}{e} \right) \cdot \frac{\gamma }{k} \cdot ( \opt - \E[f(S_{i-1})]) 
\end{align*}
Next we have
\begin{align*}
-(\opt-\E[f(S_i)]) + (\opt - \E[f(S_{i-1})]) \geq \left( 1 - \frac{1}{e} \right) \cdot \frac{\gamma }{k} \cdot ( \opt - \E[f(S_{i-1})]) 
\end{align*}
Re-organizing the above equation, we obtain
\begin{align*}
\opt - \E[f(S_i)] \leq \left(1 - \left(1 - \frac{1}{e} \right)\cdot \frac{\gamma}{k} \right) \left( \opt - \E[f(S_{i-1})] \right) 
\end{align*}
Applying the above equation recursively $k$ times,
\begin{align*}
\opt - \E[f(S_k)] \leq & ~ \left(1 - \left(1 - \frac{1}{e} \right)\cdot \frac{\gamma}{k} \right)^{k} \left( \opt - \E[f(S_0)] \right)\\
= & ~ \left(1 - \left(1 - \frac{1}{e} \right)\cdot \frac{\gamma}{k} \right)^{k} \opt
\end{align*}
where the last step follows from $f(S_0)=0$.

Re-organized the terms again, we have
\begin{align*}
\E[f(S_k)] \geq & ~ \left( 1 - \left( 1 - \left(1 - \frac{1}{e} \right)\cdot \frac{\gamma}{k} \right)^{k}\right)  \opt\\
\geq & ~ \left( 1 - e^{-\gamma(1-\frac{1}{e})} \right) \opt \\
\end{align*}
When we substitute $\gamma = \left( 2 \frac{\log \sigma_{\max}}{\log \sigma_{\min}} - 1 \right)^{-1}$, we get our final inequality:
\[ \E[f(S_k)] \geq \left(1 - \frac{1}{\sigma_{\min} ^{(1 - \frac{1}{e}) \cdot (2 \log \sigma_{\max} - \log \sigma_{\min})}}\right) \opt\]

\end{proof}

\partitiongreedytime*
\begin{proof}
For any particular data-point $(\vv_t,\vb_t)$, Algorithm~\ref{alg:streaming-partition-greedy} needs to compute $f(S_{i-1} \cup \{t\})$, which takes at most $\Tcal_{\det}(k,d)$ time. All the other comparison and update steps are much faster and so the worst-case update time is $O(\Tcal_{\det}(k,d))$. The space needed to compute $\det(\mV_S^{\top} \mV + \mB_S^{\top} C \mB_S)$ is at most $O(k^2 + d^2)$ where $S = S_{i-1} \cup \{t\}$. The algorithm also needs to store $S_{i-1},s_i$ and $f(S_{i-1} \cup \{s_i\})$ but all of these are dominated by $O(k^2 + d^2)$ space needed to compute the determinant.
\end{proof}

\section{Online MAP Inference Details}\label{sec:appendix-online-inference}

\lsstime*

\begin{proof}
For every iteration of the while loop in line $4$: It takes at most ${\cal T}_{\det}(k,d)$ time for checking the first if condition (lines 5-6). The $\argmax_{j \in S} f(S \cup \{t\} \setminus \{j\}$ step takes at most $k \cdot {\cal T}_{\det}(k,d)$ time. The while loop in line 12 takes time at most $\abs{S} \cdot \abs{T} \cdot {\cal T}_{\det}(k,d)$ for every instance of an increase in $f(S)$. Note that $f(S)$ can increase at most $\log_{\alpha}(\Delta)$ times since the value of $f(S)$ cannot exceed $\opt_k$. Therefore, the update time of Algorithm~\ref{alg:online-lss} is at most ${\cal T}_{\det}(k,d) + k \cdot {\cal T}_{\det}(k,d) + \log_{\alpha}(\Delta) \cdot \rbr{\abs{S} \cdot \abs{T} \cdot {\cal T}_{\det}(k,d)} \leq {\cal T}_{\det}(k,d) \cdot \rbr{k+1 + k \log_{\alpha}^2(\Delta)}$ since $\abs{S} \leq k$ and $\abs{T} \leq \log_{\alpha}(\Delta)$. Notice that the cardinality of $T$ can increase by $1$ only when the value of $f(S)$ increases at least by a factor of $\alpha$ and so $\abs{T} \leq \log_{\alpha}(\Delta)$.

During any time step $t$, the algorithm needs to store the indices in $S, T$ and the corresponding matrices $\mV_S,\mB_S,\mV_T,\mB_T$. Since $\abs{S} \leq k, \abs{T} \leq \log_{\alpha}(\Delta)$ and it takes $d$ words to store every $\vv_i$ and $\vb_i$, we need at most $k + \log_{\alpha}(\Delta) + 2dk + 2d\log_{\alpha}(\Delta)$ words to store all these in memory. 
The space needed to compute $\det( \mV_S^\top \mV_S  + \mB_S^\top \mC \mB_S )$ is at most $O(k^2 + d^2)$. We compute all such determinants one after the other in our algorithm. So the algorithm only needs space for one such computation during it's run. Therefore, the space required by Algorithm~\ref{alg:online-lss} is $O(k^2 + d^2 + d\log_{\alpha}(\Delta))$.
\end{proof}

\begin{algorithm}[h!]
\caption{\textsc{Online-2-neighbor}: Local Search over 2-neighborhoods with Stash for Online MAP Inference of low-rank NDPPs.}
\label{alg:online-2-neighbourhood-local-search}
\begin{algorithmic}[1]
\State {\bf Input:} A stream of data points $\{(\vv_1, \vb_1), (\vv_2, \vb_2),\ldots, (\vv_n,\vb_n)\}$, and a constant $\alpha \geq 1$
\State {\bf Output:} A solution set $S$ of cardinality $k$ at the end of the stream.
\State $S,T \leftarrow \emptyset$
\While{new data $(\vv_t,\vb_t)$ arrives in stream at time $t$}
\If{$\abs{S} < k$ and $f\rbr{S \cup \{ t\}} \neq 0$}
\State $S \leftarrow S \cup \{ t \}$
\Else
\State $\{i,j\} \leftarrow \arg\max_{a,b \in S} \rbr{f(S \cup \{t\} \setminus \{a \} ), f(S \cup \{t-1,t\} \setminus \{a,b\} ) }$
\State $f_{\max} \leftarrow \max_{a,b \in S} \rbr{f(S \cup \{t\} \setminus \{a \} ), f(S \cup \{t-1,t\} \setminus \{a,b\} ) }$
\If{$f_{\max} > \alpha \cdot f(S)$}
\If{two items are chosen to be replaced:}
\State $S \leftarrow S \cup \{ t-1,t\} \setminus \{ i,j\}$
\State $T \leftarrow T \cup \{ i,j \}$
\Else
\State $S \leftarrow S \cup \{t\} \setminus \{ i\}$
\State $T \leftarrow T \cup \{ i \}$
\EndIf
\While{$\exists\  a,b \in S,\ c,d \in T: f\rbr{S \cup \{ c,d \} \setminus \{ a,b \}} > \alpha \cdot f(S)$}
\State $S \leftarrow S \cup \{ c,d \} \setminus \{ a,b \}$
\State $T \leftarrow T \cup \{ a,b \} \setminus \{ c,d \}$
\EndWhile
\EndIf
\EndIf
\EndWhile
\State {\bf return} $S$ 
\end{algorithmic}
\end{algorithm}

\twoneigh*
\begin{proof}
It takes at most ${\cal T}_{\det}(k,d)$ time for lines 5-6 (same as in LSS). The $\arg\max_{a,b \in S} \rbr{f(S \cup \{t\} \setminus \{a \} ), f(S \cup \{t-1,t\} \setminus \{a,b\} ) }$ step takes at most $k^2 \cdot {\cal T}_{\det}(k,d)$ time. The while loop in line 18 takes time at most $\abs{S}^2 \cdot \abs{T}^2 \cdot {\cal T}_{\det}(k,d)$ for every instance of an increase in $f(S)$. Similar to LSS, $f(S)$ can increase at most by a factor of $\log_{\alpha}(\Delta)$ since the value of $f(S)$ cannot exceed $\opt_k$. Therefore, the update time of Algorithm~\ref{alg:online-2-neighbourhood-local-search} is at most ${\cal T}_{\det}(k,d) + k^2 \cdot {\cal T}_{\det}(k,d) + \log_{\alpha}(\Delta) \cdot \rbr{\abs{S}^2 \cdot \abs{T}^2 \cdot {\cal T}_{\det}(k,d)} \leq {\cal T}_{\det}(k,d) \cdot \rbr{k^2 + 1 + k^2 \log_{\alpha}^3(\Delta)}$ since $\abs{S} \leq k$ and $\abs{T} \leq \log_{\alpha}(\Delta)$.

Although Algorithm~\ref{alg:online-2-neighbourhood-local-search} executes more number of determinant computations than Algorithm~\ref{alg:online-lss}, all of them are done sequentially and only the maximum value among all the previously computed values in any specific iteration needs to be stored in memory. Therefore, the space needed is (nearly) the same for both the algorithms.
\end{proof}

\begin{algorithm}[t!]
\caption{\textsc{Online-Greedy}: Online Greedy MAP Inference for NDPPs
}
\label{alg:online-greedy-inference-NDPP-lowrank}
\begin{algorithmic}[1]
\State {\bf Input:} A stream of data points $\{(\vv_1, \vb_1), (\vv_2, \vb_2),\ldots, (\vv_n,\vb_n)\}$
\State {\bf Output:} A solution set $S$ of cardinality $k$ at the end of the stream.
\State $S \leftarrow \emptyset$
\While{new data $(\vv_t,\vb_t)$ arrives in stream at time $t$}
\If{$\abs{S} < k$ and $f\rbr{S \cup \{ t\}} \neq 0$}
\State $S \leftarrow S \cup \{ t \}$
\Else
\State $i \leftarrow \arg\max_{j \in S} f(S \cup \{t\} \setminus \{j \} )$
\If{$ f\rbr{S \cup \{ t \} \setminus \{ i\}}> f(S)$}
\State $S \leftarrow S \cup \{ t\} \setminus \{ i\}$
\EndIf
\EndIf
\EndWhile
\State {\bf return} $S$ 
\end{algorithmic}
\end{algorithm}

\section{Learning Details}\label{appendix-learning}

Note that $Z (\V,\B,\C) \coloneqq \log\det\!\left(\V^\top \V\!+ \B^\top \C \B \!+ \I \right)$. We have used the following form of $\Z$ for deriving our gradients:
\begin{align*}
\Z =  \log\det\left(\I_d + \V_{S_t}\V_{S_t}^\top\right) + \log\det\left(\C^{-1} + \B_{S_t} (\I - \V_{S_t}^{\top} (\I_d + \V_{S_t}\V_{S_t}^\top)^{-1} \V_{S_t}) \B_{S_t}^{\top}\right) + \log\det(\C)
\end{align*}

\learningbounds*
\begin{proof}
To compute $\nabla_{\V_{S_t}} Z$, it takes space at most $O({d^\prime}^2 + d^2)$ as it involves matrix multiplications and inversions of matrices of sizes $d^\prime \times d, d \times d, d^\prime \times d^\prime$. Similarly for $\nabla_{\B_{S_t}} Z, \nabla_{\C} Z$. Computing the gradient of the first term in Equation \ref{eq:approx-log-likelihood} requires inversion of a $d^\prime \times d^\prime$ matrix, which takes at most $O({d^\prime}^2)$ space. And the gradient of the sum of norms term in Equation \ref{eq:approx-log-likelihood} takes at most $O(d^\prime d)$ space to compute.

As described in the proof of the previous lemma, we will need to compute matrix multiplications and inversions of size at most $d\times d, d^\prime \times d^\prime, \text{and }d^\prime \times d$. All of these can be done in time $O(d^3 + {d^\prime}^3)$.
\end{proof}

\begin{theorem}\label{thm:online-learning-total-time}
Given a stream of subsets $\{S_1,S_2,\ldots,S_m\}$ of length-$m$,
Algorithm~\ref{alg:online-learning-NDPP-lowrank} 
learns an
NDPP kernel $\mL = \mV^\top\mV + \mB^\top\mC\mB$
of rank-$d$ using only a single-pass over the 
stream 
in $O(m{d^\prime}^3 + md^3)$ time where $d^{\prime}=\max_t(|S_t|) \ll n$.
Note $\mV,\mB \in\RR^{d \times n}$, $\mC\in\RR^{d \times d}$, and $d \ll n$.
\end{theorem}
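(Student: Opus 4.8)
The plan is to reduce the theorem to the per-step bound already established in Theorem~\ref{thm:online-learning-space} and then sum over the stream. First I would observe that the \textsc{while} loop of Algorithm~\ref{alg:online-learning-NDPP-lowrank} touches each arriving subset $S_t$ exactly once, performs a single gradient update using Eqs.~\ref{eq:V-update}--\ref{eq:C-update}, and then discards $S_t$; hence the algorithm makes a single sequential pass, which gives the single-pass part of the claim immediately, and the total time is just $m$ times the cost of one iteration.

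For the running time I would argue that each iteration costs $O(d^3 + {d'}^3)$, where $d' = \max_t |S_t|$, and multiply by $m$. The key is to verify that every matrix appearing in the gradient formulas has at least one dimension equal to $d$ or $|S_t| \le d'$, so that all products and inverses fit in that budget. Concretely: (i) the first-term gradients need the inverse of the $|S_t| \times |S_t|$ matrix $\V_{S_t}^\top \V_{S_t} + \B_{S_t}^\top \C \B_{S_t}$, which is $O({d'}^3)$, plus a few products of a $d \times |S_t|$ block with a $|S_t| \times |S_t|$ block; (ii) using the decomposition of $Z$ recorded at the start of Appendix~\ref{appendix-learning}, namely $Z = \log\det(\I_d + \V_{S_t}\V_{S_t}^\top) + \log\det(\C^{-1} + \B_{S_t} \X \B_{S_t}^\top) + \log\det(\C)$ with $\X = \I - \V_{S_t}^\top(\I_d + \V_{S_t}\V_{S_t}^\top)^{-1}\V_{S_t}$ of size $|S_t| \times |S_t|$, each of $\nabla_{\V_{S_t}} Z$, $\nabla_{\B_{S_t}} Z$, $\nabla_{\C} Z$ in Eqs.~\ref{eq:grad-norm-v}--\ref{eq:grad-norm-c} reduces to forming and inverting the $d \times d$ matrices $\I_d + \V_{S_t}\V_{S_t}^\top$, $\C^{\pm 1}$, and $\C^{-1} + \B_{S_t}\X\B_{S_t}^\top$, then multiplying these against $d \times |S_t|$ or $|S_t| \times |S_t|$ blocks; (iii) the regularizer gradients $2\alpha\V_{S_t}$ and $2\beta\B_{S_t}$ cost $O(d\,d')$; and (iv) applying the gradient step updates only the $\le d'$ columns of $\V$ and $\B$ indexed by $S_t$ together with the $d \times d$ matrix $\C$, costing $O(d\,d' + d^2)$. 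Adding these contributions gives the per-iteration bound $O(d^3 + {d'}^3)$, hence $O(m{d'}^3 + md^3)$ over the whole stream; since $d, d' \ll n$ this is also sub-linear in $n$, as required by Definition~\ref{def:online-learning-setting}.

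The main obstacle is the bookkeeping in step (ii): one must fix the associativity of the matrix products so that no intermediate ever acquires an $n$-sized dimension (in particular $\X$ is kept as a $d' \times d'$ matrix and is never expanded), and the two Woodbury-style inverses $(\C^{-1} + \B_{S_t}\X\B_{S_t}^\top)^{-1}$ and $(\C^{-\top} + \B_{S_t}\X\B_{S_t}^\top)^{-1}$ should each be computed once and shared between Eqs.~\ref{eq:grad-norm-v} and~\ref{eq:grad-norm-b}. Once every intermediate has dimensions bounded by $\max(d, d')$, the bound follows from the standard facts that an $a \times a$ matrix is invertible in $O(a^3)$ time and $\Tmat(a,b,c) = O((\max\{a,b,c\})^3)$, so I would expect the argument to be short modulo this careful size accounting.
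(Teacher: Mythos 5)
Your proposal is correct and matches the paper's approach: the paper derives the $O(d^3 + {d'}^3)$ per-update cost in Theorem~\ref{thm:online-learning-space} by the same dimension accounting (inversions and products of matrices of sizes $d\times d$, $d'\times d'$, and $d'\times d$) and obtains Theorem~\ref{thm:online-learning-total-time} by multiplying by the stream length $m$, with the single-pass property following directly from the structure of the \textsc{while} loop. Your step (ii) bookkeeping is simply a more explicit version of what the paper leaves implicit.
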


In comparison, the offline learning algorithm~\citep[]{gartrell2021scalable} takes at most $T_{\max}$ passes over the entire data ($m$ subsets), and since all $m$ subsets are stored in memory (offline, non-streaming), they use all available data at once to update the embedding matrices of the NDPP model. Notice that $\log\det\!\left(\V_{S_t}^\top \V_{S_t} + \B_{S_t}^\top \C \B_{S_t} \right)$ in Eq.~\ref{eq:nonsymm-full-log-likelihood} is computed for every subset, thus taking $O(md^{\prime})$ time for all $m$ sets in the stream where $d^{\prime}$ is the size of the largest set in the stream.
Furthermore, $\log\det\!\left(\V^\top \V \!+ \B^\top \C \B \!+ \I \right)$ (2nd term in Eq.~\ref{eq:nonsymm-full-log-likelihood}) takes $O(nd^2)$ time where $n \ll m$, and this can be computed a constant number of times with respect to $m$.
Similarly for the regularization term $R(\V\!,\!\B)$.

\section{Datasets}\label{appendix:datasets}
We use several real-world datasets composed of sets (or baskets) of items from some ground set of items.
\begin{itemize}
\item \textbf{UK Retail:} 
This is an online retail dataset consisting of sets of items all purchased together by users (in a single transaction)~\citep{chen2012data}.
There are 19,762 transactions (sets of items purchased together) that consist of 3,941 items.
Transactions with more than 100 items are discarded.

\item \textbf{MovieLens:} 
This dataset contains sets of movies that users watched~\citep{sharma2019learning}.
There are 29,516 sets consisting of 12,549 movies.

\item \textbf{Amazon Apparel}:
This dataset consists of 14,970 registries (sets) from the apparel category of the Amazon Baby Registries dataset, which is a public dataset that has been used in prior work on NDPPs~\citep{gartrell2021scalable,gartrell2019learning}. These apparel registries are drawn from 100 items in the apparel category.

\item \textbf{Amazon 3-category}:
We also use a dataset composed of the apparel, diaper, and feeding categories from Amazon Baby Registries, which are the most popular categories, giving us 31,218 registries made up of 300 items~\citep{gartrell2019learning}.

\item \textbf{Instacart:} 
This dataset represents sets of items purchased by users on Instacart~\citep{instacart2017dataset}.
Sets with more than 100 items are ignored.
This gives 3.2 million total item-sets from 49,677 unique items.

\item \textbf{Million Song:} 
This is a dataset of song playlists put together by users where every playlist is a set (basket) of songs played by Echo Nest users~\citep{mcfee2012million}.
Playlists with more than 150 songs are discarded.
This gives 968,674 playlists from 371,410 songs.

\item \textbf{Customer Dashboards:} 
This dataset consists of dashboards or baskets of visualizations created by users~\citep{qian2020ml}.
Each dashboard represents a set of visualizations selected by a user.
There are 63436 dashboards (baskets/sets) consisting of $1206$ visualizations.

\item \textbf{Web Logs:}
This dataset consists of sets of webpages (baskets) that were all visited in the same session.
There are 2.8 million baskets (sets of webpages) drawn from 2 million webpages.

\item \textbf{Company Retail:} 
This dataset contains the set of items viewed (or purchased) by a user in a given session.
Sets (baskets) with more than 100 items are discarded.
This results in 2.5 million baskets consisting of $107{,}349$ items.

\end{itemize}

The last two datasets are proprietary company data.

\section{Additional Experimental Results}\label{appendix:exp}

\subsection{Number of Determinant Computations} \label{appendix:det-comps}
We perform several experiments comparing the number of determinant computations (as a system-independent proxy for time) of all our online algorithms. We do not compare with offline greedy here because that algorithm doesn't explicitly compute all the determinants. Results comparing the number of determinant computations as a function of the number of data points analyzed for a variety of datasets are provided in Figure~\ref{fig:det-comps}. Online-2-neighbor requires the most number of determinant computations but also gives the best results in terms of solution value. Online-LSS and Online-Greedy use very similar number of determinant computations.
\begin{figure}[htp]
\centering
\includegraphics[width=.33\textwidth]{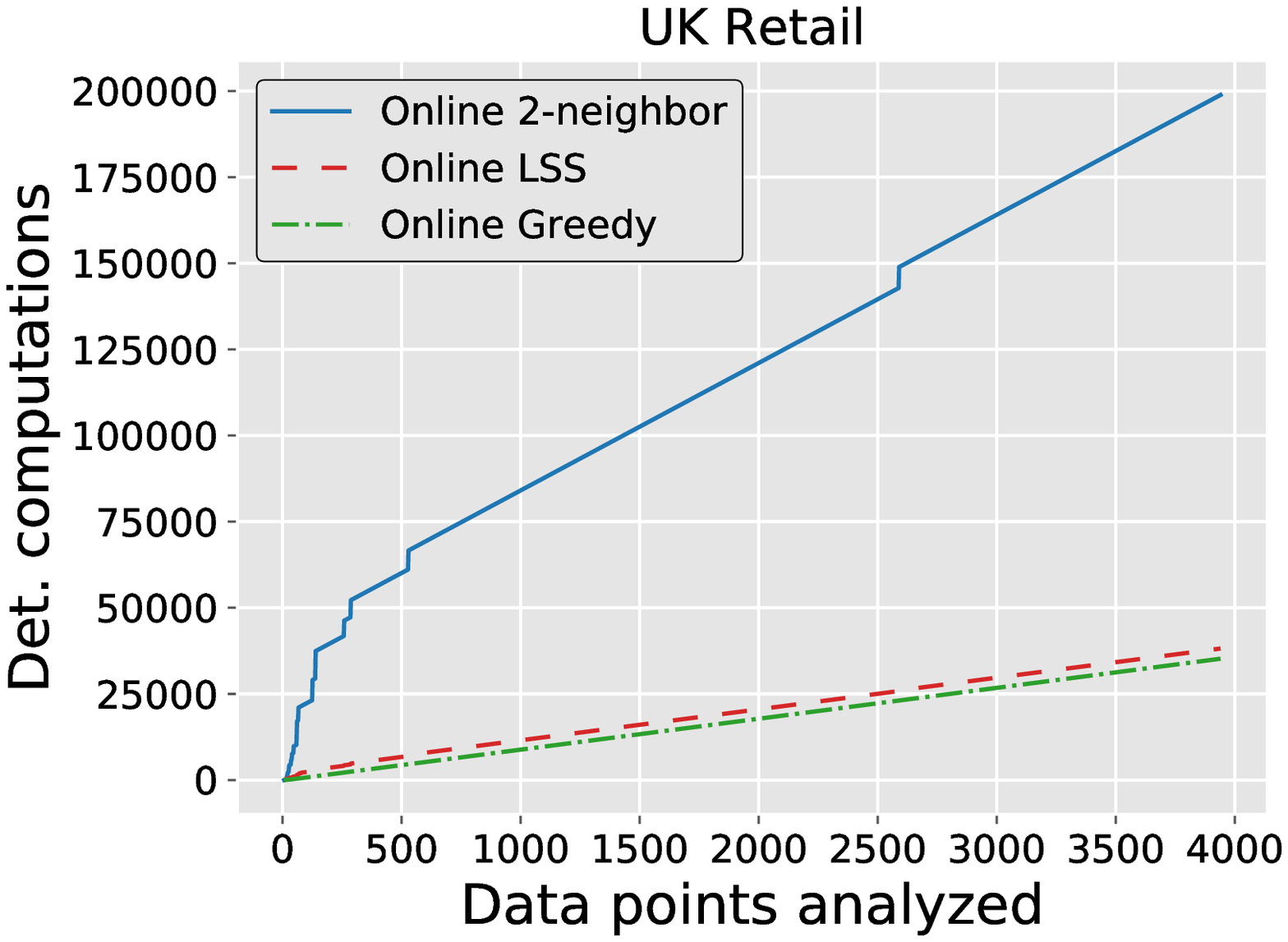}\hfill
\includegraphics[width=.33\textwidth]{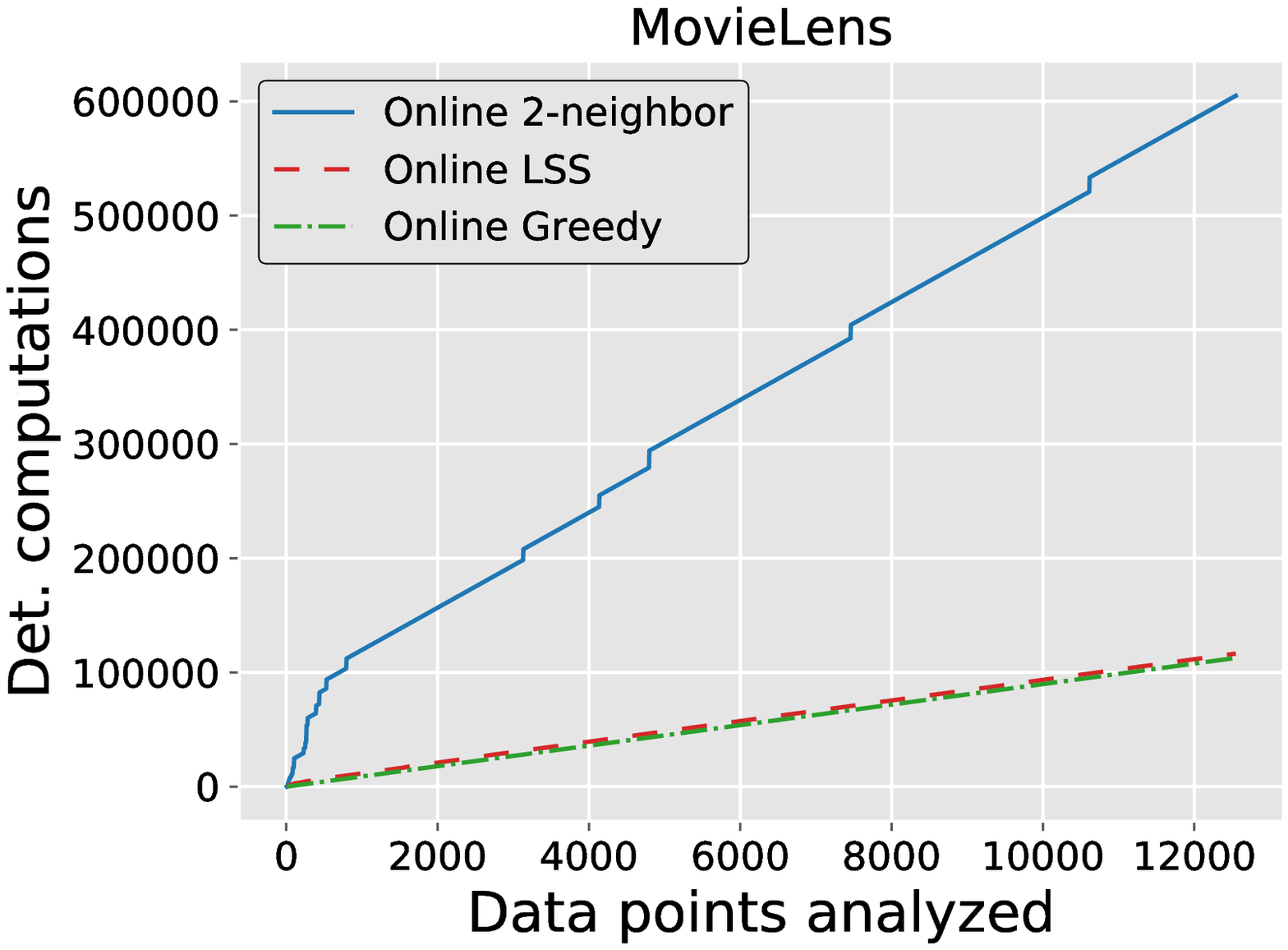}\hfill
\includegraphics[width=.33\textwidth]{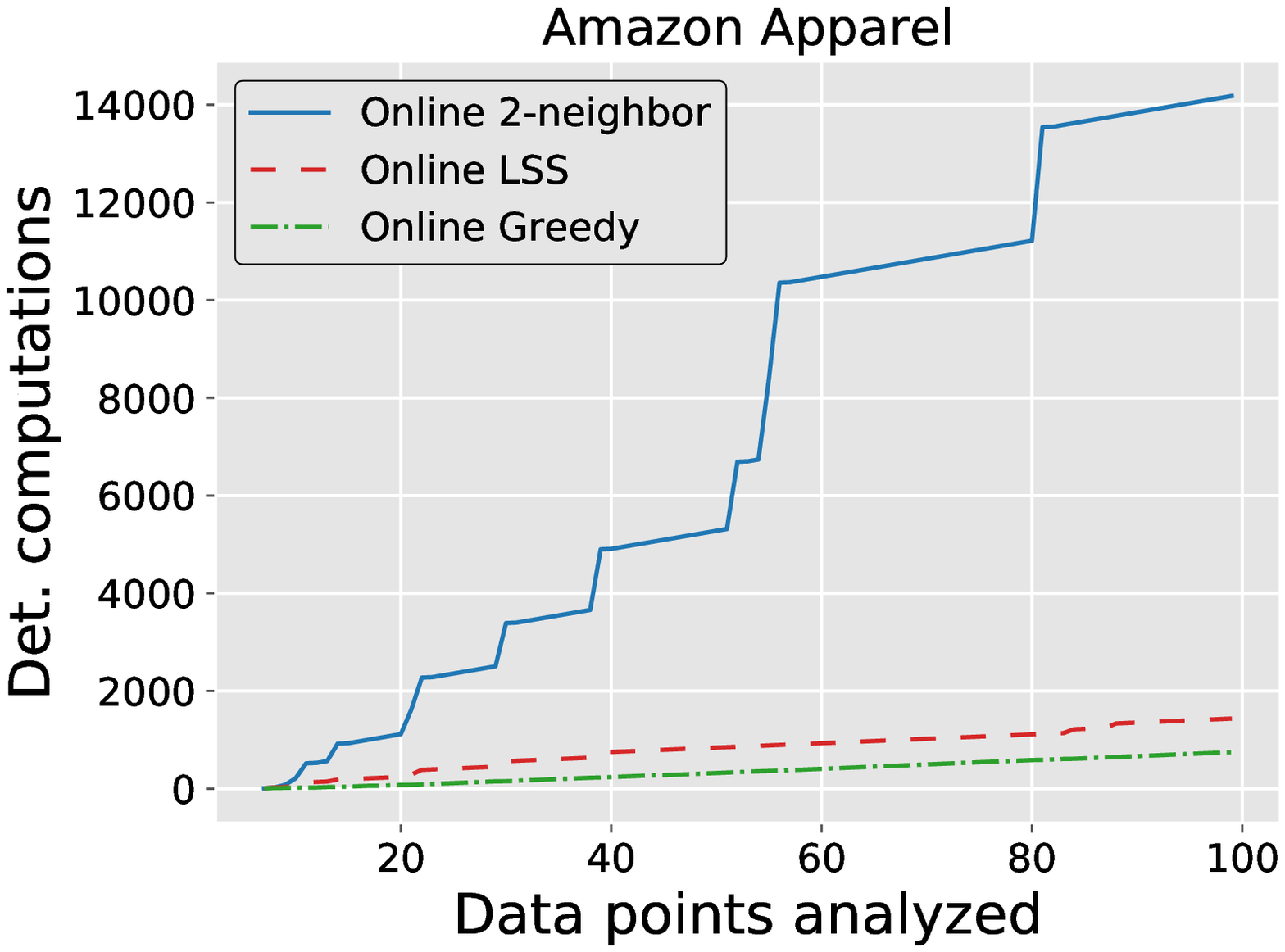}\hfill
\includegraphics[width=.33\textwidth]{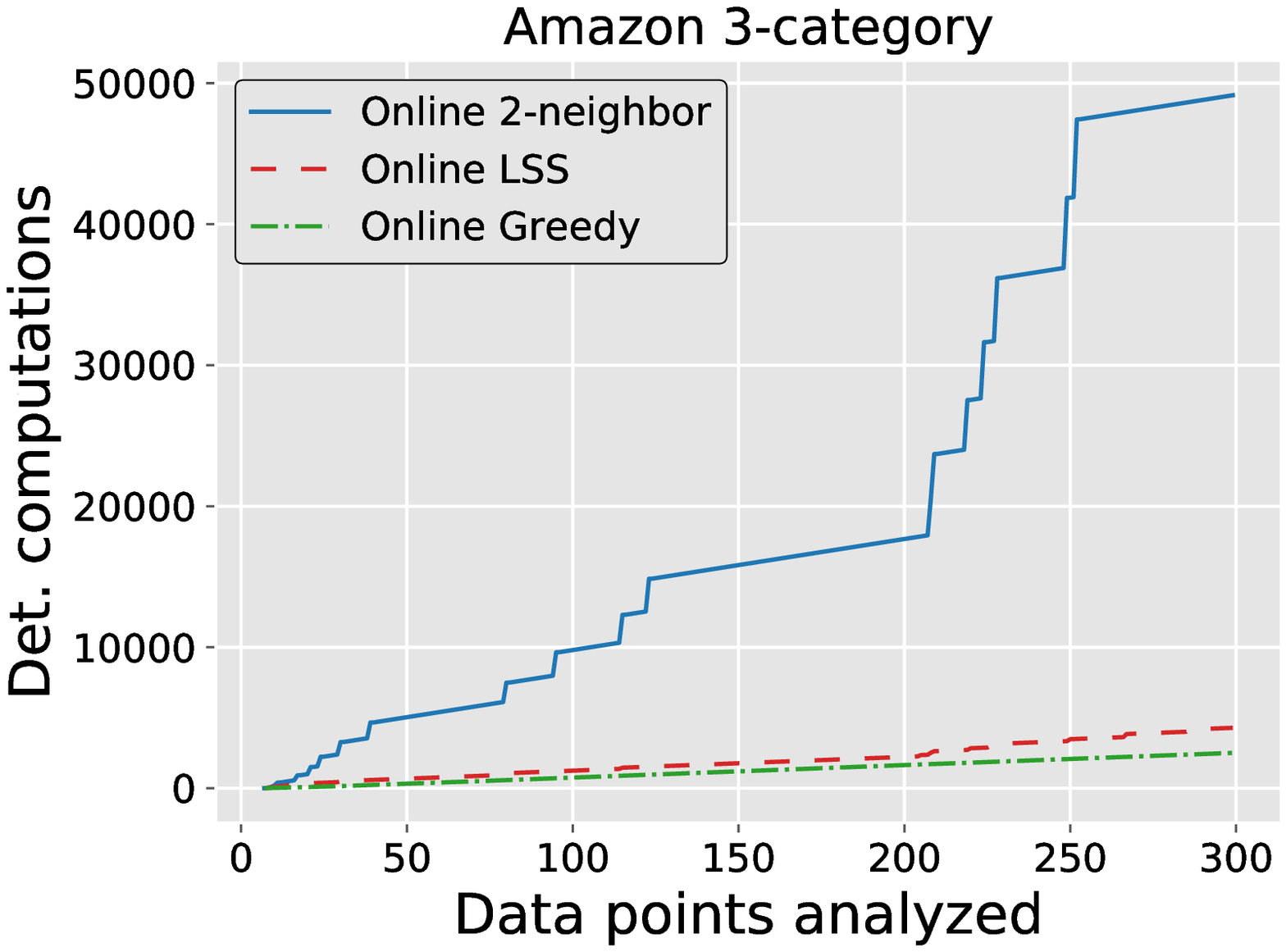}\hfill
\includegraphics[width=.33\textwidth]{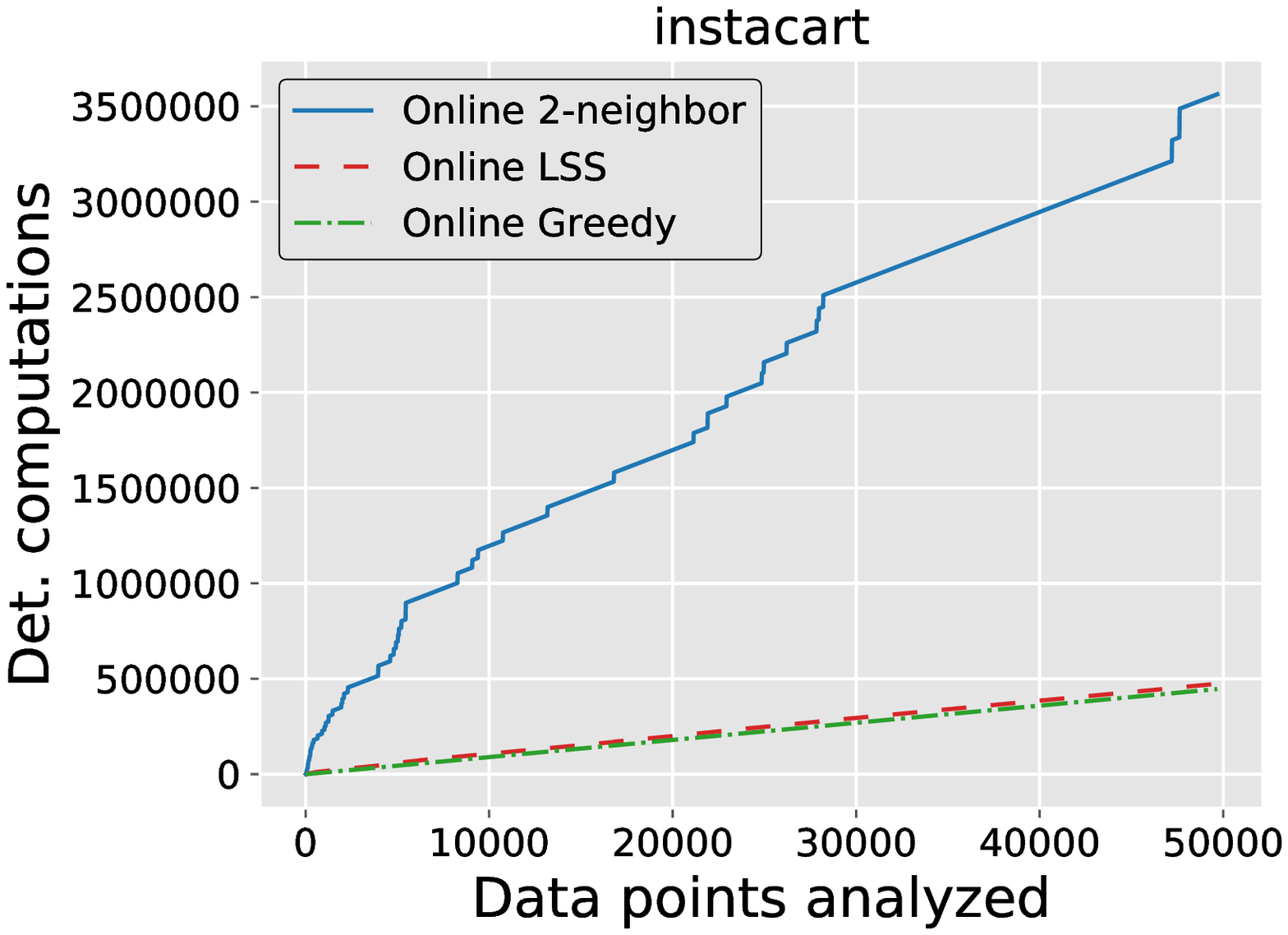}\hfill
\includegraphics[width=.33\textwidth]{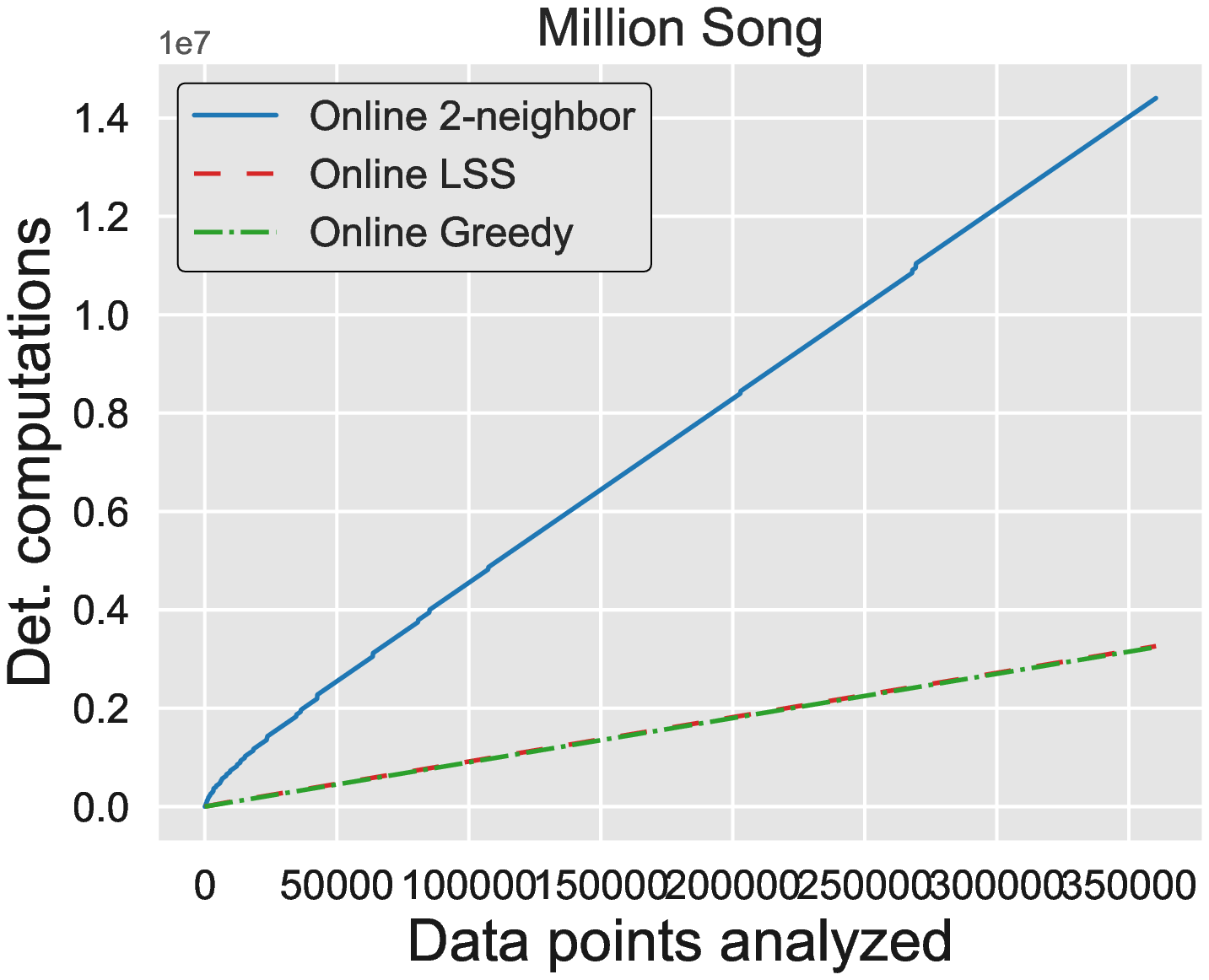}\hfill
\includegraphics[width=.33\textwidth]{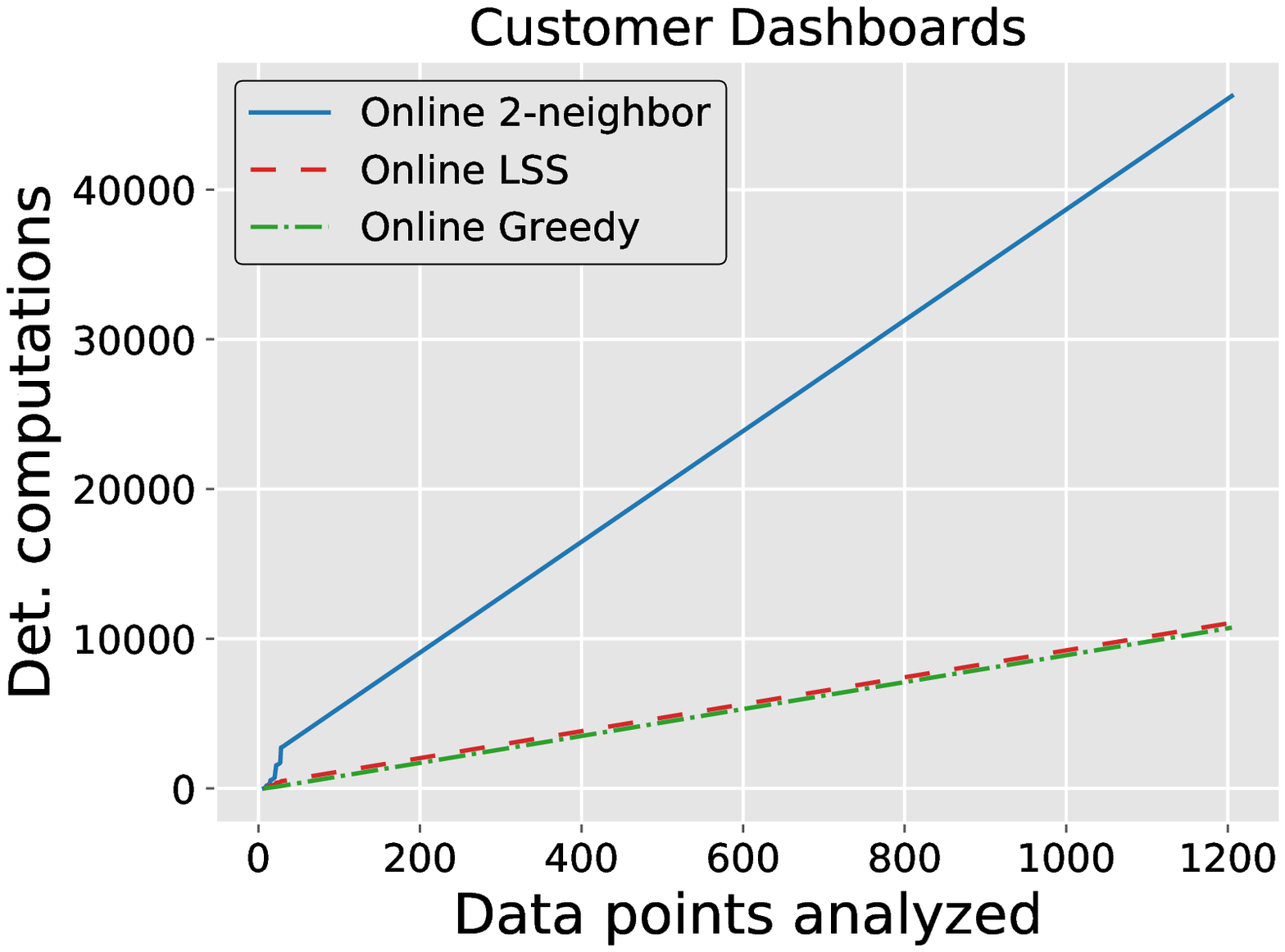}\hfill
\includegraphics[width=.33\textwidth]{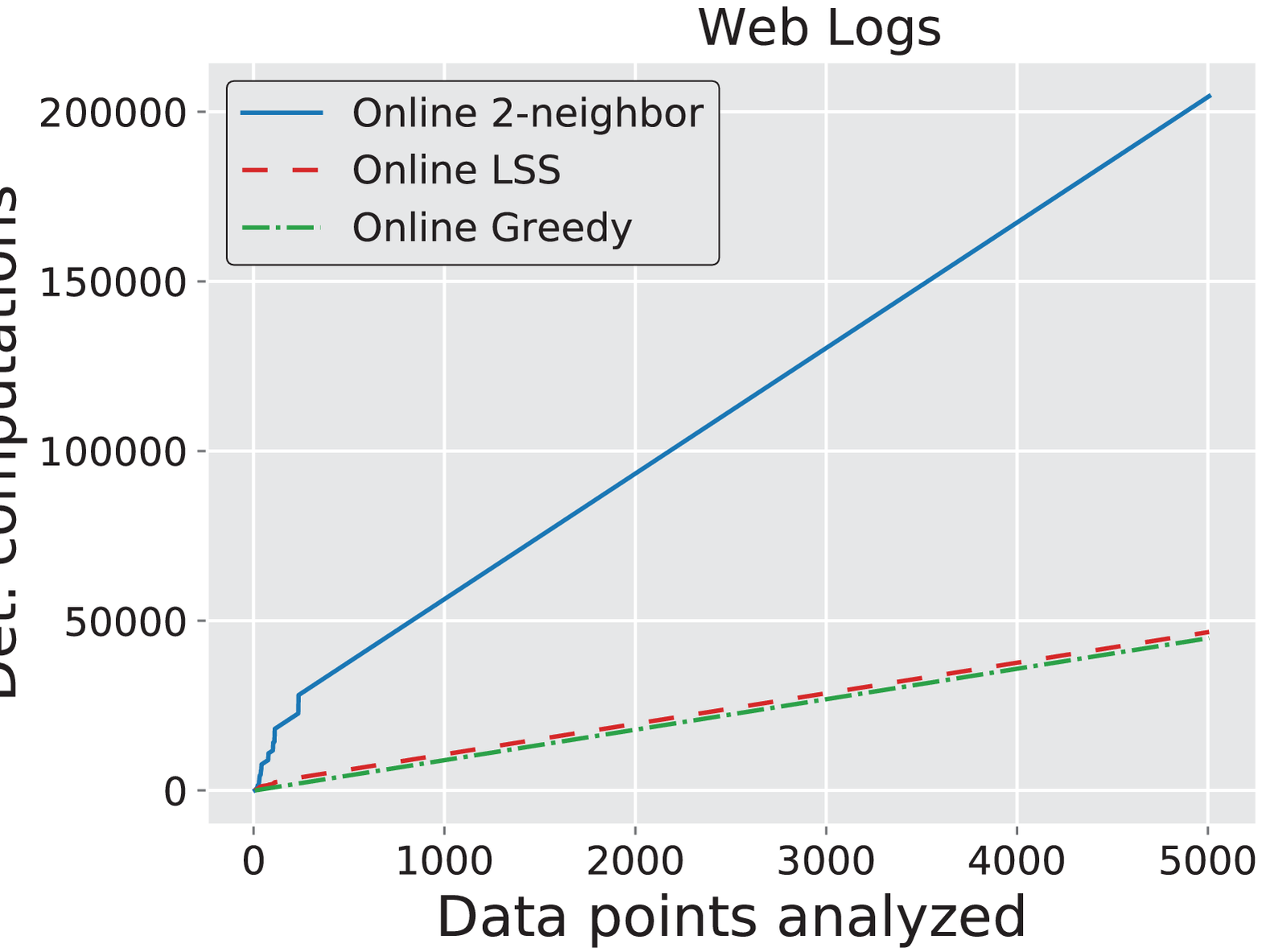}\hfill
\includegraphics[width=.33\textwidth]{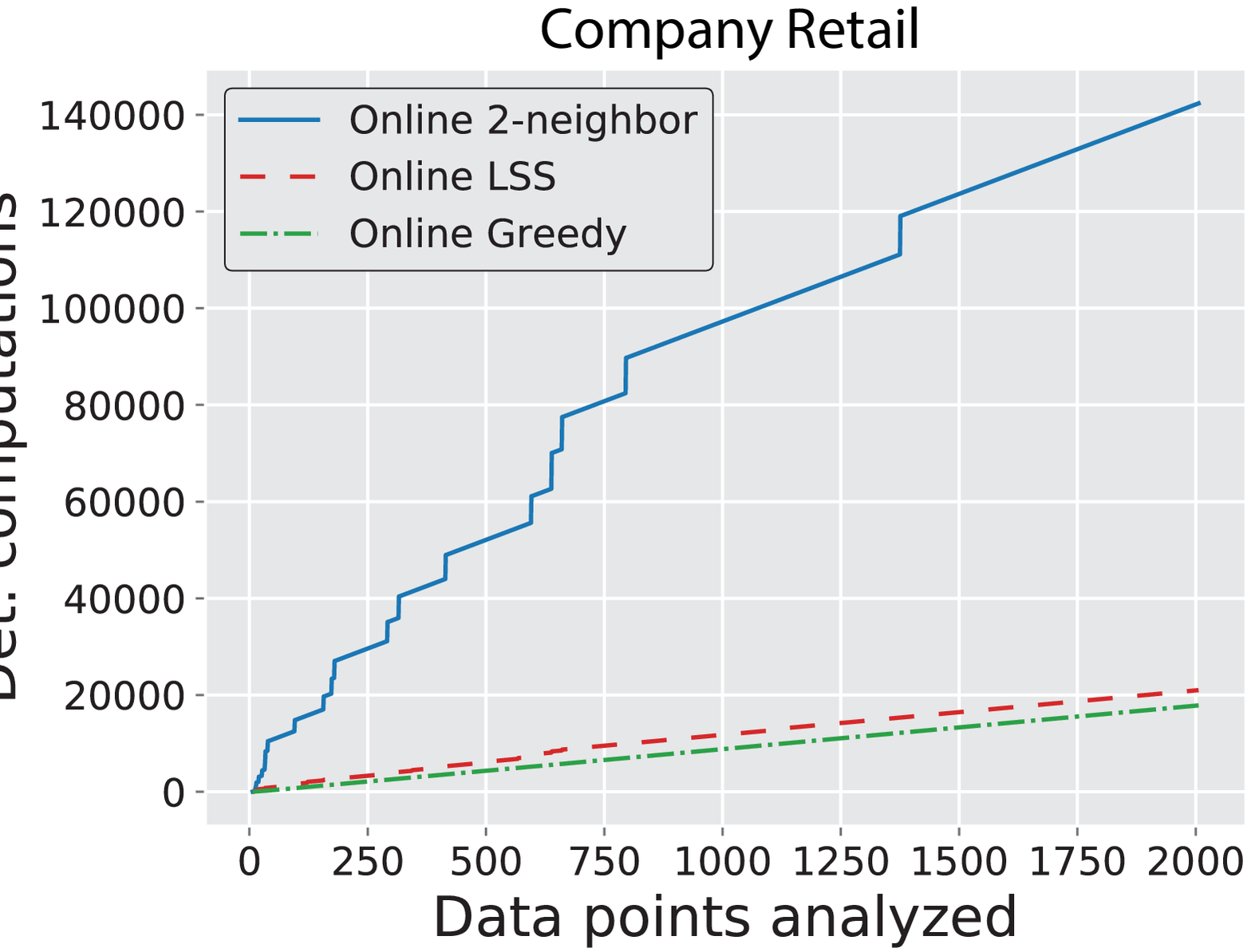}\hfill
\caption{
Results comparing the number of determinant computations as a function of the number of data points analyzed for all our online algorithms. Online-2-neighbor requires the most number of determinant computations but also gives the best results in terms of solution value. Online-LSS and Online-Greedy use very similar number of determinant computations.
}
\label{fig:det-comps}
\end{figure}

\newpage
\subsection{Number of Swaps}\label{appendix:swaps}
Results comparing the number of swaps (as a measure of solution consistency) of all our online algorithms can be found in Figure~\ref{fig:swaps}. Online-Greedy has the most number of swaps and therefore the least consistent solution set. On most datasets, the number of swaps by Online-2-neighbor is very similar to Online-LSS. 

\begin{figure}[htp]
\centering
\includegraphics[width=.33\textwidth]{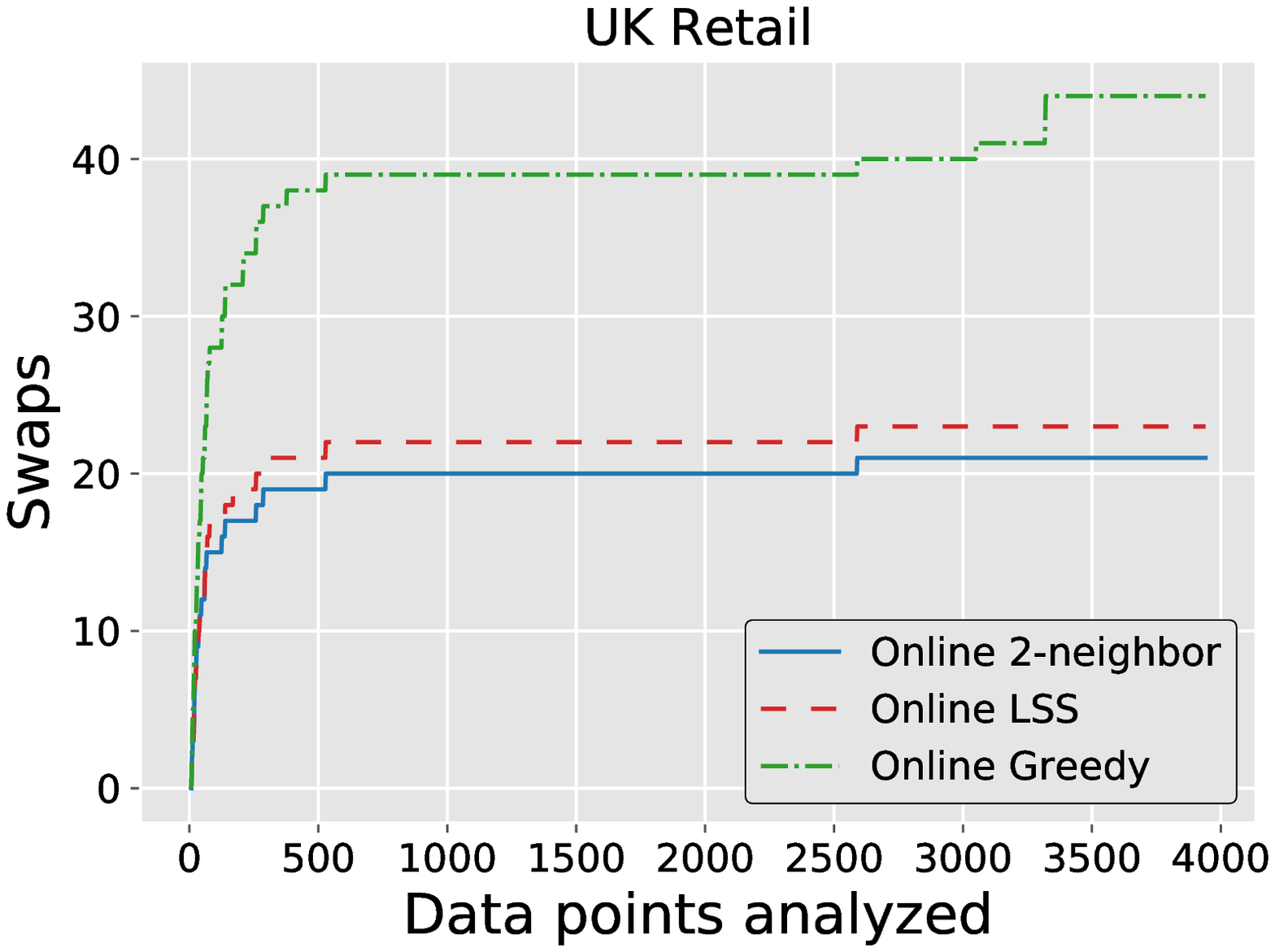}\hfill
\includegraphics[width=.33\textwidth]{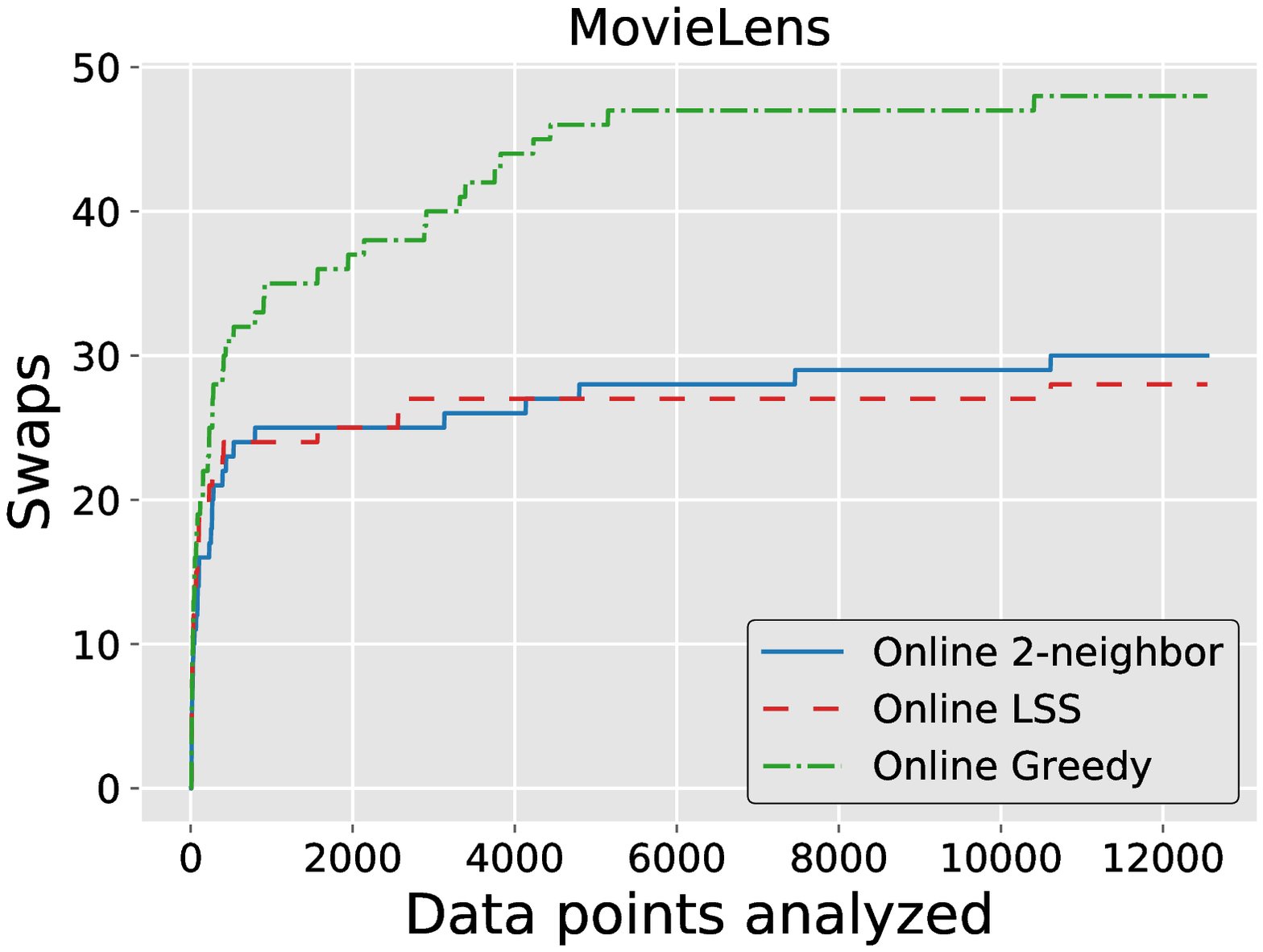}\hfill
\includegraphics[width=.33\textwidth]{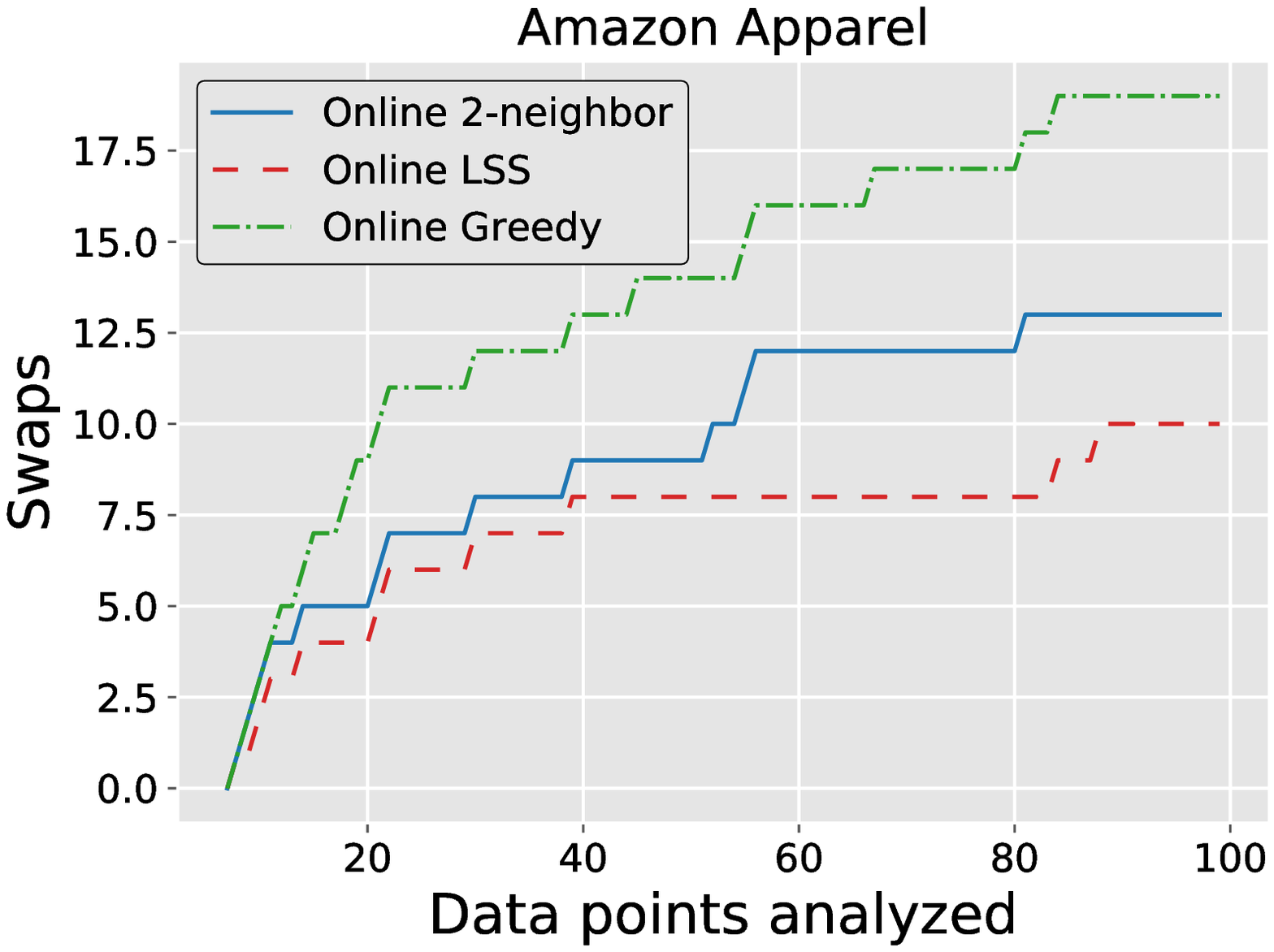}\hfill
\includegraphics[width=.33\textwidth]{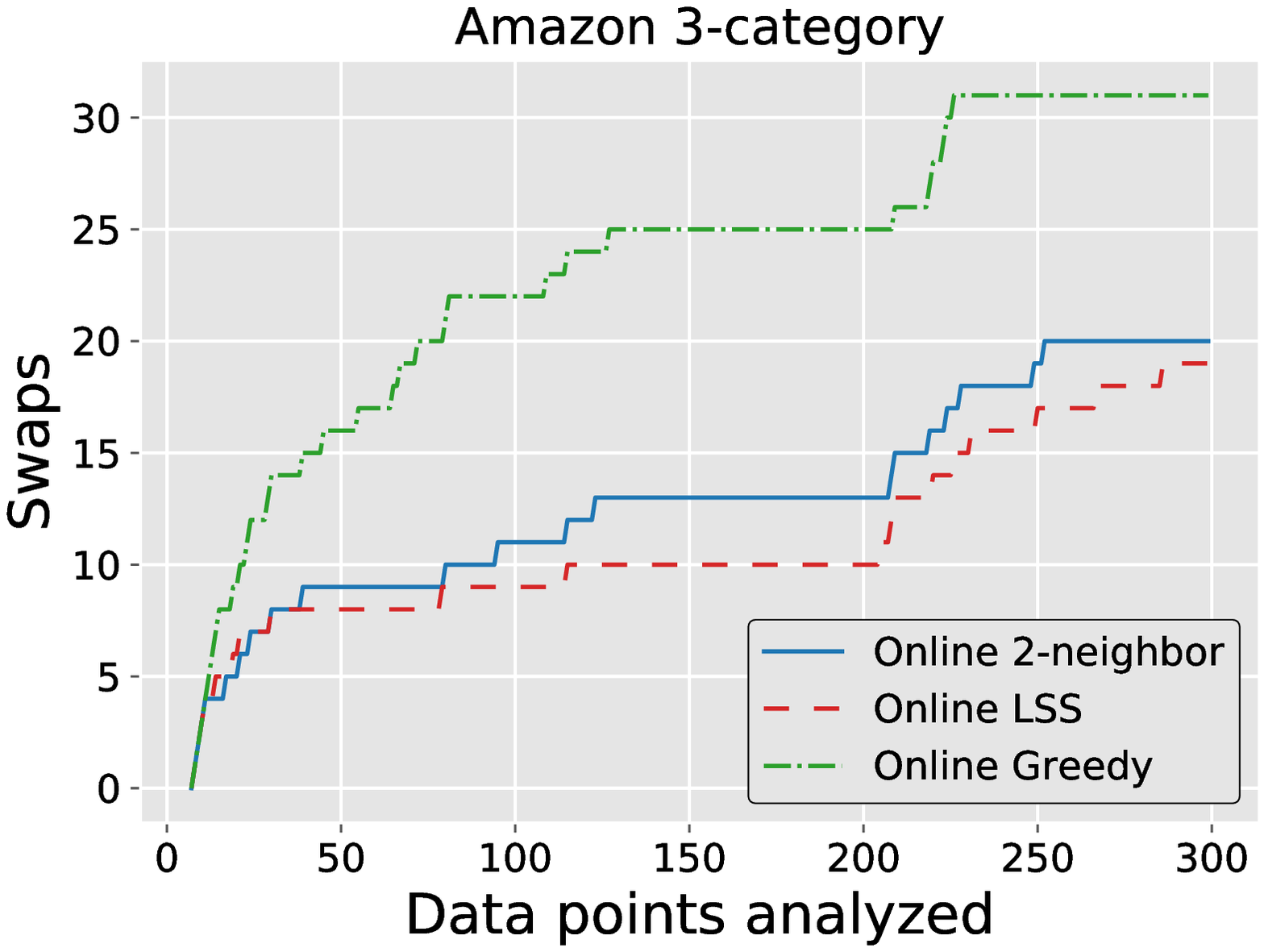}\hfill
\includegraphics[width=.33\textwidth]{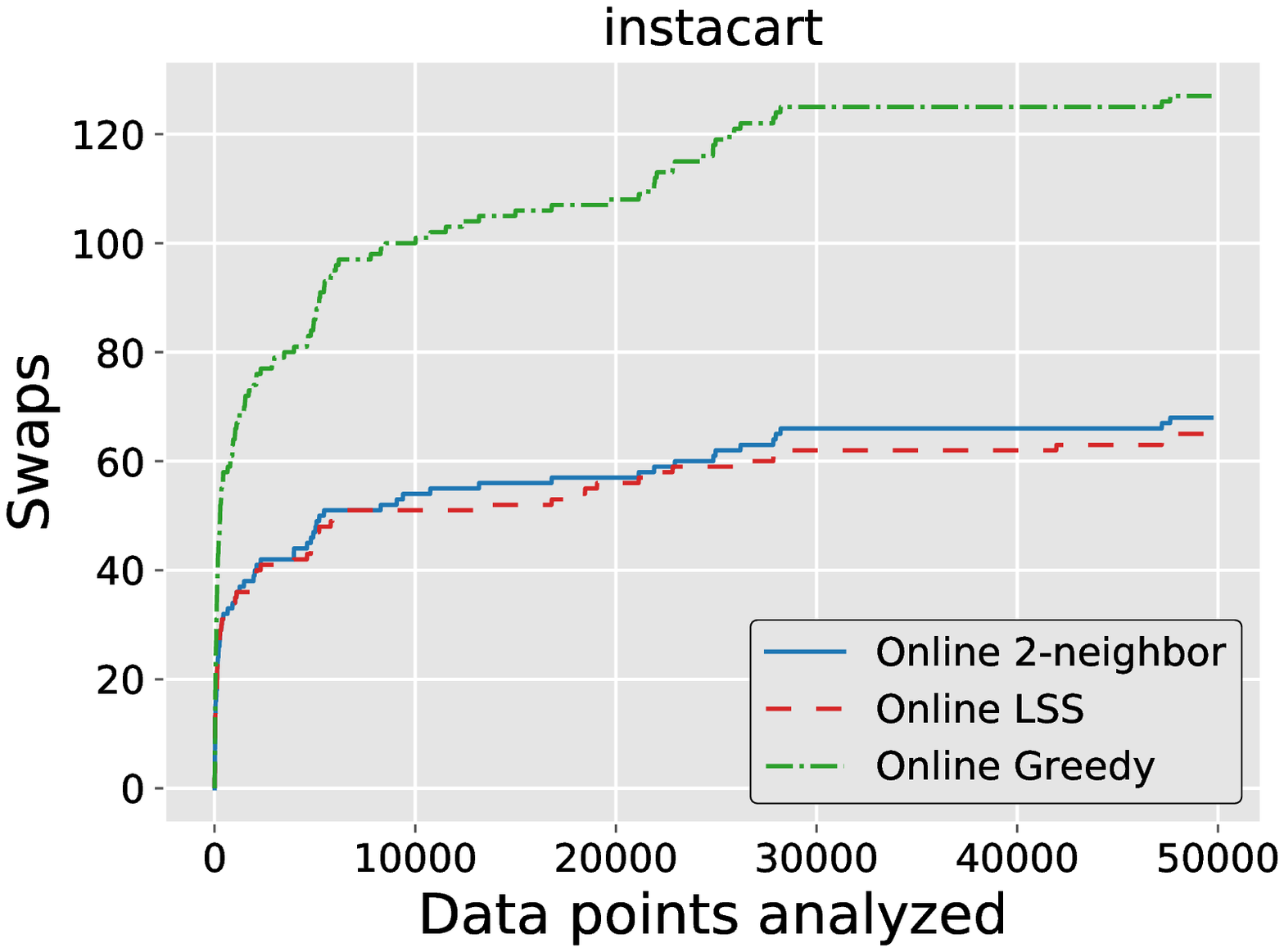}\hfill
\includegraphics[width=.33\textwidth]{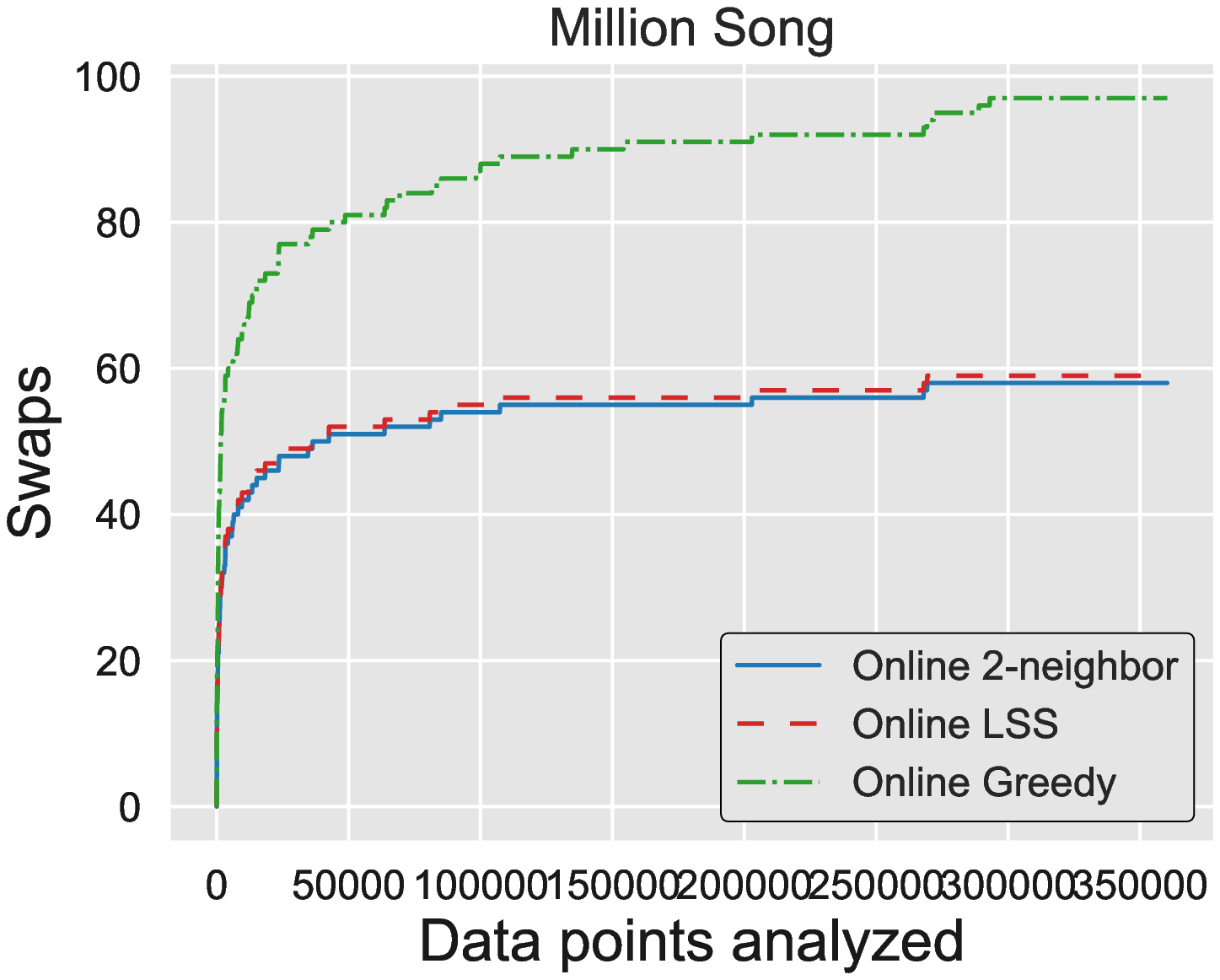}\hfill
\includegraphics[width=.33\textwidth]{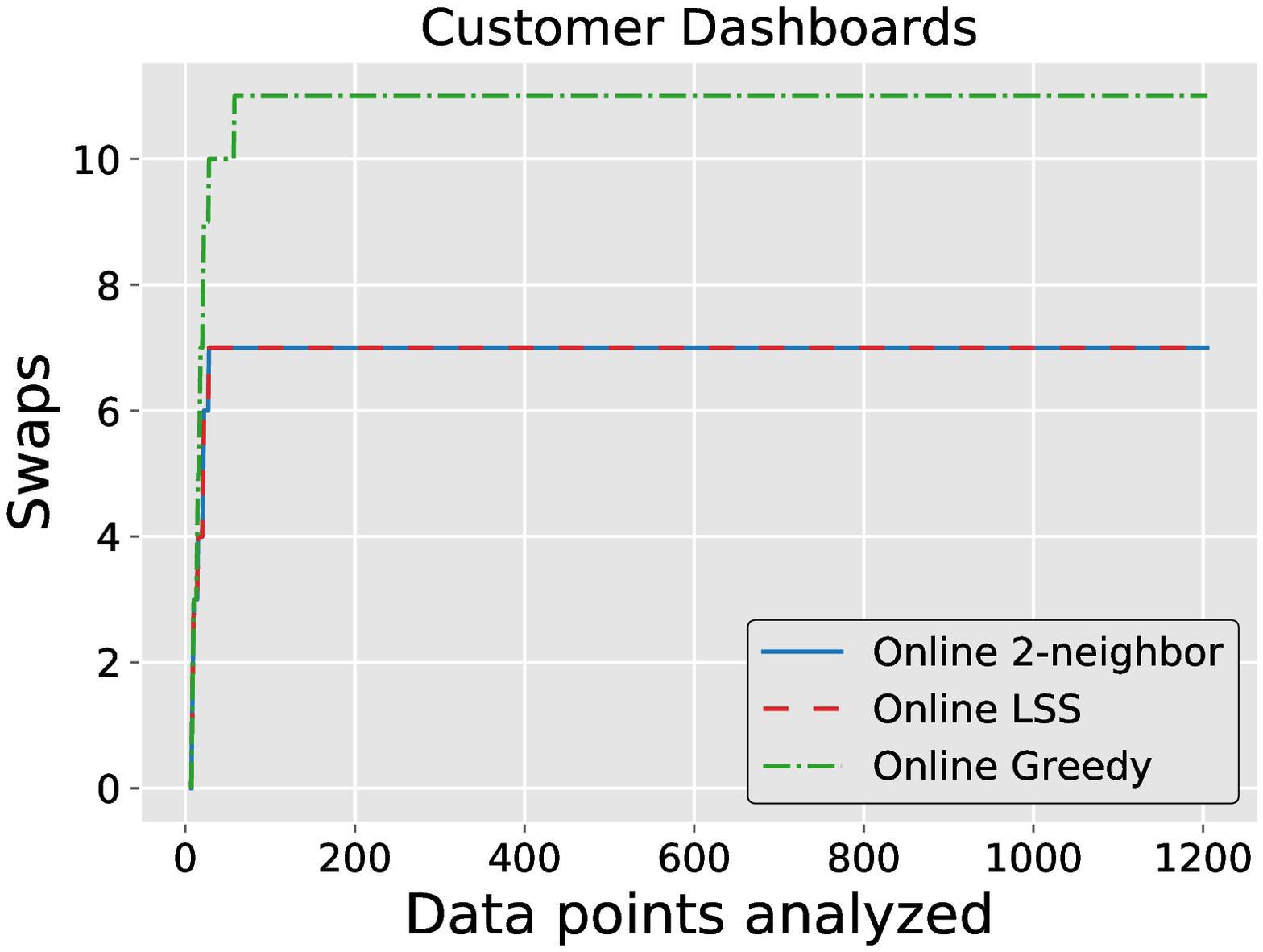}\hfill
\includegraphics[width=.33\textwidth]{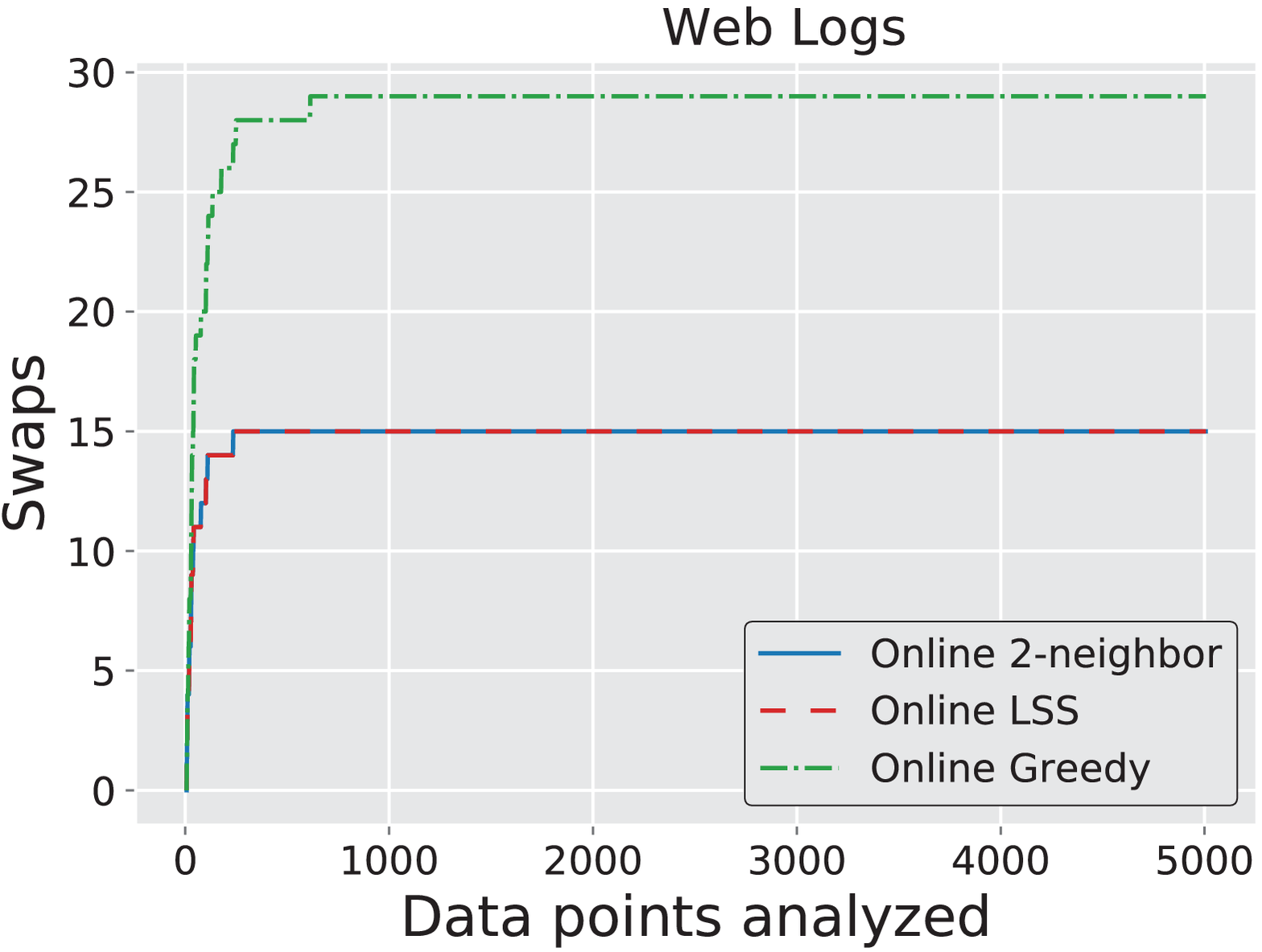}\hfill
\includegraphics[width=.33\textwidth]{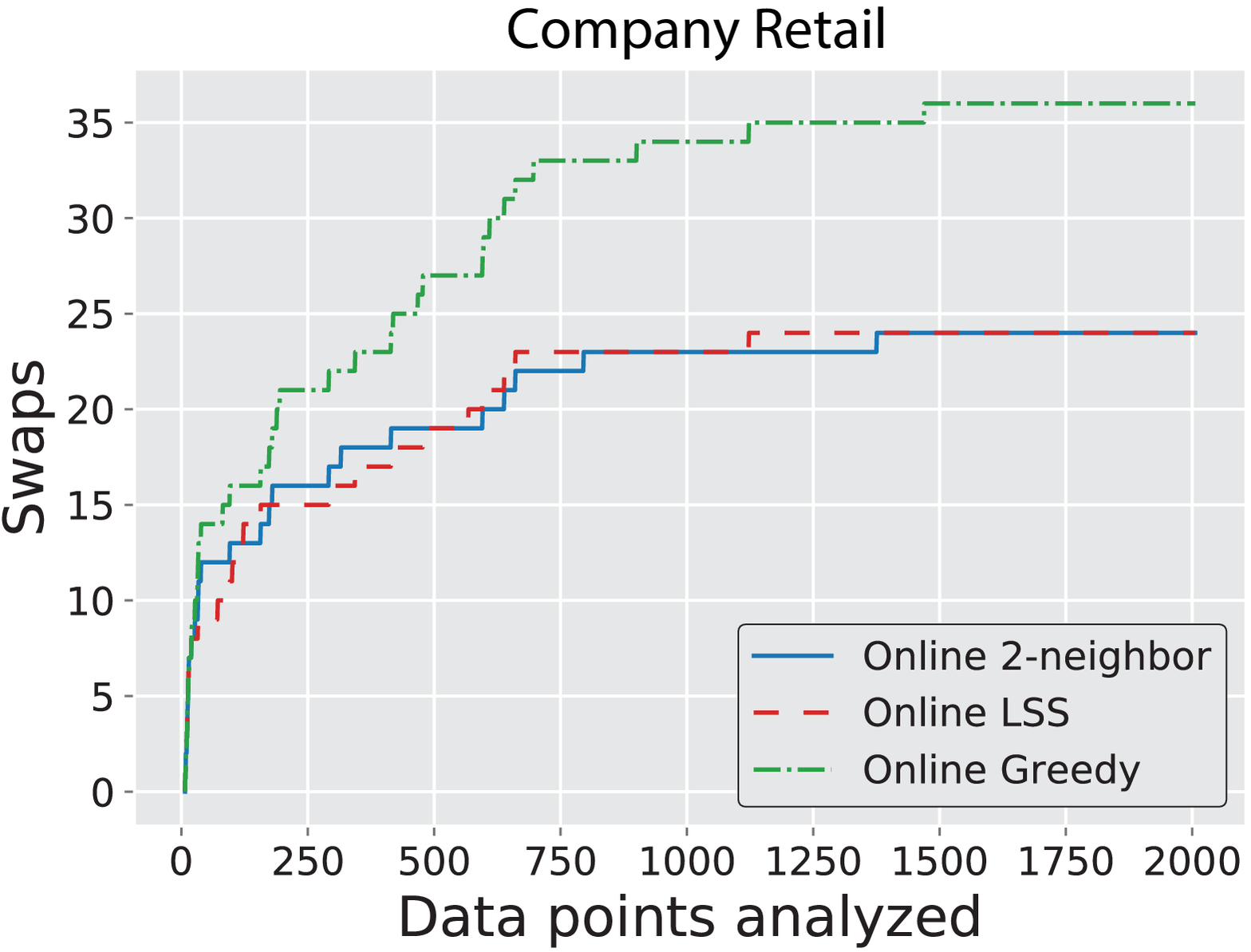}\hfill
\caption{
Results comparing the number of swaps of all our online algorithms. Online-Greedy does the most number of swaps and therefore has the least consistent solution set. On most datasets, the number of swaps by Online-2-neighbor is very similar to Online-LSS.
}
\label{fig:swaps}
\end{figure}

\subsection{Random Streams} \label{appendix:random-streams}
We also investigate our algorithms under the random stream paradigm. For this setting, we use some of the previous real-world datasets, and randomly permute the order in which the data appears in the stream. We do this 100 times and report the average of solution values in Figure~\ref{fig:random-streams} and the average of number of determinant computations and swaps in Figure~\ref{fig:random-streams-det-swaps}. We observe that Online-2-neighbor and Online-LSS give very similar performance in this regime and they are always better than Online-Greedy.

\begin{figure}[htp]
\centering
\subfigure{
\includegraphics[width=.33\linewidth]{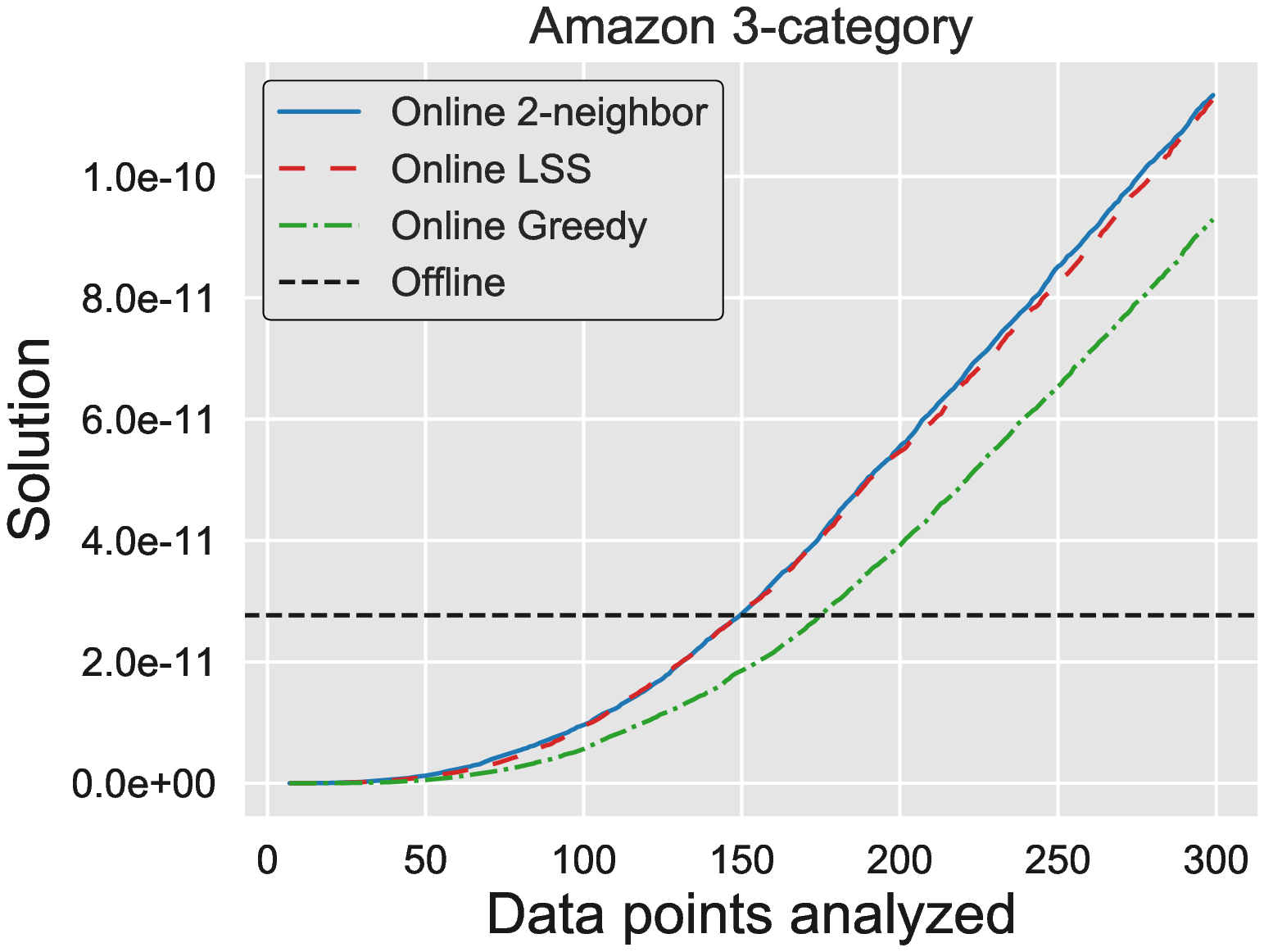}
}
\subfigure{
\includegraphics[width=.33\linewidth]{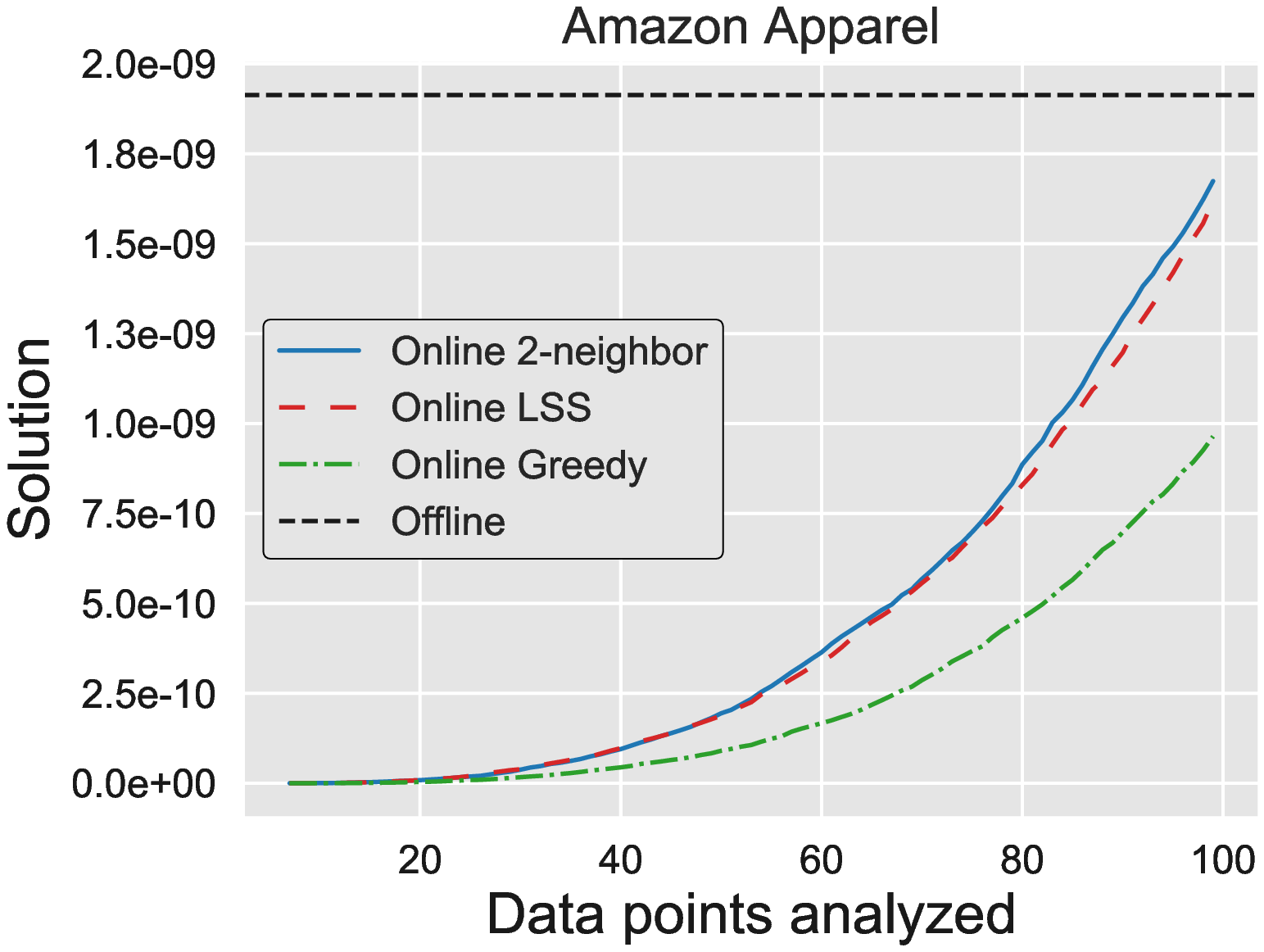}
}
\caption{
Solution quality as a function of the number of data points analyzed in the random stream paradigm. Online-2-neighbor and Online-LSS give very similar performance in this setting and they are always better than Online-Greedy.
}
\label{fig:random-streams}
\end{figure}

\begin{figure}[htp]
\centering
\subfigure{\includegraphics[width=.33\linewidth]{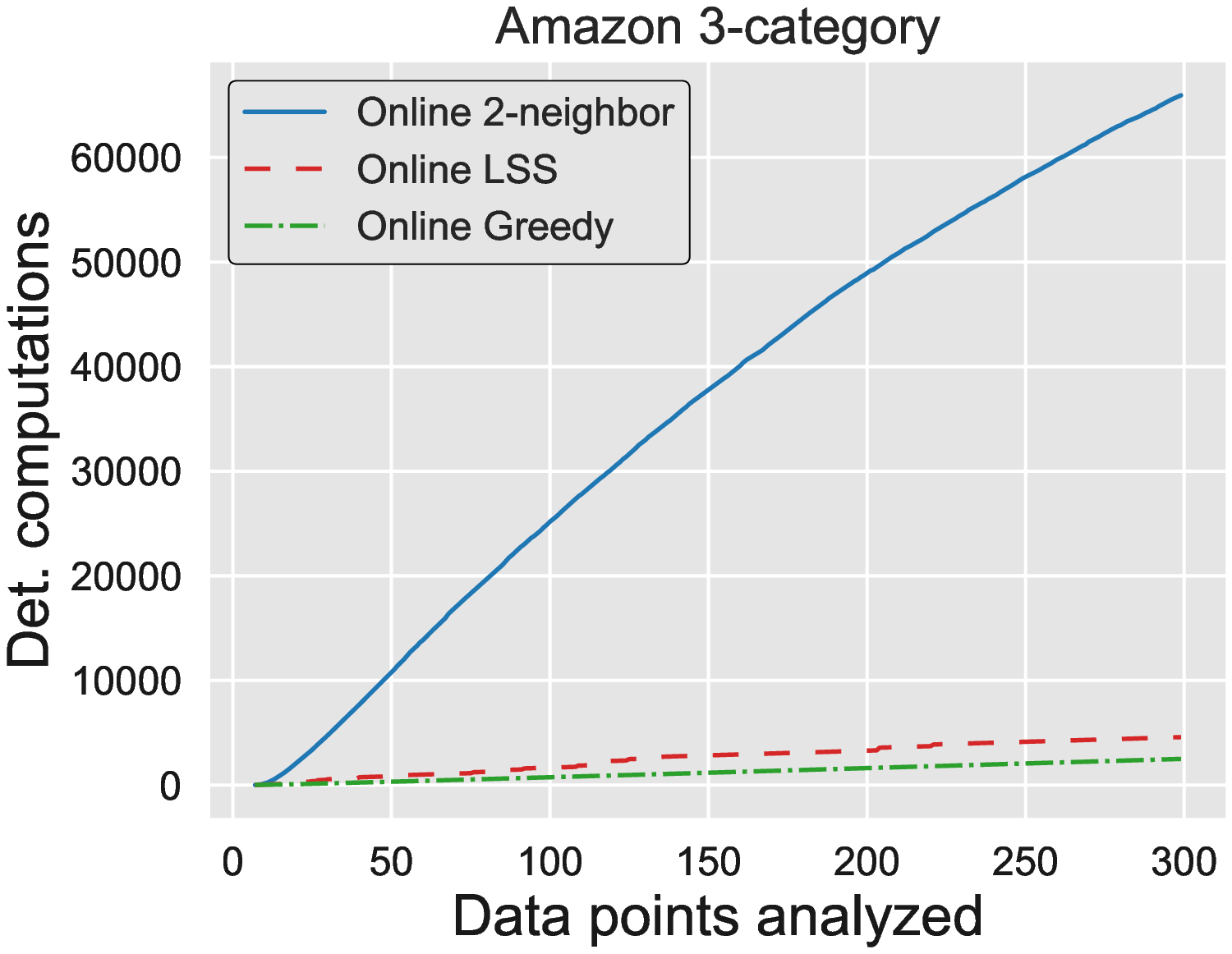}}
\subfigure{\includegraphics[width=.33\linewidth]{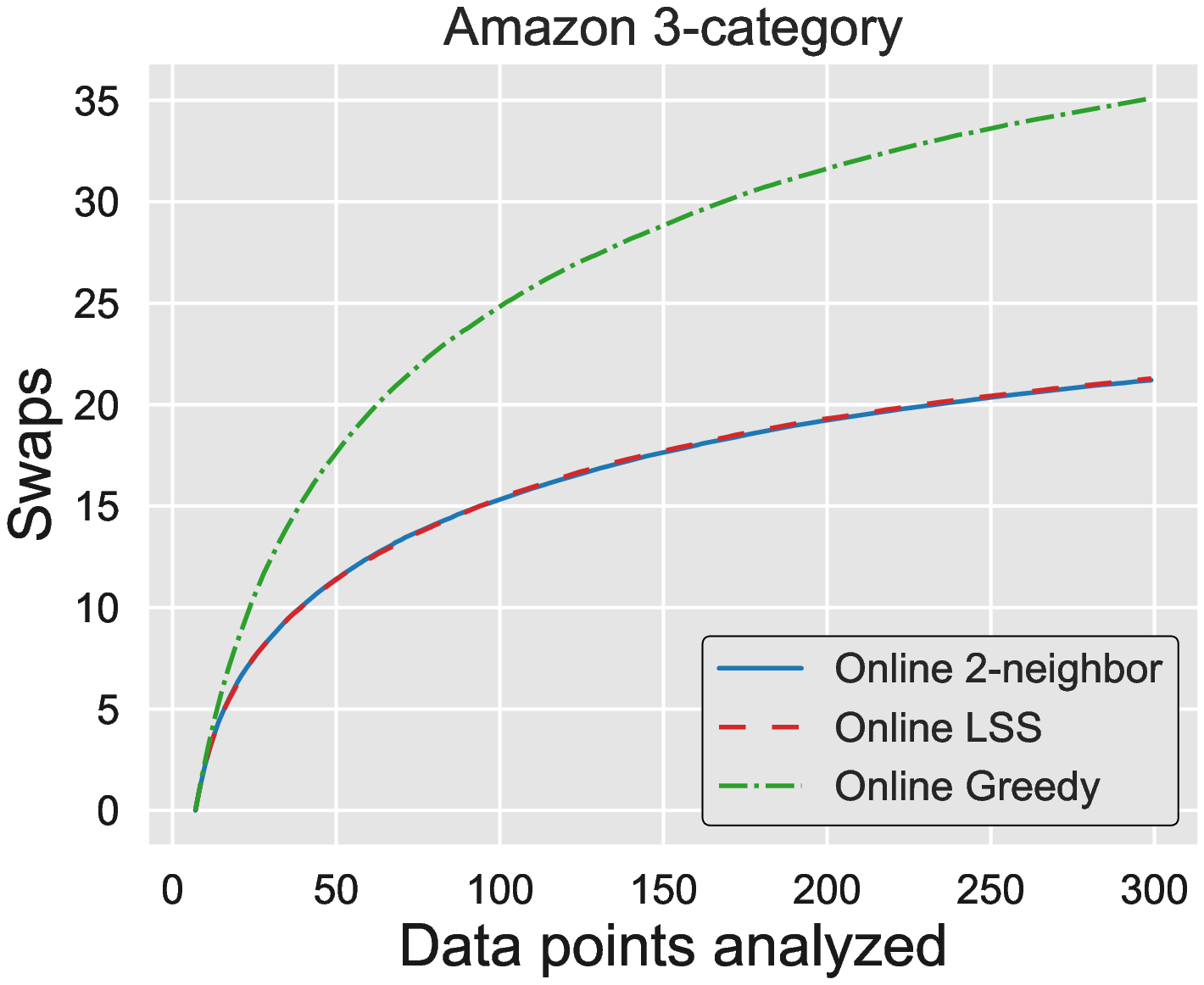}}
\caption{
Number of determinant computations and swaps as a function of the number of data points analyzed in the random stream setting. Online-2-Neighbor needs more determinant computations than Online-LSS but has very similar number of swaps in this setting.}
\label{fig:random-streams-det-swaps}
\end{figure}

\subsection{Ablation study varying $\epsilon$}\label{appendix:ablation-epsilon-lss}
To study the effect of $\epsilon$ in Online-LSS (Algorithm~\ref{alg:online-lss}), we vary $\epsilon \in \{0.05, 0.1, 0.3, 0.5\}$ and analyze the value of the obtained solutions, number of determinant computations, and number of swaps. We notice that, in general, the solution quality, number of determinant computations, and the number of swaps increase as $\epsilon$ decreases. Results are provided in Figure~\ref{fig:online-inference-NDPP-ablation-study}.

\begin{figure}[htp]
\centering
\includegraphics[width=.3\textwidth]{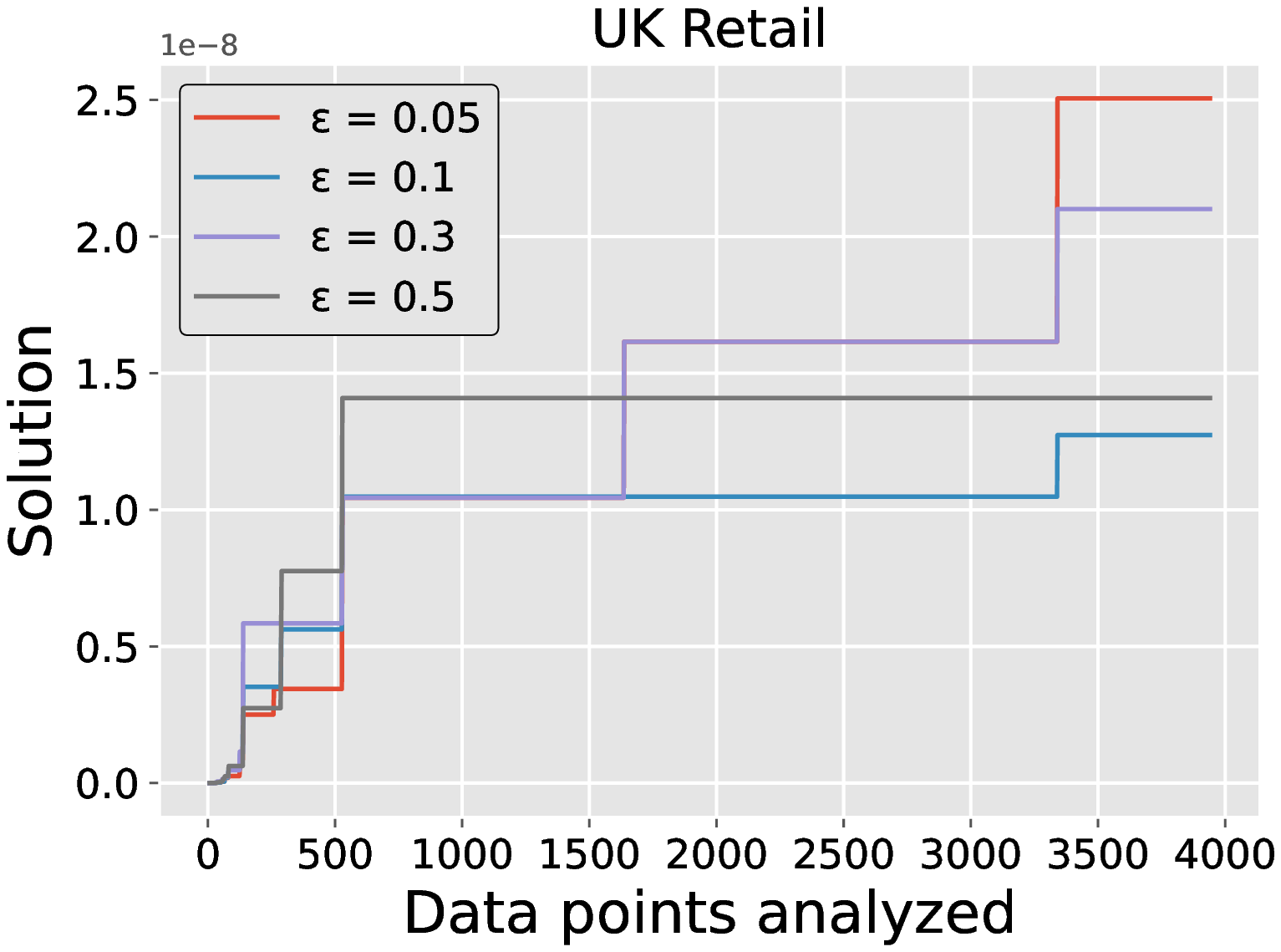}\hfill
\includegraphics[width=.3\textwidth]{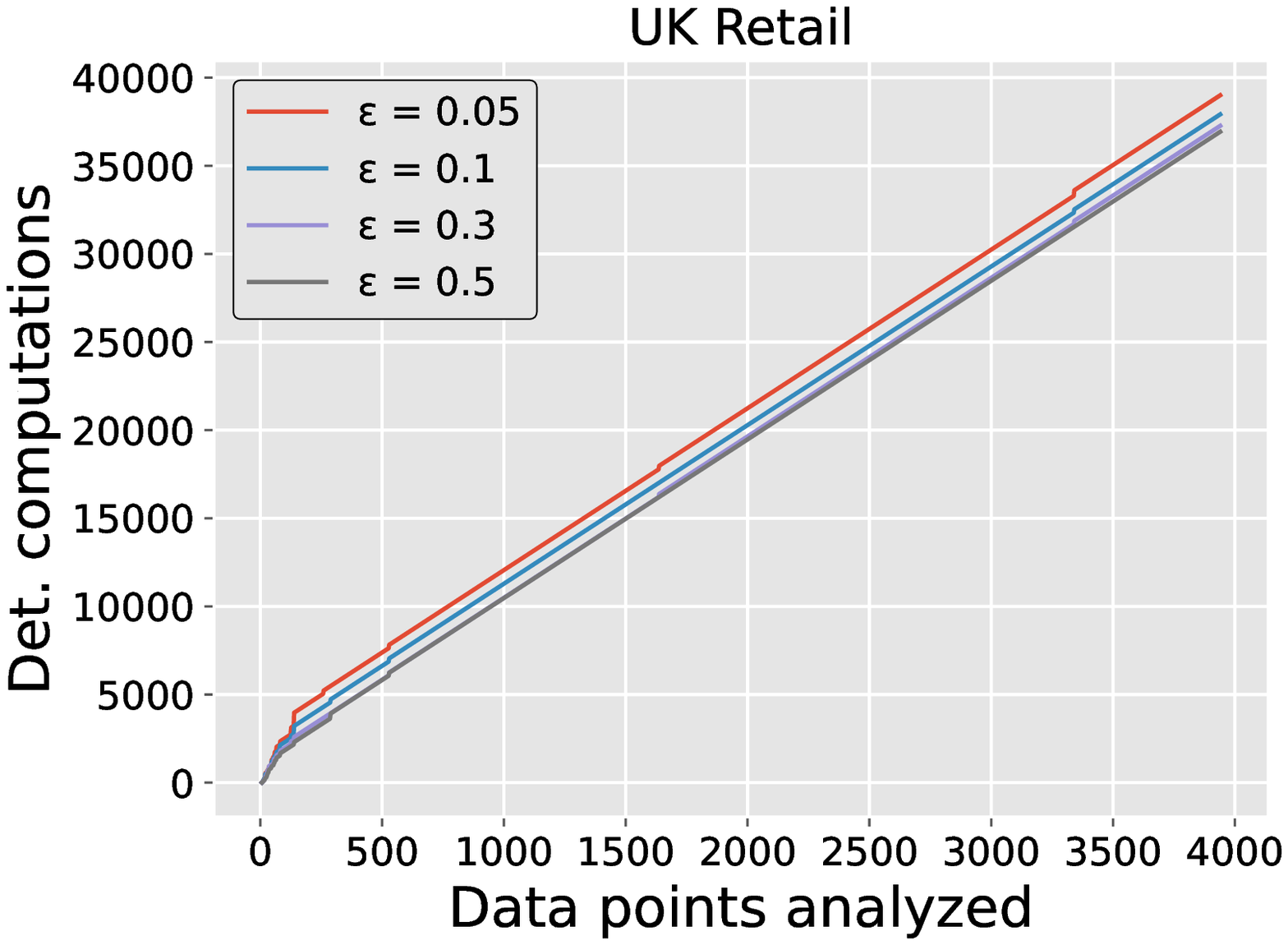}\hfill
\includegraphics[width=.3\textwidth]{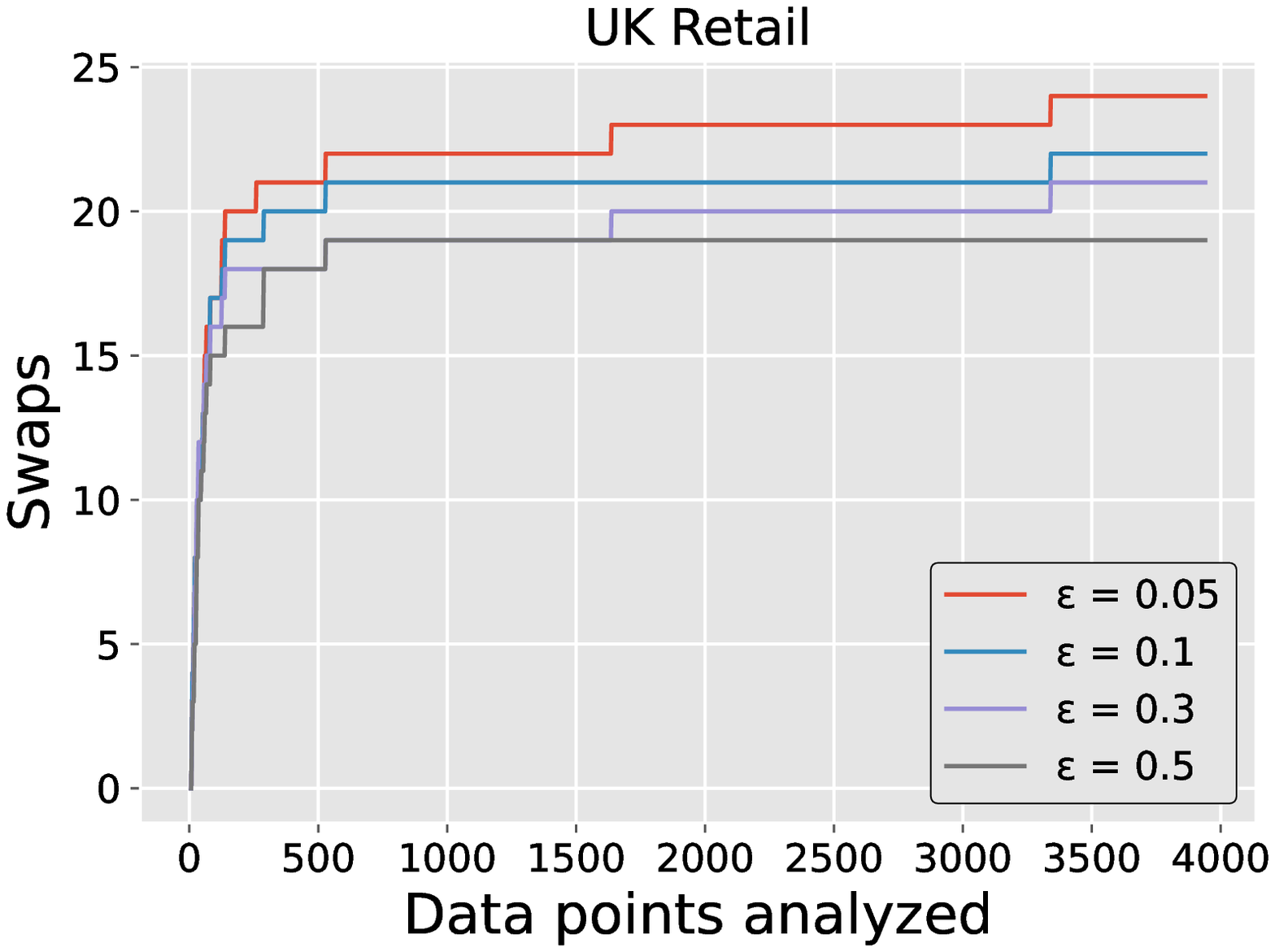}

\includegraphics[width=.3\textwidth]{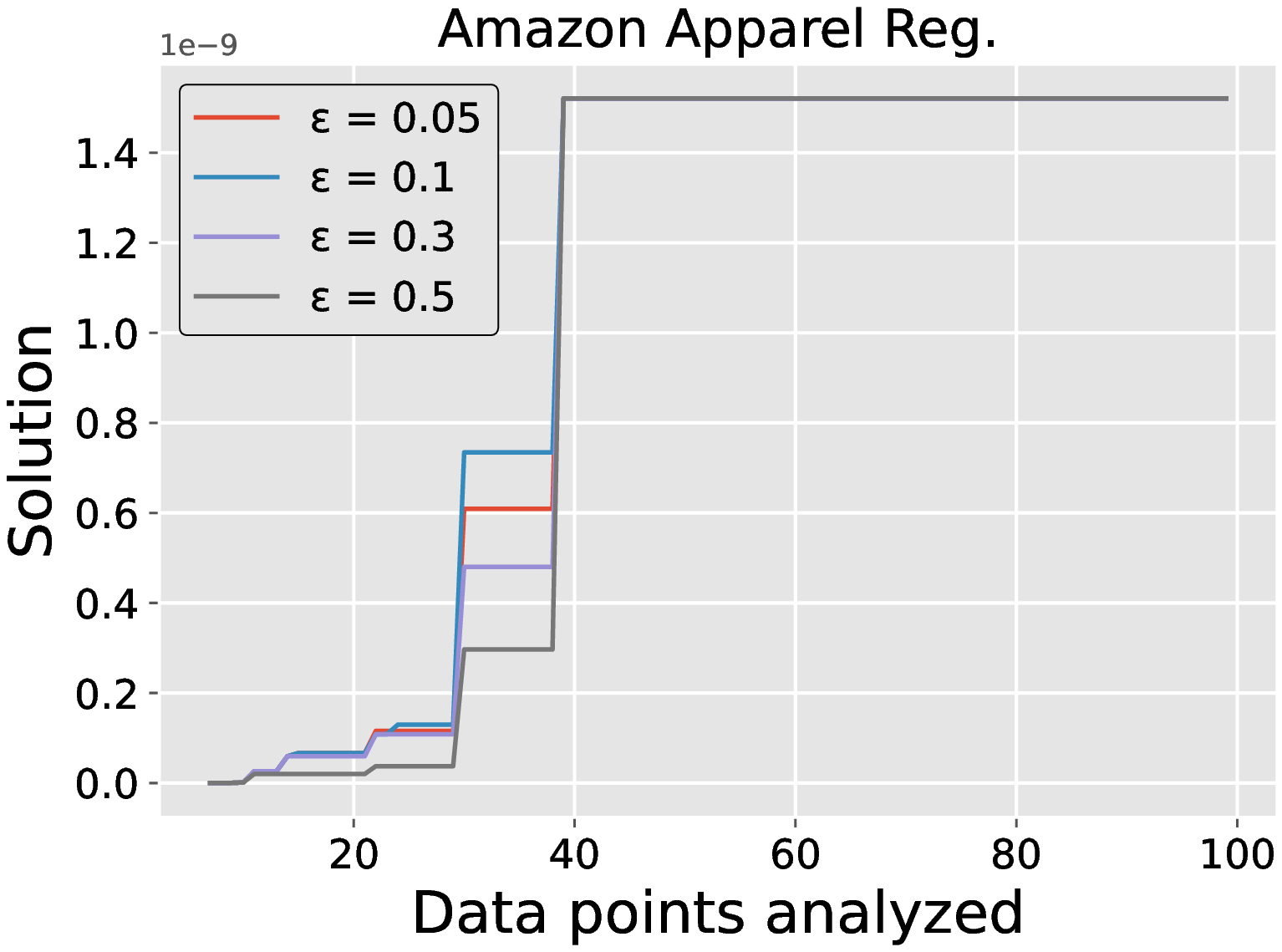}\hfill
\includegraphics[width=.3\textwidth]{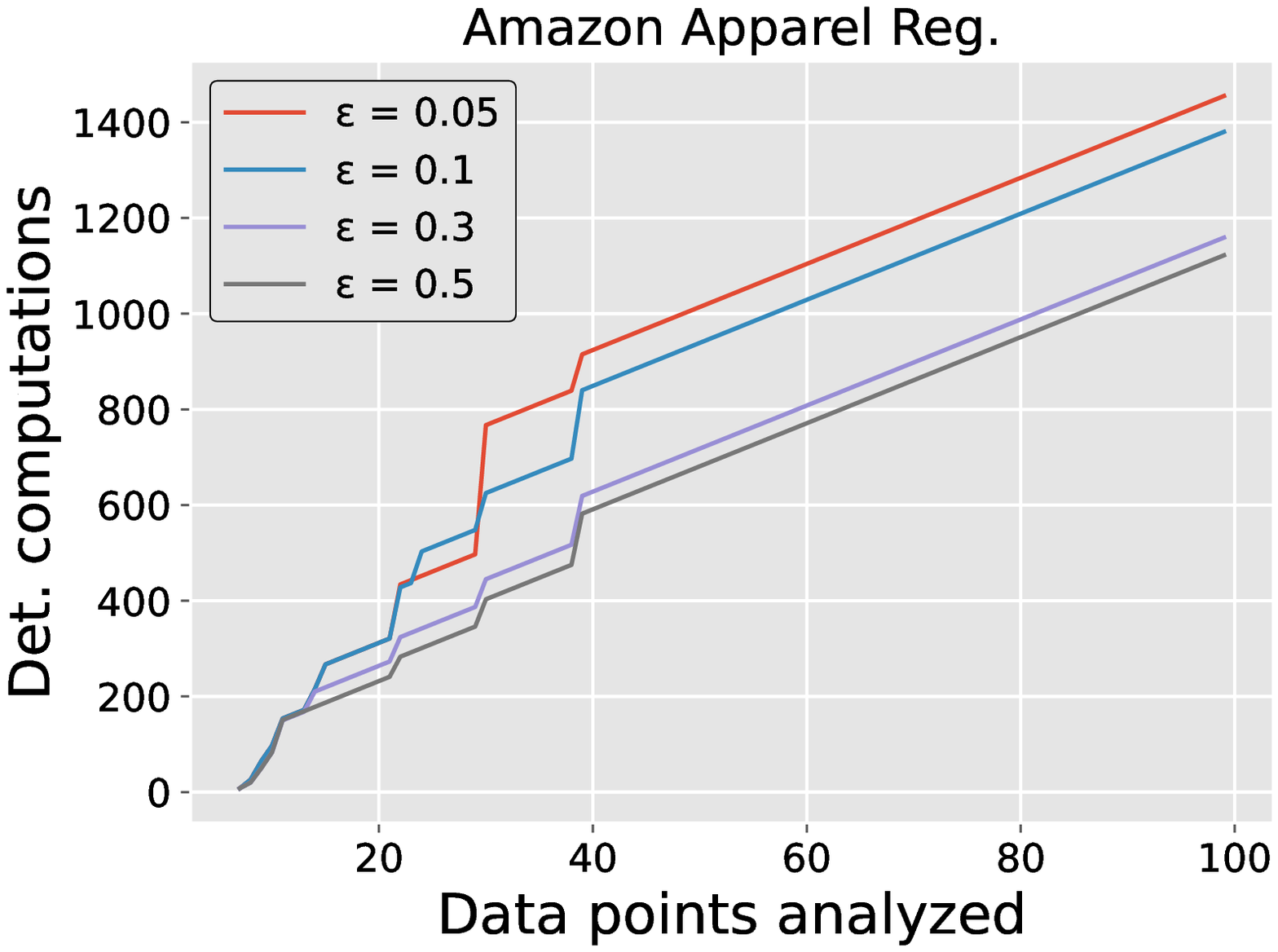}\hfill
\includegraphics[width=.3\textwidth]{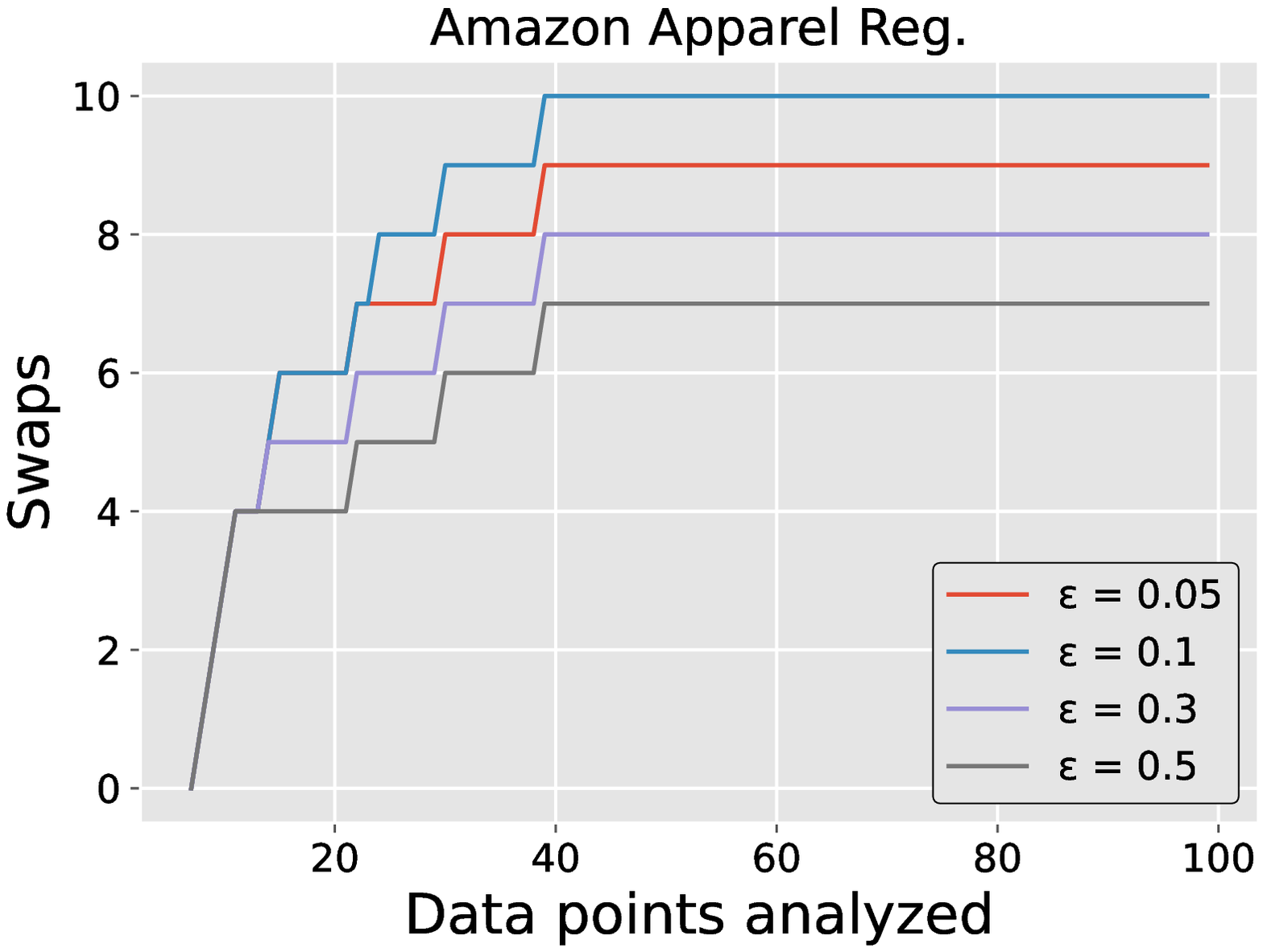}
\includegraphics[width=.3\textwidth]{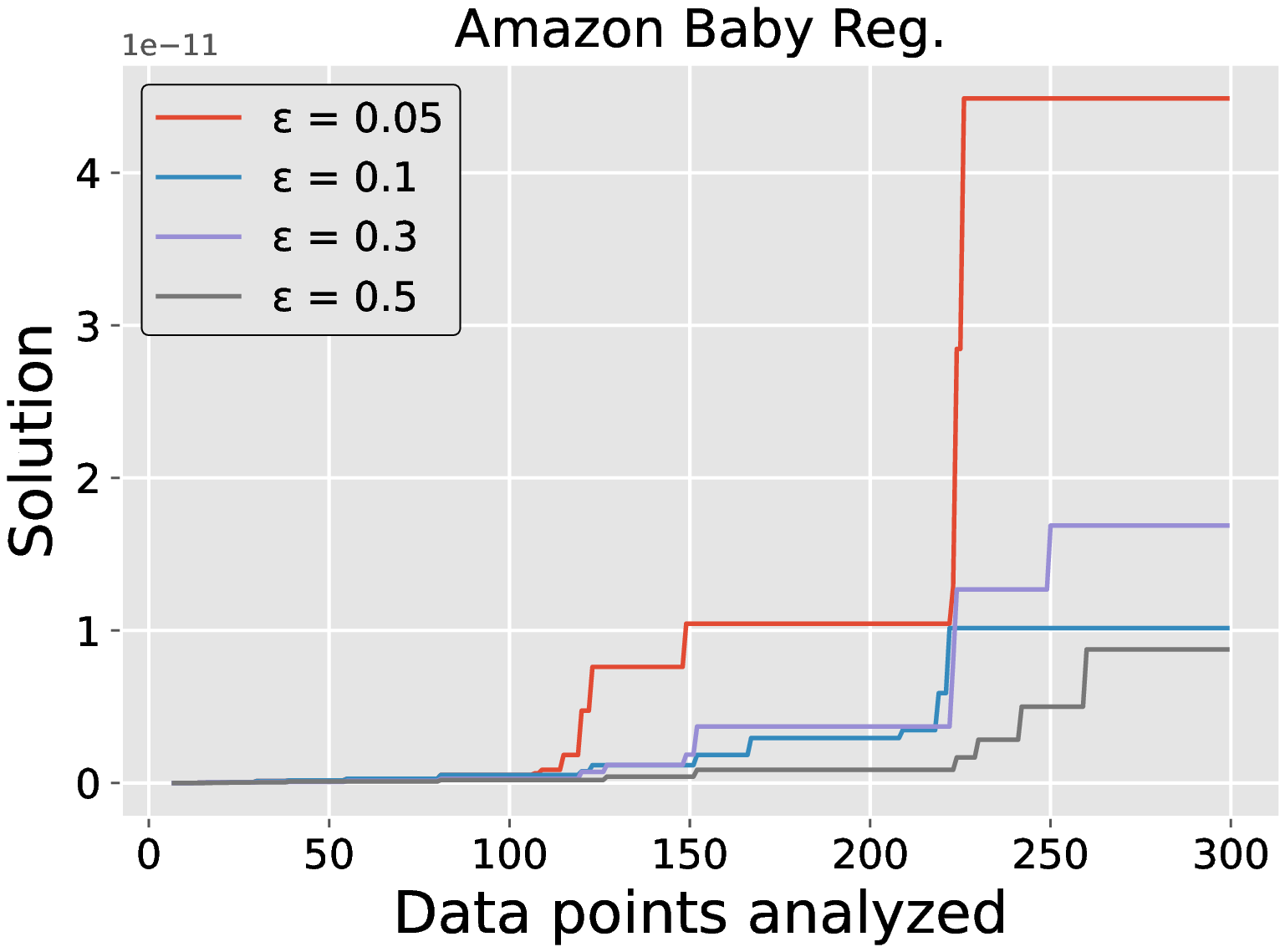}\hfill
\includegraphics[width=.3\textwidth]{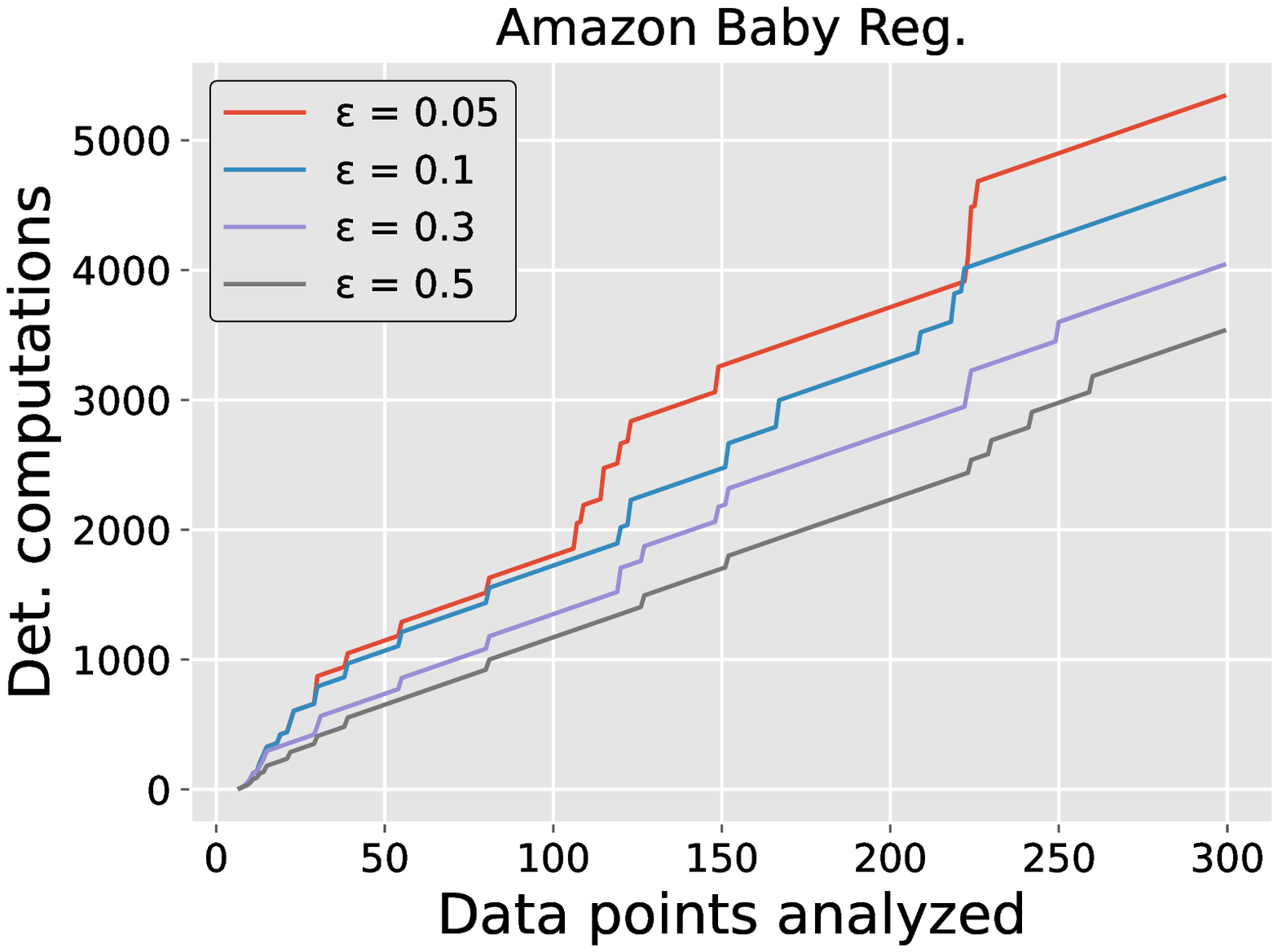}\hfill
\includegraphics[width=.3\textwidth]{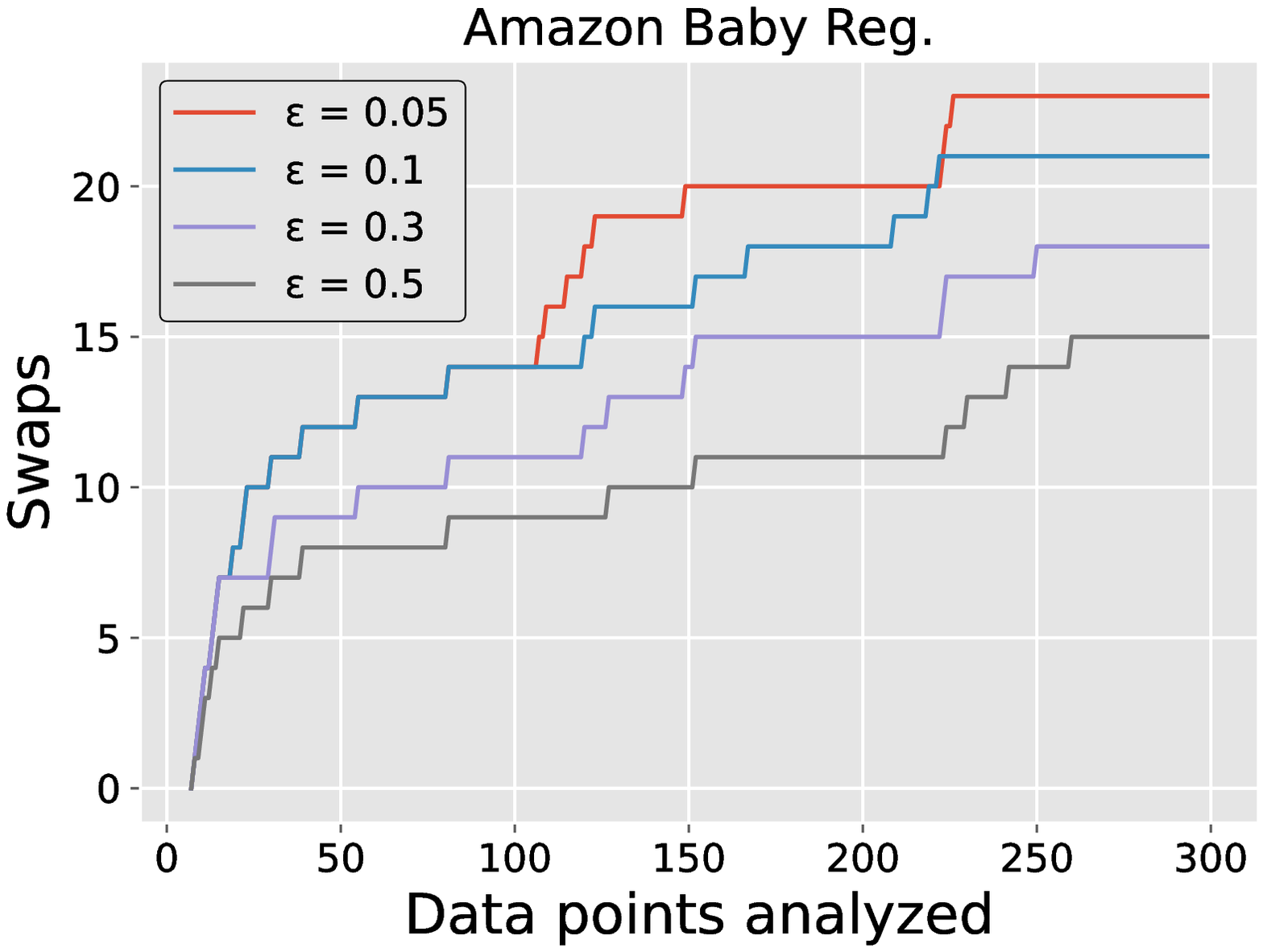}
\caption{Performance of Online-LSS varying $\epsilon$ for $k = 8$. Solution quality, number of determinant computations, and number of swaps seem to increase with decreasing $\epsilon$.
}
\label{fig:online-inference-NDPP-ablation-study}
\end{figure}

\newpage
\subsection{Ablation study varying set size $k$ and $\epsilon$}\label{appendix:ablation-k-epsilon}
In this set of experiments on the Amazon-Apparel dataset using Online-LSS, we study the choice of set size $k$ and $\epsilon$ on the solution quality, number of determinant computations, and number of swaps while fixing all other settings to be same as those used previously in Figure~\ref{fig:random-streams}. We can see that as $k$ increases, the solution value decreases across all values of $\epsilon$. This is in accordance with our intuition that the probability of larger sets should be smaller. In general, the number of determinant computations and swaps increases for increasing $k$ across different values of $\epsilon$.

\begin{figure}[htp]
\centering

\subfigure[Solution]{
\includegraphics[width=.31\linewidth]{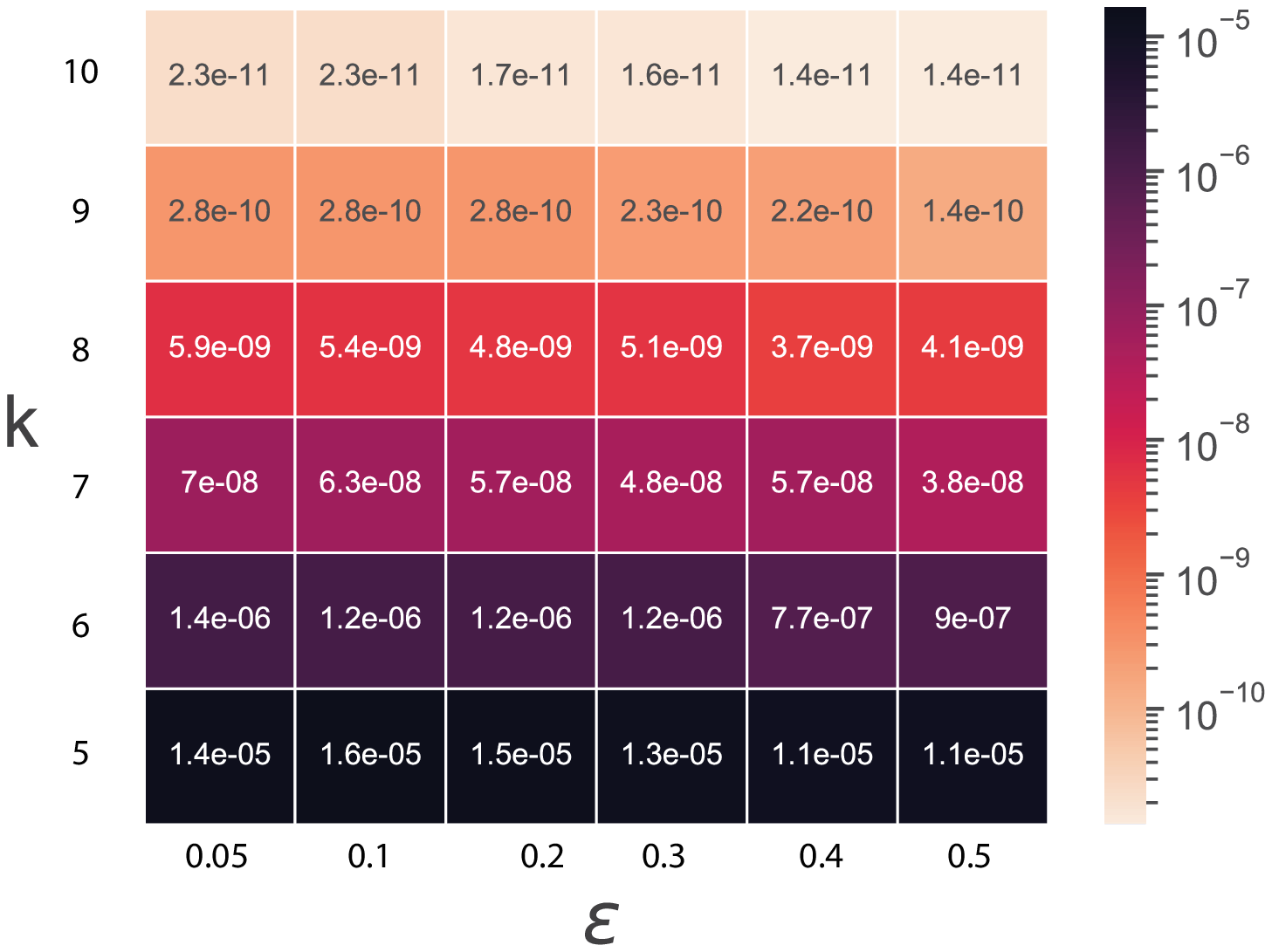}
}
\subfigure[\# $\det$ computations]{
\includegraphics[width=.31\linewidth]{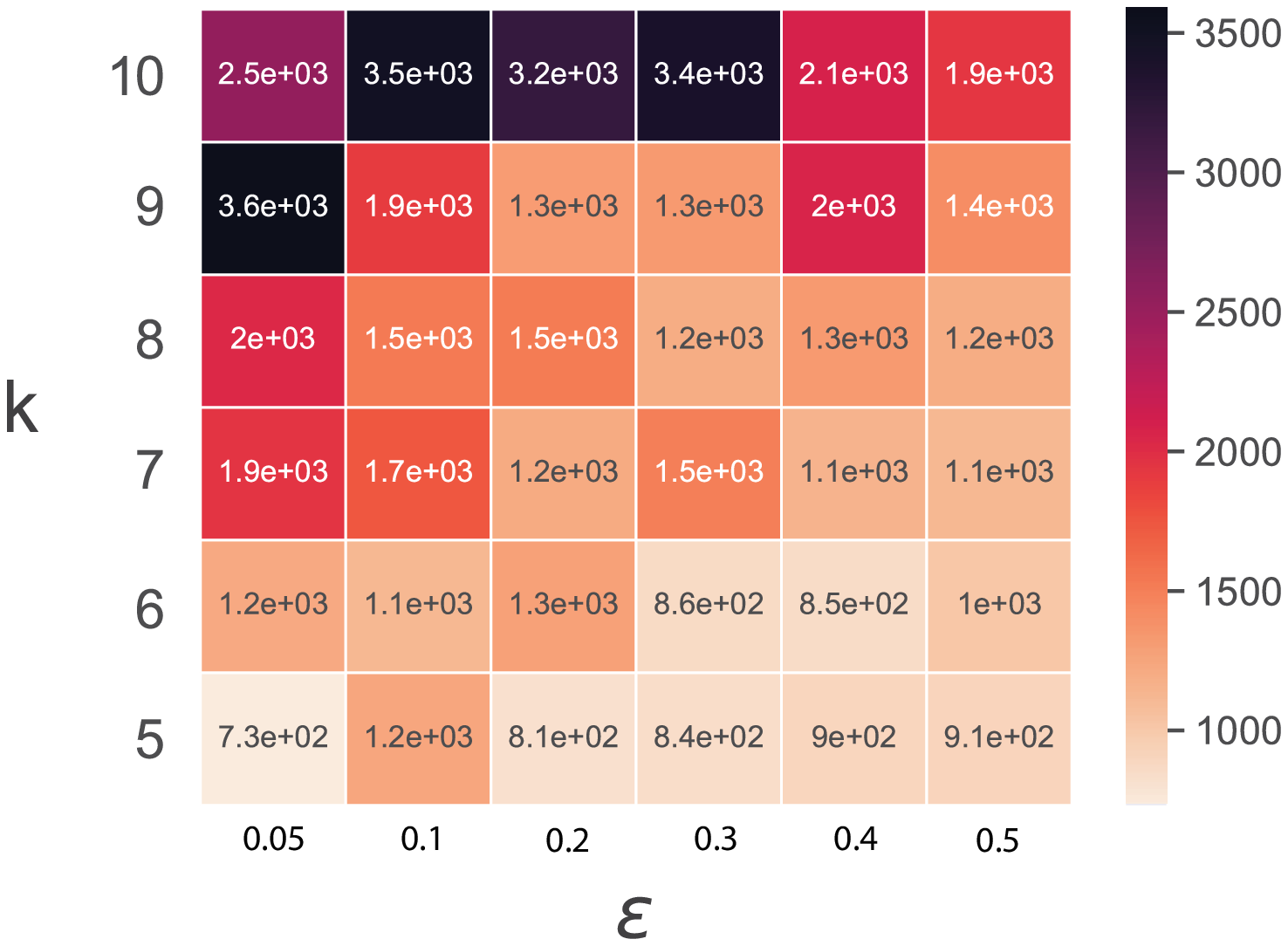}
}
\subfigure[\# swaps]{
\includegraphics[width=.31\linewidth]{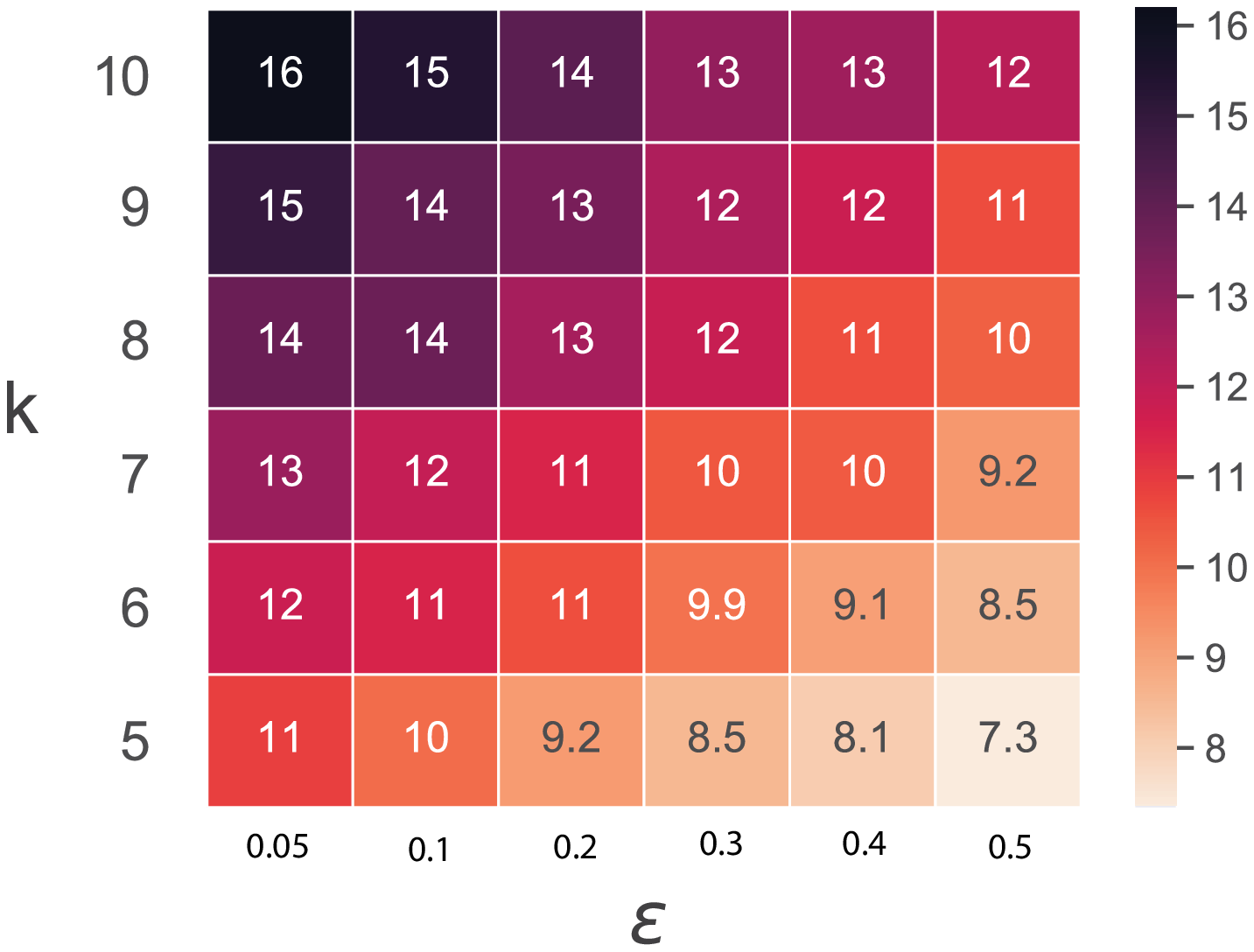}
}

\caption{
Comparing the effect of set size $k$ and $\epsilon$ on the solution,
number of determinant computations, and number of swaps for \textsc{Online-lss}.
}
\label{fig:random-streams-varying-set-size-k-and-epsilon}
\end{figure}

\vspace{-3mm}
\subsection{Online Learning Results}
Figure~\ref{fig:online-learning-results-embedding-comparison} shows that the heat maps of the kernel matrices learned by our online learning algorithm and the offline learning algorithm are quite similar. We use the Amazon Apparel dataset for this set of experiments.
\vspace{-5mm}
\begin{figure}[htp]
\centering
\subfigure[$d=10$]{
\includegraphics[width=.40\linewidth]{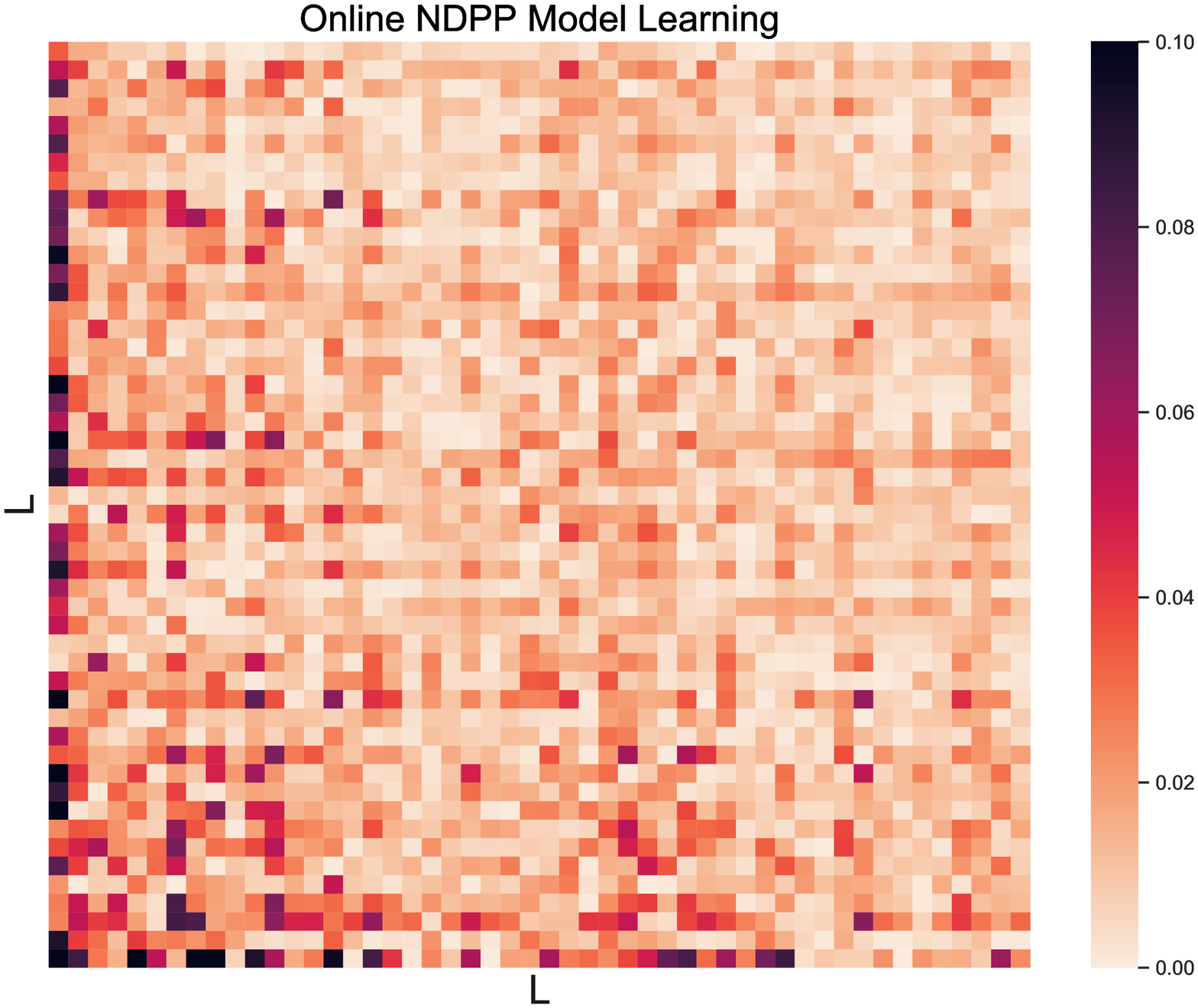}
}
\subfigure[$d=30$]{
\includegraphics[width=.40\linewidth]{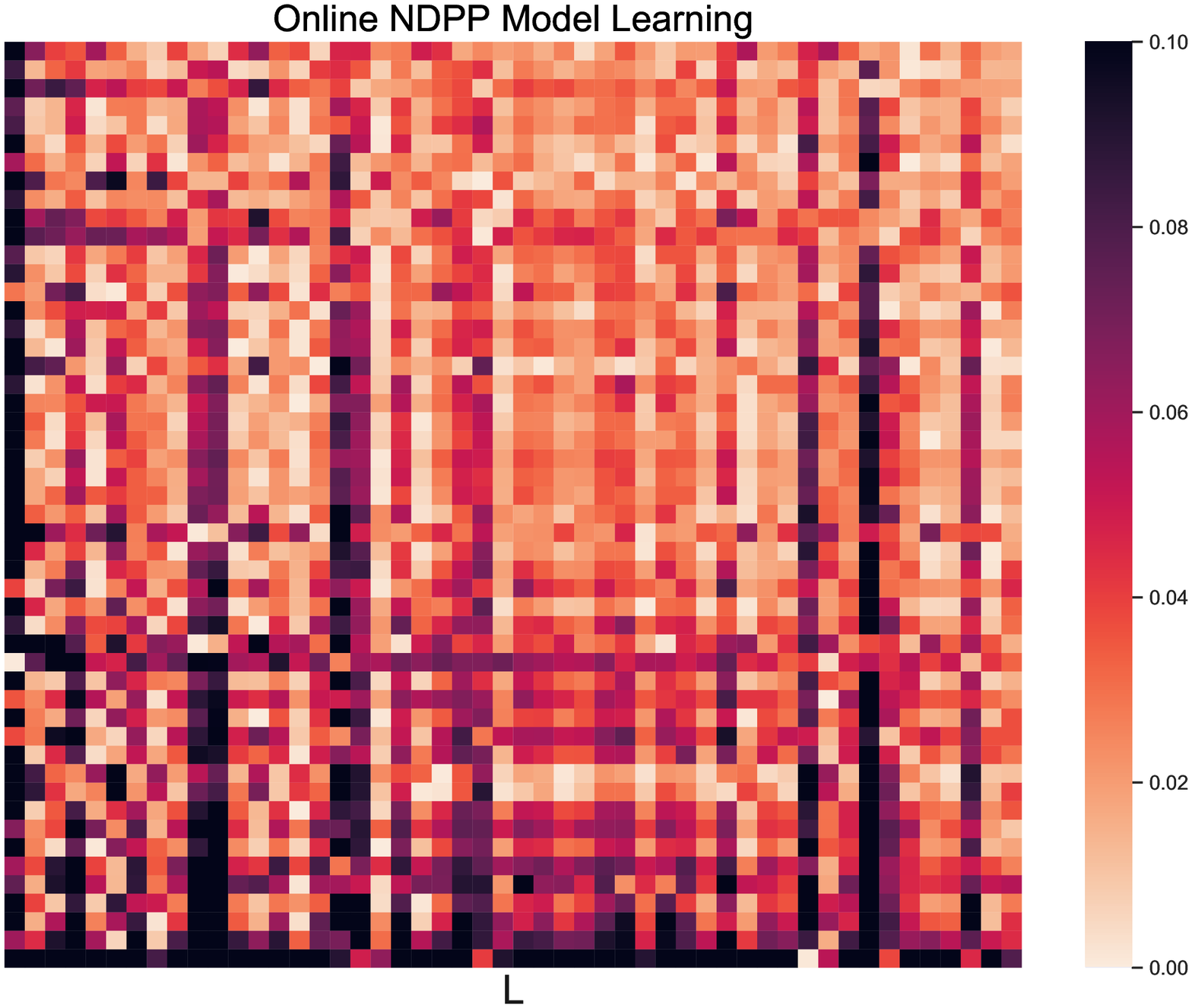}
}

\subfigure[$d=10$]{
\includegraphics[width=.40\linewidth]{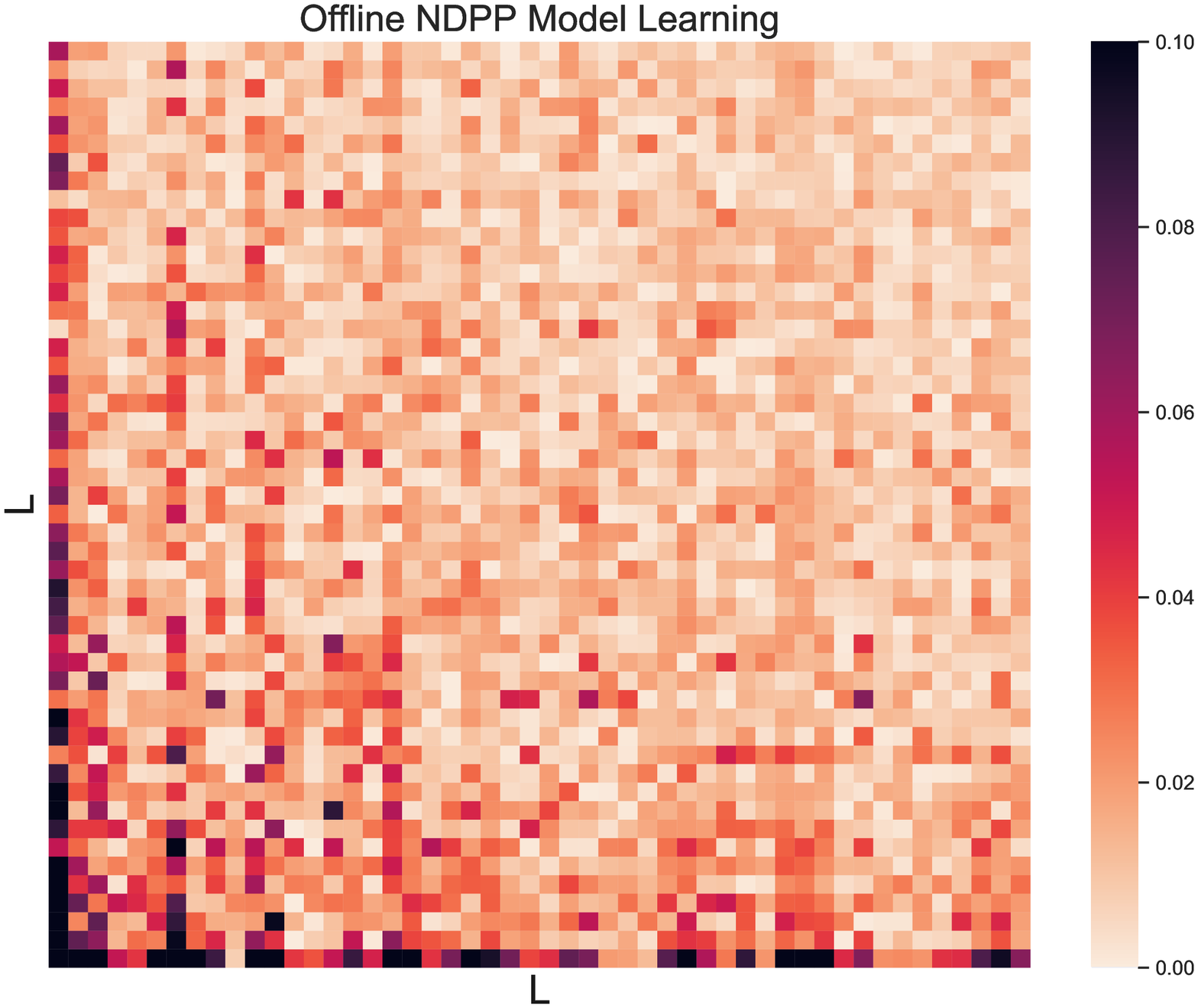}
}
\subfigure[$d=30$]{
\includegraphics[width=.40\linewidth]{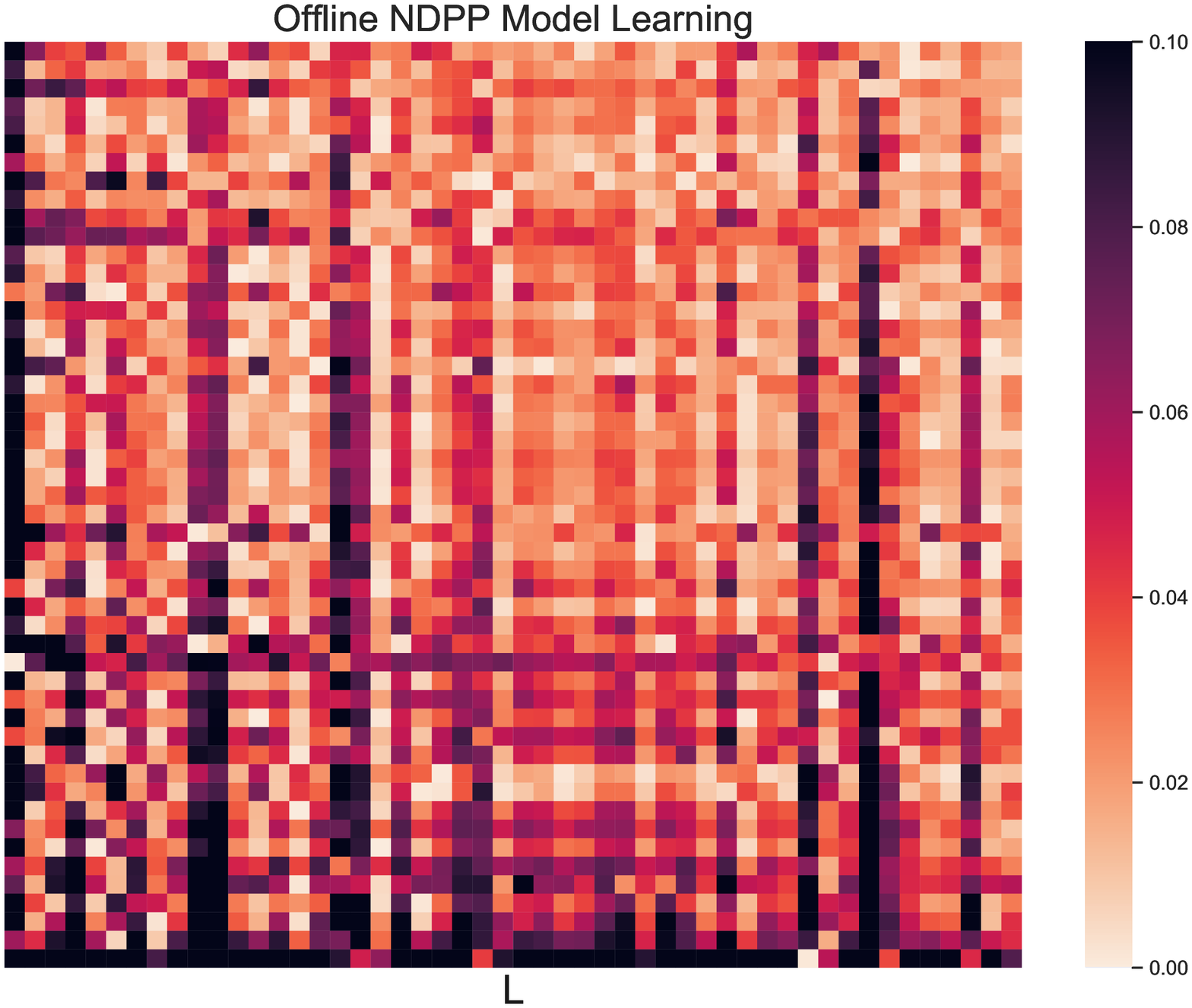}
}

\caption{
Heatmaps of the kernel matrices learned by our online algorithm (top) looks very similar to the ones learned offline using all data (bottom).
}
\label{fig:online-learning-results-embedding-comparison}
\end{figure}
\end{document}